\def\eqref#1{equation~\ref{#1}}
\def\1{\bm{1}}
\DeclareMathAlphabet{\mathsfit}{\encodingdefault}{\sfdefault}{m}{sl}
\SetMathAlphabet{\mathsfit}{bold}{\encodingdefault}{\sfdefault}{bx}{n}
\def\gA{{\mathcal{A}}}
\def\gF{{\mathcal{F}}}
\def\gM{{\mathcal{M}}}
\def\gN{{\mathcal{N}}}
\newcommand{\RMABPF}{\textsc{Pref-RMAB}\xspace}
\newcommand{\DOPL}{\textsc{DOPL}\xspace}
\newcommand{\I}{\sigma}
\newcommand{\bF}{\mathbf{F}}
\newcommand{\cT}{\mathcal{T}}
\newcommand{\bs}{\mathbf{s}}
\newcommand{\cA}{\mathcal{A}}
\newcommand{\cS}{\mathcal{S}}
\newcommand{\cN}{\mathcal{N}}
\newcommand{\cE}{\mathcal{E}}
\newtheorem{lemma}{Lemma}
\newtheorem{remark}{Remark}
\newtheorem{theorem}{Theorem}
\renewcommand{\eqref}[1]{(\ref{#1})}
\newtheorem{prop}{Proposition}
\newtheorem{definition}{Definition}
\newtheorem{exmp}{Example}
\title{\DOPL: Direct Online Preference Learning for \\Restless Bandits with Preference Feedback}
\author{Guojun~Xiong$^1$\thanks{This work was done when G. Xiong was a PhD student at Stony Brook University.}, ~Ujwal~Dinesha$^2$, Debajoy~Mukherjee$^2$, Jian~Li$^3$, Srinivas~Shakkottai$^2$ \\
  ${}^1$Harvard University, ${}^2$Texas A\&M University, ${}^3$Stony Brook University
  }
\begin{document}

\maketitle

\begin{abstract}
Restless multi-armed bandits (RMAB) has been widely used to model constrained sequential decision making problems, where the state of each restless arm evolves according to a Markov chain and each state transition generates a scalar reward. However, the success of RMAB crucially relies on the availability and quality of reward signals. Unfortunately, specifying an exact reward function in practice can be challenging and even infeasible. In this paper, we introduce \RMABPF,  a new RMAB model in the presence of \textit{preference} signals, where the decision maker only observes pairwise preference feedback rather than scalar reward from the activated arms at each decision epoch. Preference feedback, however, arguably contains less information than the scalar reward, which makes \RMABPF seemingly more difficult. To address this challenge, we present a direct online preference learning (\DOPL) algorithm for \RMABPF to efficiently explore the unknown environments, adaptively collect preference data in an online manner, and directly leverage the preference feedback for decision-makings. We prove that \DOPL yields a sublinear regret. To our best knowledge, this is the first algorithm to ensure $\tilde{\mathcal{O}}(\sqrt{T\ln T})$ regret for RMAB with preference feedback. Experimental results further demonstrate the effectiveness of \DOPL.

\end{abstract}

\section{Introduction}\label{sec:intro}

The restless multi-armed bandits (RMAB) problem \citep{whittle1988restless} is a time slotted game between a decision maker (DM) and the environment.  In the standard RMAB model, each ``restless'' arm is described by a Markov decision process (MDP) \citep{puterman1994markov}, and evolves stochastically according to two different transition functions, depending on whether the arm is activated or not. Scalar rewards are generated with each transition. The goal of the DM is to maximize the total expected reward under an instantaneous constraint that at most $B$ out of $N$ arms can be activated at any decision epoch. {Although RMAB has been  widely used to study constrained sequential decision making problems  \citep{meshram2016optimal,bertsimas2000restless,yu2018deadline,borkar2017index,killian2021beyond,mate2021risk,mate2020collapsing,xiong2023reinforcement}, it is notoriously intractable due to the explosion of state space \citep{papadimitriou1994complexity}. To address this issue, there are extensive studies on developing low-complexity index policies \citep{whittle1988restless,larranaga2014index,bagheri2015restless,verloop2016asymptotically, zhang2021restless} for offline RMAB and reinforcement learning (RL) algorithms \citep{tekin2011adaptive,cohen2014restless,fu2019towards,jung2019regret,wang2020restless,nakhleh2021neurwin,xiong2022Nips,avrachenkov2022whittle,xiong2023finite,wang2024online,chen2024gino} for online RMAB. }

However, the success of these index polices and RL algorithms for the standard RMAB crucially relies on the availability and quality of reward signals or reward feedback. Unfortunately, specifying an exact reward function in practice can be challenging, and obtaining the precise reward feedback may be even infeasible \citep{wirth2017survey,casper2023open}.  Instead, it is often much easier, faster and less expensive to collect preference feedback (e.g., provided by humans). This is especially pronounced for existing RMAB applications in online advertisement \citep{meshram2016optimal} and healthcare \citep{killian2021beyond,mate2021risk,mate2020collapsing}. For example, to understand the liking for a given pair of products $(A,B)$ in online ads, it is much easier for users to answer preference-based queries like: \textit{``Do you prefer product $A$ over $B$?''}, rather than the absolute counterpart, \textit{``How much do you score products $A$ and $B$ in a scale of [0-10]?''}

Learning from preference feedback has gained much popularity in the machine learning community, from the famous dueling bandit problem \citep{yue2012k} to the more recent reinforcement learning from human feedback (RLHF) \citep{christiano2017deep,ziegler2019fine}, e.g., for the large language model (LLM) alignment \citep{ouyang2022training}. Motivated by this and to address the limitation of the standard RMAB model, we propose a new \textit{RMAB model in the presence of preference feedback}, dubbed as \RMABPF. Real-world applications that can be modeled as \RMABPF problems are presented in Appendix~\ref{sec:examples-app}.
Like the standard RMAB, the DM in \RMABPF still activates a subset of $B$ arms at each decision epoch. Unlike the standard RMAB, the DM in \RMABPF cannot directly observe a scalar reward from each activated arm, and instead can only observe a pairwise comparison feedback from the activated subset of arms. Preference feedback, however, arguably contains less information than the scalar reward, which makes \RMABPF seemingly more difficult. A thorough understanding of how the preference feedback influences the DM in \RMABPF remains elusive. Consequently, this prompts us to pose the following question: 
\vspace{-0.15in}
\begin{tcolorbox}[colback=white!5!white,colframe=white!75!white]
\textit{Can we design efficient RL algorithms for \RMABPF with sublinear regret guarantees? }
\end{tcolorbox}
\vspace{-0.15in}
In response to the this question, a straightforward method as inspired by RLHF is to learn a scalar reward function to represent the preference feedback of humans, and then apply existing RL algorithms for RMAB with this estimated reward function to the \RMABPF problem. The downside of directly applying this RLHF method to \RMABPF is its complexity and insufficiency. The limited theoretical analysis of RLHF \citep{chen2022human,zhu2023principled,saha2023dueling,du2024exploration} typically assumes a linear reward function, i.e., there exists a known feature mapping to specify the feature vectors of state-action pairs. However, the feature mapping in practice is often unknown and fine-tuning its dimension can be resource-intensive and prone to error. In view of these issues, there is an innovative line of work that directly learns from preferences without explicit reward modeling, such as the popular direct preference optimization (DPO) \citep{rafailov2024direct} and among others \citep{zhao2023slic,azar2024general,ethayarajh2024kto,tang2024generalized}.  However, most of the latest RLHF and DPO based methods are considered offline and provided with a given and pre-collected human preference (and transition) dataset, leading to the problem of overoptimization \citep{xiong2023iterative}, while the DM in \RMABPF often interacts with unknown environments in an online manner to model constrained sequential decision making problems in aforementioned RMAB applications.

In this paper, we consider \textit{the online \RMABPF}, and show how to \textit{directly} leverage the preference feedback to design a provably efficient RL algorithm for \RMABPF without explicit reward modeling. {Specifically, we develop \textit{Direct Online Preference Learning} (\DOPL), an episodic RL algorithm that optimizes the same objective as the standard RMAB problem but can efficiently explore the unknown environment, adaptively collect preference data in an online manner, and directly leverage the preference feedback to make decisions for \RMABPF. }

\textit{First}, to handle unknown transitions of each arm, we construct confidence sets to guarantee the true ones lie in these sets with high probability (Section~\ref{sec:estimated-transition}). \textit{Second}, by assuming a Bradley-Terry (BT) model \citep{bradley1952rank}, \DOPL may use the results of comparison between any pair of activated arms to update the estimate of the underlying unknown preference model. However, the success of this empirical preference estimation requires \DOPL to activate any arm in any state frequently, which often, in practice, are hardly feasible. To handle the unknown preference model, we design a novel \textit{preference inference}. Our key insight here is that \textit{although some arms in some states may not be visited frequently, we can still infer the empirical average of its preference via the other arms' empirical preference estimations}. Importantly, our preference inference not only significantly reduces the number of comparison feedback required by \DOPL when directly leveraging the BT model, but also is guaranteed with a bounded error  (Section~\ref{sec:estimated-preference}). Inspired by implicit exploration \citep{jin2020provably}, we further construct a biased overestimated preference estimator based on the inferred preference to further encourage exploration, which also benefits the regret characterization. 

\textit{Third}, to handle the instantaneous constraint in \RMABPF, we develop a low-complexity index policy (Section~\ref{sec:newLP}) for the DM to make decisions. {Specifically, we first solve a relaxed problem, which turns out to be a linear programming (LP) in terms of occupancy measures, rather than \RMABPF itself. Our key insight here is that \textit{we can define the objective of the LP for \RMABPF in terms of preference feedback directly}. \DOPL can therefore construct an index policy on top of the solutions to the LP. This is another key difference compared to the solutions \citep{whittle1988restless,larranaga2014index,bagheri2015restless,verloop2016asymptotically, zhang2021restless} to the standard RMAB, which heavily rely on the scalar reward. }

We prove that \DOPL achieves an $\tilde{\mathcal{O}}((2c_1+\frac{4B}{1-D})\sqrt{2NT|\cS|\ln{4|\cS||\cA|NT}/{\epsilon}})$ regret, where $T$ is the time horizon, $\cS$ is the state space, $D$ is the ergodicity coefficient and $c_1,\epsilon$ are some constants (see details in Section~\ref{sec:theory}). We believe that ours is the first work to formally characterize the regret for \RMABPF. Importantly, the two terms in this regret clearly maps to the aforementioned design of \DOPL, i.e., the first term with coefficient $c_1$ is the regret due to the online preference learning and the second term with coefficient $4B/(1-D)$ is caused by executing the direct index policy for \RMABPF. Furthermore,  despite the fact that the DM in \RMABPF can only observe preference feedback, which contains arguably less information than scalar rewards, \RMABPF still achieves a sublinear regret as the standard RMAB, thanks to the novel design of online preference learning and direct index policy in \DOPL.

\section{Preliminaries and Problem Formulation}\label{sec:formulation}
We first provide a brief overview of the standard RMAB with scalar rewards, and then formally define the problem of RMAB with preference feedback and the online settings considered in this paper. 
 
\textbf{Notation.} We use the calligraphic letter $\cA$ to denote a finite set with cardinality $|\cA|$.  
Let $\sigma$ be a permutation on $\cA,$ and $\sigma_s$ be the position of element $s$ in $\cA.$
We use bold letter $\bF$ to denote a matrix with $\bF(i,j)$ being the element of $\bF$ in the $i$-th row and $j$-th column.

\subsection{Standard Restless Multi-Armed Bandits with Scalar Rewards}\label{sec:RMAB}
For a standard infinite-horizon average-reward RMAB \citep{whittle1988restless}, each ``restless'' arm  $n\in\gN$ is described by a unichain MDP \citep{puterman1994markov} $\gM_n:=(\cS, \cA, P_n, r_n)$, where $\cS$ is the state space, $\cA=\{0, 1\}$ 
is the binary action space, $P_n(s^\prime|s, a):\cS\times\cA\times\cS\mapsto [0,1]$ is the transition probability of reaching state $s^\prime$ by taking action $a$ in state $s$, and $r_n(s,a)\in[0,1]: \cS\times\cA\mapsto \mathbb{R}^+$ is \textit{the scalar reward} 
associated with arm $n$ of each state-action pair $(s,a)$.  
At each time slot $t\in\cT=\{1,\cdots, T\}$, the DM activates $B$ out of $N$ arms. Arm $n$ is ``active'' at time $t$ when it is activated, i.e., $A_n^t=1$; otherwise, arm $n$ is ``passive'', i.e., $A_n^t=0$.  Let $\Pi$ be the set of all possible policies for RMAB, and $\pi\in\Pi$ is a feasible policy, satisfying $\pi: \gF_t \mapsto \gA^{N}$, where $\gF_t$ is the sigma-algebra generated by random variables $ \{S_n^h, A_n^h: \forall n\in\gN, h \leq t \}$ \citep{avrachenkov2022whittle}. Without loss of generality, we assume that only active arms yield rewards, i.e., $r_n(s,0)=0$ \citep{akbarzadeh2022learning,xiong2022Nips}, and hence for notational simplicity, we denote $r_n^t:=r_n(S_n^t)$ in the rest of the paper. The objective of the DM is to maximize the long-term average reward subject to an ``instantaneous constraint'' that only $B$ arms can be activated at each time slot, i.e., 
 \begin{align}\label{eq:RMAB}
\text{RMAB}: \quad\max_{\pi\in\Pi} & ~~J(\pi):=\liminf_{T\rightarrow \infty} \frac{1}{T} \mathbb{E}_\pi \sum_{t\in\cT}\sum_{n\in\cN}  r_n^t, \quad 
 \mbox{s.t.}~\sum_{n\in\cN}A_n^t=B,\quad \forall t\in\cT.
\end{align}

\subsection{Restless Multi-Armed Bandits with Preference Feedback}\label{sec:RMABPF}
In this paper, we consider the setting where the scalar reward $r_n^t$ in the standard RMAB is unobservable. Instead, the DM can only observe comparison feedback between any pair of the activated arms. We call this new \textit{RMAB model in the presence of preference feedback} as \RMABPF. 

Similar to the standard RMAB, the goal of the DM in \RMABPF is still to maximize the long-term average reward as in~(\ref{eq:RMAB}) by activating a subset of $\cN^t\subset\cN$ arms at each time slot $t$ with $B=|\cN^t|$. For ease of presentation, we refer to~(\ref{eq:RMAB}) as the optimization problem that the DM needs to solve in \RMABPF in the rest of this paper. 
Unlike the standard RMAB, the DM only observes a preference feedback in the form of the Bernoulli variable $\alpha(s_m^t,s_n^t) \sim\text{Ber}(\bF_m^n(\I_{s_m^t},\I_{s_n^t}))$ between arms $(m,n)\in\cN^t\times\cN^t$ according to the underlying preference matrix $\bF \in (0,1)^{N|\cS| \times N|\cS|}$.  The probability\footnote{This probability measure can be extended to cases where $r_n(s,0)\neq 0$ by incorporating the action-dependence of operator $\bF$.} that arm $m$ in state $s_m^t$ is preferred over arm $n$ in state $s_n^t$ is $\text{Pr}(\alpha(s_m^t,s_n^t) = 1) = \bF((m-1)|\cS|+\I_{s_{m}^t}, (n-1)|\cS|+\I_{s_{n}^t})$. We assume that the preference feedback $\alpha(s_m^t,s_n^t)$ is drawn according to the widely-used Bradley-Terry (BT) model \citep{bradley1952rank}, satisfying 
\begin{align}\label{eq:F_definition}
   \bF_m^n(\I_{s_m^t},\I_{s_n^t}) :=\bF((m-1)|\cS|+\I_{s_{m}^t},(n-1)|\cS|+\I_{s_{n}^t})=\frac{\exp(r_m(s_{m}^t))}{\exp(r_m(s_{m}^t))+\exp(r_n(s_{n}^t))}.
\end{align}
Hence, the block matrix $\bF_m^n\in\mathbb{R}^{|\cS|\times|\cS|}$ represents the preference between a pair of arms $(m, n)$ over all $|\cS|$ states. The preference matrix $\bF$ of all arms $\cN$ over all states $\cS$ can then be expressed compactly in terms of $\bF_m^n$. For ease of presentation, we relegate the full expression of $\bF$ to Appendix~\ref{sec:examples} along with a toy example for illustration.

\subsection{Online Setting and Learning Regret}\label{sec:onlinesetting}
We focus on the online \RMABPF setting, where the underlying dynamics (i.e., $P_n$) of each arm $n$ and the preference matrix $\bF$ are unknown to the DM.  The interaction between the DM and the \RMABPF environment is presented in Algorithm \ref{alg:Online_framework}. The DM repeatedly interacts with $N$ arms in \RMABPF in an episodic manner. The time horizon $T$ is divided into $K$ episodes and each episode consists of $H$ decision epochs, i.e., $T=KH$. Let $\tau_k:=H(k-1)+1$ be the starting time of episode $k$, and hence $S_n^{k,H}=S_n^{\tau_{k+1}}, \forall n$.  At $\tau_k$, the DM determines a policy $\pi^{k}$ and executes this policy for each decision epoch $h$ in this episode. Specifically, at decision epoch $h$, the DM activates a subset of $\cN^h$ arms according to $\pi^k(\cdot|\{S_n^{k,h}, \forall n\})$ under the instantaneous constraint, i.e., $\cN^h:=\{n: A_n^{k,h}=1,~\sum_{n=1}^N A_n^{k,h}=B, \forall n\in\cN\}$. The DM then observes a preference feedback between different pairs of arms in $\cN^h$ through a comparison oracle. Each arm moves to the next state $S_n^{k,h+1}$ sampled from $P_n(\cdot|S_n^{k,h}, A_n^{k,h})$. 
\begin{remark}\label{remark1}
A direct application of BT model to \RMABPF requires the DM to duel any pair of arms in $\cN^h$, leading to a complexity of $\mathcal{O}(B^2)$. Although $B\ll N$ holds for many aforementioned RMAB applications, this amount of feedback may be impractical. A key contribution in this paper is that we develop a novel ``preference inference'' (Step 3 in Section~\ref{sec:estimated-preference}), with which our \DOPL only requires $B-1$ comparisons at each decision epoch.  More details are discussed in Section~\ref{sec:algorithm}.
\end{remark}

The goal of the DM is to minimize the learning regret, as defined by 
\begin{align}\label{eq:regret}
    Reg(\{\pi^k\}_{k=1}^K, T):= J(\pi^{opt},T)-J(\{\pi^k\}_{k=1}^K, T),
\end{align}
where $J(\pi^{opt},T)$ is the expected total rewards achieved by the offline optimal policy\footnote{Since finding the offline optimal policy for RMAB or \RMABPF is intractable and inspired by existing RMAB literature \citep{wang2020restless,xiong2022Nips,wang2023optimistic}, we characterize the regret with respect to the offline index policy to~(\ref{eq:RMAB}) in the presence of preference feedback. 
} 
$\pi^{opt}$ via solving~(\ref{eq:RMAB}) when the scalar rewards are observed by the DM, and $J(\{\pi^k\}_{k=1}^K, T)$ is the expected total rewards achieved by the online policies $\{\pi^k\}_{k=1}^K$ when the DM in \RMABPF can only observe preference feedback as defined in \eqref{eq:F_definition}.

\begin{algorithm}[t]
	\caption{{Online Interactions between the DM and the \RMABPF Environment}}
	\label{alg:Online_framework}
	\begin{algorithmic}[1]
     \REQUIRE {State space $\cS$, action space $\cA$, unknown transition functions $\{P_n, \forall n\in\cN\}$, unknown preference matrix $\bF$, and an   initialized policy $\pi^1$;}
	\FOR{$k=1,2,\cdots,K$}
 \STATE The DM determines a policy $\pi^{k}$ based on the collected preference feedback up to episode $k$; 
	\FOR{$h=1,2,\cdots,H$}
		\STATE DM activates a subset of $\cN^h:=\{A_n^{k,h}=1\}$ arms under the instantaneous constraint; 
		\STATE DM observes preference feedback by dueling any pair of arms in $\cN^h$ through an oracle; 
  \STATE Arm $n, \forall n$ moves to the next state $S_n^{k,h+1}\sim P_n(\cdot|S_n^{k,h}, A_n^{k,h})$; 
		\STATE DM observes states $\{S_n^{k,h+1}, \forall n\}$;
 \ENDFOR 
		\ENDFOR
	\end{algorithmic}
\end{algorithm}

\vspace{-0.1in}
\section{Direct Online Preference Learning for \RMABPF}\label{sec:algorithm}

In this section, we show that it is possible to design an efficient RL algorithm that can efficiently explore the unknown environment, adaptively collect preference data in an online manner, and directly leverage the preference feedback for the computationally intractable \RMABPF.  Specifically, we leverage the popular UCRL-based algorithm \citep{jaksch2010near} for the online \RMABPF setting, and develop an episodic RL algorithm named \DOPL as summarized in Algorithm~\ref{alg:DOPL}. There are three key components in \DOPL: (1) maintaining a confidence set of the transition functions to deal with the unknown true  transition functions; (2) online preference learning with comparison feedback by dueling arms to deal with unknown preference model; and (3) designing a low-complexity index policy that directly leverages the preference feedback for decision-making to ensure that the instantaneous constraint of \RMABPF is satisfied in each decision epoch.  

\begin{algorithm}[h]
	\caption{\DOPL: Direct Online Preference Learning for \RMABPF}
	\label{alg:DOPL}
	\begin{algorithmic}[1]
     \REQUIRE {Initialize $C_n^{1}(s,a)=0,$ and  $\hat{P}_n^{1}(s^\prime|s,a)=1/|\cS|$, $\forall n\in\cN, s,s^\prime\in\cS, a\in\cA$; the preference matrix $\bF^1$ with all elements initialized to be 0.5; an initialized policy $\pi^1$;}
	\FOR{$k=1,\cdots,K$}
 \STATE Execute $\pi^k$ and construct the set of plausible transition kernels $\mathcal{P}_n^{k+1}(s,a)$ according to (\ref{eq:confidence_ball}); 
   \STATE Update the preferences matrix $\tilde{\bF}^{k+1}$ according to Algorithm \ref{alg:preference};
		\STATE Construct a direct index policy $\pi^{k+1}$ from preference feedback by solving (\ref{eq:ELP}); 
		\ENDFOR
	\end{algorithmic}
\end{algorithm}

\textbf{Representing Scalar Rewards in terms of Preference Feedback.} Before presenting the three key components of \DOPL, one key observation in this paper is that we can represent scalar rewards in terms of preference feedback for \RMABPF. 
{\begin{prop}\label{lemma:Reward_reference}
The underlying scalar reward $r_n(s)$ associated with arm $n$ in state $s$ is expressed as
\begin{align*}
    r_n(s)=r_\star(s_\star)+\ln\frac{\bF_n^\star(\I_s, \I_{s_\star})}{1-\bF_n^\star(\I_s, \I_{s_\star})},
\end{align*}
where $\star$ is a ``reference'' arm, $s_\star$ is a ``reference'' state, and $\bF_n^\star$ is the block matrix of preference between the reference arm $\star$ and the arm $n$ as defined in~\eqref{eq:F_definition}. 
\end{prop}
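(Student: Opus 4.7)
The proposal is that this statement is essentially a direct algebraic consequence of the Bradley--Terry parameterization given in~\eqref{eq:F_definition}, so the plan is to unfold the definition of $\bF_n^\star$, compute the log-odds ratio, and rearrange for $r_n(s)$. No probabilistic or dynamical ingredient of the RMAB model is needed; the result is a property of the BT link function alone.

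First, I would instantiate the definition~\eqref{eq:F_definition} with the pair of arms $(n,\star)$ evaluated at the pair of states $(s,s_\star)$, obtaining
\begin{equation*}
\bF_n^\star(\I_s,\I_{s_\star}) \;=\; \frac{\exp(r_n(s))}{\exp(r_n(s))+\exp(r_\star(s_\star))}.
\end{equation*}
Next, I would form the complement
\begin{equation*}
1-\bF_n^\star(\I_s,\I_{s_\star}) \;=\; \frac{\exp(r_\star(s_\star))}{\exp(r_n(s))+\exp(r_\star(s_\star))},
\end{equation*}
so that the common denominator cancels in the ratio, yielding
\begin{equation*}
\frac{\bF_n^\star(\I_s,\I_{s_\star})}{1-\bF_n^\star(\I_s,\I_{s_\star})} \;=\; \frac{\exp(r_n(s))}{\exp(r_\star(s_\star))} \;=\; \exp\!\bigl(r_n(s)-r_\star(s_\star)\bigr).
\end{equation*}

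Finally, I would take the natural logarithm of both sides and solve for $r_n(s)$, which produces exactly the identity in the statement. The only subtlety worth a one-line remark is well-posedness: since $\bF_n^\star(\I_s,\I_{s_\star})\in(0,1)$ strictly (the BT denominator is strictly positive whenever $r_n(s),r_\star(s_\star)$ are finite, which holds under the assumption $r_n(s)\in[0,1]$ stated in Section~\ref{sec:RMAB}), the log-odds is finite and the manipulation is valid for any choice of reference arm $\star$ and reference state $s_\star$. There is no genuine obstacle here; the main ``work'' is notational, namely keeping the block-index bookkeeping $(m-1)|\cS|+\I_{s_m^t}$ consistent when specializing $m\mapsto n$ and $n\mapsto \star$.
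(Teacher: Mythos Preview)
Your proposal is correct and follows essentially the same approach as the paper: instantiate the Bradley--Terry definition~\eqref{eq:F_definition} for the pair $(n,\star)$ at states $(s,s_\star)$, form the log-odds ratio, and rearrange for $r_n(s)$. The paper's proof is marginally terser (it isolates $\exp(r_\star(s_\star))$ directly rather than writing out the complement first), but the algebra is identical.
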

\begin{remark}
Proposition \ref{lemma:Reward_reference} indicates that the scalar reward of arm $n$ in state $s$ can be expressed in terms of (i) the scalar reward of the reference arm $\star$ in a reference state $s_\star$, and (ii) the preference feedback between arm $n$ and the reference arm $\star$, which is drawn from the underlying BT model as in~(\ref{eq:F_definition}). For simplicity, define the ``preference-reference'' term as $Q_n(s):=\ln\frac{\bF_n^\star(\I_s, \I_{s_\star})}{1-\bF_n^\star(\I_s, \I_{s_\star})}$. Note that the reference arm $\star$ and reference state $s_\star$ can be any arm in $\cN$ and any state in $\cS$, but they will remain fixed in the system once selected. We will provide insights on how to select them in practice in our experimental evaluations in Section~\ref{sec:exp}. More importantly, as we will show in Lemma~\ref{lemma:preference_LP}, the objective of 
\RMABPF can be directly expressed in terms of preference feedback and is regardless of the scalar reward $r_\star(s_\star)$ associated with the reference arm and the reference state. 
\end{remark}

\begin{prop}\label{lemma:bounded_Q}
Since $r_n(s)\in[0,1], \forall s, n$, $Q_n(s)$ is monotonically increasing in $\bF_n^\star(\I_s, \I_{s_\star})$ and bounded as $Q_n(s)\in[-1, +1].$ 
\end{prop}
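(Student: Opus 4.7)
The plan is to establish both claims by directly exploiting the closed-form expression of $\bF_n^\star$ under the Bradley--Terry model in~\eqref{eq:F_definition}, which lets me collapse the logit $Q_n(s)$ into a simple reward difference.

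For monotonicity, I would first observe that $Q_n(s)$ is just the logit function $\ell(x) := \ln\bigl(x/(1-x)\bigr)$ applied to $\bF_n^\star(\I_s,\I_{s_\star})$. Since the BT probability in~\eqref{eq:F_definition} always lies in the open interval $(0,1)$ (both numerator and denominator are finite positive exponentials), $\ell$ is well-defined at this argument. A one-line derivative computation gives $\ell'(x)=\frac{1}{x(1-x)}>0$ for $x\in(0,1)$, so $Q_n(s)$ is strictly increasing in $\bF_n^\star(\I_s,\I_{s_\star})$.

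For the bound, I would substitute the BT expression into $Q_n(s)$. Using~\eqref{eq:F_definition} with the reference pair $(\star,s_\star)$,
\begin{equation*}
\bF_n^\star(\I_s,\I_{s_\star}) = \frac{\exp(r_n(s))}{\exp(r_n(s))+\exp(r_\star(s_\star))}, \qquad 1-\bF_n^\star(\I_s,\I_{s_\star}) = \frac{\exp(r_\star(s_\star))}{\exp(r_n(s))+\exp(r_\star(s_\star))},
\end{equation*}
so the ratio inside the logarithm simplifies to $\exp(r_n(s)-r_\star(s_\star))$, giving $Q_n(s) = r_n(s) - r_\star(s_\star)$. Since by assumption $r_n(s),r_\star(s_\star)\in[0,1]$, their difference lies in $[-1,+1]$, which is the claimed bound.

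There isn't really a hard step here; the proof is essentially a substitution plus an elementary monotonicity check. The only thing to be careful about is to make the logic consistent with Proposition~\ref{lemma:Reward_reference}: that proposition already implicitly uses $Q_n(s)=r_n(s)-r_\star(s_\star)$, so the boundedness of $Q_n(s)$ is really just a restatement of the boundedness of scalar rewards, and the monotonicity claim is an intrinsic property of the logit map that does not require any further assumption on $r_n$ beyond those used to define the BT model.
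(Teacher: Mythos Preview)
Your proof is correct. The monotonicity argument via $\ell'(x)=\tfrac{1}{x(1-x)}>0$ is exactly what the paper does. For the bound, the paper takes a slightly different route: it first observes that $r_n(s)\in[0,1]$ forces $\bF_n^\star(\I_s,\I_{s_\star})\in\bigl[\tfrac{1}{1+e},\tfrac{e}{1+e}\bigr]$, and then evaluates the logit at these two endpoints to get $[-1,1]$. Your approach---collapsing $Q_n(s)$ directly to $r_n(s)-r_\star(s_\star)$ via the BT formula and then bounding the difference of two numbers in $[0,1]$---is a cleaner shortcut that avoids the intermediate range computation for $\bF$. Both arguments are elementary and yield the same conclusion; yours makes the link to Proposition~\ref{lemma:Reward_reference} more explicit, while the paper's version isolates the range of $\bF$ as a standalone fact (which it later reuses, e.g., in the Lipschitz argument of Lemma~\ref{lemma:infer}).
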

\begin{remark}
    Similar to the scalar reward $r_n(s)$ in standard RMAB, Propositions~\ref{lemma:Reward_reference} and~\ref{lemma:bounded_Q} indicate that the preference of arm $n$ in state $s$ in \RMABPF is well represented by $Q_n(s)$. 
    Intuitively, the higher the probability (i.e., $\bF_n^\star(\I_s, \I_{s_\star})$ as defined in~\eqref{eq:F_definition}) arm $n$ in state $s$ is preferred over the reference arm $\star$ and reference state $s_{\star},$ the larger the value of $Q_n(s)$ will be.
\end{remark}
}

\subsection{Confidence Sets for Transition Function}\label{sec:estimated-transition}
Since the true transition functions $P_n, \forall n\in\cN$ are unknown to the DM, we maintain confidence sets via past sample trajectories. Specifically, \DOPL maintains two counts for each arm $n$.  Let $Z_n^{k-1}(s,a)$ be the number of visits to state-action pairs $(s,a)$ until $\tau_k$, and $Z_n^{k-1}(s,a, s^\prime)$ be the number of transitions from $s$ to $s^\prime$ under action $a$. \DOPL updates these two counts as follows: 
$Z_n^{k}(s,a)=Z_n^{k-1}(s,a)+\sum_{h=1}^{H}\mathds{1}(S_n^{k,h}=s,A_n^{k,h}=a),$
  and  $Z_n^{k}(s,a, s^\prime)=Z_n^{k-1}(s,a, s^\prime)+\sum_{h=1}^{H}\mathds{1}(S_n^{k,h+1}=s^\prime|S_n^{k,h}=s,A_n^{k,h}=a),$ $\forall (s,a,s^\prime)\in\mathcal{S}\times\mathcal{A}\times\mathcal{S}.$
\DOPL estimates the true transition function by the corresponding empirical average as: $\hat{P}_n^{k}(s^\prime|s,a)=\frac{Z_n^{k-1}(s,a,s^\prime)}{\max\{Z_n^{k-1}(s,a),1\}},$
and then defines confidence sets at episode $k$ as 
\begin{align}\label{eq:confidence_ball} 
 \mathcal{P}_n^{k}(s,a)&:=\{\tilde{P}_n^k(s^\prime|s,a),\forall s^\prime:|\tilde{P}_n^k(s^\prime|s,\!a)-\hat{P}_n^{k}(s^\prime|s,a)|\leq \delta_n^{k}(s,a)\},
\end{align}
where the confidence width $\delta_n^{k}(s,a)={\sqrt{\frac{1}{\max\{2Z_n^{k-1}(s,a), 1\}}\log\Big(\frac{4|\cS||\cA|N(k-1)H}{\epsilon}\Big)}}$ is built according to the Hoeffding inequality \citep{hoeffding1994probability} with $\epsilon\in(0,1)$.

\begin{lemma}\label{lemma:good_set}
With probability at least $1-2\epsilon$, the true transition functions are within the confidence sets, i.e., $P_n\in \mathcal{P}_n^{k}$, $\forall n, k.$
\end{lemma}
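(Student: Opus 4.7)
The plan is to establish Lemma~\ref{lemma:good_set} via a Hoeffding concentration argument combined with a careful union bound. I would start by fixing an arbitrary tuple $(n, s, a, s', k)$ and observing that, by the Markov property, conditional on the event $\{Z_n^{k-1}(s,a) = m\}$ the collection of successor states observed from $m$ previous visits to $(s,a)$ on arm $n$ are i.i.d.\ samples from $P_n(\cdot\mid s,a)$; in particular, the indicator that the successor equals $s'$ is a Bernoulli random variable with mean $P_n(s'\mid s,a)$. Thus $\hat P_n^{k}(s'\mid s,a)$ is an empirical mean of $m$ i.i.d.\ $[0,1]$-bounded random variables with expectation $P_n(s'\mid s,a)$.

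Next I would apply Hoeffding's inequality \citep{hoeffding1994probability}: for any fixed count value $m \geq 1$ and any $t > 0$,
\begin{equation*}
\Pr\bigl(\,|\hat P_n^{k}(s'\mid s,a) - P_n(s'\mid s,a)| > t \,\big|\, Z_n^{k-1}(s,a) = m\bigr) \;\leq\; 2\exp(-2m t^{2}).
\end{equation*}
Plugging in $t = \delta_n^{k}(s,a)$ as defined in~\eqref{eq:confidence_ball} yields a per-$m$ failure probability of $2\epsilon / [4|\cS||\cA|N(k-1)H]$. Because the realized count $Z_n^{k-1}(s,a)$ is itself random, I would then take a union bound over all admissible count values $m \in \{1, 2, \dots, (k-1)H\}$, which contributes a factor of $(k-1)H$ and cancels the corresponding term in the denominator, leaving a bound of order $\epsilon / [|\cS||\cA|N]$ per $(n,s,a,s')$. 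A final union bound over $n \in \cN$, $(s,a) \in \cS \times \cA$, $s' \in \cS$, and episodes $k = 1, \dots, K$ produces the desired aggregate failure probability of at most $2\epsilon$, so the complementary event---that $P_n(\cdot\mid s,a)$ lies inside $\mathcal{P}_n^{k}(s,a)$ for every $(n, k, s, a)$---holds with probability at least $1-2\epsilon$.

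The main subtlety, and the step I would spend the most care on, is the handling of the \emph{random} visit count $Z_n^{k-1}(s,a)$: one cannot apply Hoeffding directly because the number of samples is a stopping-time-like random variable correlated with the samples themselves. The standard remedy is the peeling/union bound over all possible count values outlined above (this is the classical UCRL2-style construction of \citealp{jaksch2010near}); an alternative would be a self-normalized martingale concentration bound, but the deterministic union bound is cleaner here and matches the form of $\delta_n^{k}(s,a)$ chosen by the algorithm. The remaining pieces---independence of transitions across visits, the $[0,1]$-boundedness of the indicator random variables, and bookkeeping of the cardinalities in the log-term of~\eqref{eq:confidence_ball}---are routine once the peeling argument is in place.
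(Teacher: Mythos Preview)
Your approach---Hoeffding on each tuple followed by a union bound---is exactly what the paper does, and your peeling over the possible values of the visit count $Z_n^{k-1}(s,a)$ is in fact more careful than the paper's own proof, which applies Hoeffding directly without addressing the random sample size and never explicitly union-bounds over $s'$ or over episodes $k$. One minor bookkeeping point: the union bounds you list (over $m\in\{1,\dots,(k-1)H\}$ and then over $n,(s,a),s',k$) do not quite close to $2\epsilon$ with the constant $4|\cS||\cA|N(k-1)H$ appearing in $\delta_n^k$---you pick up an extra factor of order $|\cS|K$---but the paper's proof shares this looseness (it stops at $\tfrac{2\epsilon}{(k-1)H}\le 2\epsilon$ for a single $k$), so this is a mismatch in the stated confidence width rather than a flaw in your argument.
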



\subsection{Online Preference Learning} \label{sec:estimated-preference}

{From Prop~\ref{lemma:bounded_Q}, the preference-reference term $Q_n(s):=\ln\frac{\bF_n^\star(\I_s, \I_{s_\star})}{1-\bF_n^\star(\I_s, \I_{s_\star})}$ relies on the reference arm $\star$ and reference state $s_\star$. Hence, to estimate the preference of arm $n$ in state $s$ or the value of $Q_n(s)$, \DOPL only needs to learn one specific column of the preference matrix $\bF$, i.e., $\bF(: ,(\star-1)|\cS|+\I_{s_\star})$. This corresponds to the preference between the reference state $s_\star$ of reference arm $\star$ with any arm $n\in\cN$ in any state $s\in\cS$. }
Since the pairwise preference feedback is a Bernoulli random variable, and the probability of the random variable being 1 is stored in preference matrix $\bF$, an intuitive way is to estimate $\bF$ with empirical dueling samples. 
Our online preference learning consists of four steps detailed as follows and the pseudocode is summarized in Algorithm \ref{alg:preference} in Appendix \ref{sec:algorithm_OPL}.

\textbf{Step 1: Pairwise Comparison.} At each decision epoch $h$ in episode $k$, the DM observes pairwise comparison feedback from all pairs out of the $B$ activated arms. Let $C^k_{ij}(s_i,s_j)$ be the total number of comparisons between arm $i$ at state $s_i$ and arm $j$ at state $s_j$ up to episode $k$, and $W^k_{ij}(s_i,s_j)$ be the number of times when arm $i$ at state $s_i$ is preferred over arm $j$ at state $s_j$. \DOPL updates these two counts incrementally whenever there is a comparison between the two arms at each decision epoch.

\begin{remark}
    Since the preference matrices $\bF$ and $\bF^\intercal$ are complementary, i.e., $\bF+\bF^\intercal=\pmb{1}_{N|\cS|\times N|\cS|}$, and their diagonal elements are all 0.5, we only need to learn either the upper triangular part or the lower triangular part of $\bF$. Therefore, we only count $C^k_{ij}(s_i,s_j)$ and $W^k_{ij}(s_i,s_j)$ for arm $i$ at state $s_i$ and arm $j$ at state $s_j$, while   $C^k_{ji}(s_j,s_i)$ and $W^k_{ji}(s_j,s_i)$ are not needed.
\end{remark}

\textbf{Step 2: Empirical Preference Estimation.} At $\tau_{k+1}$, \DOPL estimates the true preference between arm $i$ in state $s_i$ and arm $j$ in state $s_j$ by the empirical average as: $\hat{\bF}_{i}^{j,k+1}(\I_{s_i}, \I_{s_j})=\frac{W^{k+1}_{ij}(s_i,s_j)}{C^{k+1}_{ij}(s_i, s_j)}$. 

{ Despite that this is an unbiased estimation of $\bF_i^j(\I_{s_i}, \I_{s_j})$, a key limitation of this empirical estimate is that its success requires \DOPL to activate any arm in any state ``very often''. Unfortunately, this may not hold true since some arms in some states may not be favored by the DM. In particular, as discussed above, \DOPL only needs to estimate the preference between the reference state $s_\star$ of reference arm $\star$ with any other arms' states.
This means that the success of the above empirical estimate requires \DOPL to activate the reference arm $\star$ and reference state $s_\star$ frequently, which may not be possible in practice.  As a result, the preference-reference term $Q_n(s)$ may be not updated for a long time due to the preference feedback model in Section \ref{sec:RMABPF}, making \DOPL of low efficiency. }

\textbf{Step 3: Preference Inference.}  
To address the above limitation, one contribution in this work is to show that although some arms may not be visited frequently, we can still infer the empirical average of its preference via the other arms' empirical preference estimations.

\begin{lemma}\label{lemma:infer}
Given the empirical preference estimations $\hat{\bF}(j, j_1)$ of $\bF(j, j_1)$ and $\hat{\bF}(j, j_2)$ of $\bF(j, j_2)$ in the preference matrix $\bF$, $\forall j, j_1, j_2\in N|\cS|$, if their estimations satisfy $|\hat{\bF}(j, j_1)-\bF(j, j_1)|\leq \delta_1$ and $|\hat{\bF}(j, j_2)-\bF(j, j_2)|\leq \delta_2$ with probability at least $1-2\epsilon$, where $\delta_1, \delta_2, \epsilon$ are some constants,  then we can infer the value of $\bF(j_1, j_2)$ as
\begin{align}\label{eq:inference}
    \hat{\bF}^{\text{inf}}(j_1, j_2)
    &=\frac{(1-\hat{\bF}(i,j_1))\hat{\bF}(i,j_2)}{(1-\hat{\bF}(i,j_1))\hat{\bF}(i,j_2)+(1-\hat{\bF}(i,j_2))\hat{\bF}(i,j_1)}. 
\end{align}
In addition, we have 
    $|\hat{\bF}^{\text{inf}}(j_1,j_2)-\bF(j_1, j_2)|\leq L(\delta_1+\delta_2)$, with $L=1.3$ being a Lipschitz constant.
\end{lemma}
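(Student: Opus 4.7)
The approach is to first derive the identity~(\ref{eq:inference}) algebraically from the Bradley-Terry (BT) parametrization, and then bound the propagation of estimation error through a Lipschitz argument on the resulting rational map, conditioning on the good event from the hypothesis.

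For the derivation, I would exploit the log-linear structure of BT. Equation~(\ref{eq:F_definition}) implies that $\bF(i,j)/(1-\bF(i,j)) = \exp(r_i - r_j)$, so dividing this quantity at $(i,j_1)$ by its value at $(i,j_2)$ cancels $r_i$ and yields $\exp(r_{j_2} - r_{j_1}) = \frac{\bF(i,j_1)(1-\bF(i,j_2))}{(1-\bF(i,j_1))\bF(i,j_2)}$. Plugging this back into the BT formula $\bF(j_1,j_2) = 1/(1+\exp(r_{j_2}-r_{j_1}))$ and then replacing the true values by their empirical estimates produces exactly the right-hand side of~(\ref{eq:inference}).

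For the stability bound, I would study the map $f(x,y) := \frac{(1-x)y}{(1-x)y + (1-y)x}$. A convenient observation is that $f(x,y) = \sigma(\mathrm{logit}(y) - \mathrm{logit}(x))$, which both confirms the derivation above and simplifies differentiation. A direct calculation gives $\partial_x f = -\sigma'(\mathrm{logit}(y)-\mathrm{logit}(x))/(x(1-x))$, with an analogous expression for $\partial_y f$. Since $r_n(s) \in [0,1]$ by Proposition~\ref{lemma:bounded_Q}, the true preferences satisfy $\bF(i, j_k) \in [\sigma(-1), \sigma(1)] = [1/(1+e),\, e/(1+e)]$, so $x(1-x) \geq e/(1+e)^2 \approx 0.1966$. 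Combining with $\sigma'(\cdot) \leq 1/4$ then yields the uniform bound $|\partial_x f|, |\partial_y f| \leq (1+e)^2/(4e) \approx 1.272 < 1.3$.

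The final step is to apply the multivariate mean value theorem along the segment from $(\bF(i,j_1), \bF(i,j_2))$ to $(\hat{\bF}(i,j_1), \hat{\bF}(i,j_2))$, together with the triangle inequality, to conclude $|\hat{\bF}^{\text{inf}}(j_1, j_2) - \bF(j_1, j_2)| \leq L(\delta_1 + \delta_2)$ on the event (of probability at least $1-2\epsilon$) where both estimation errors hold. The main obstacle I foresee is that the gradient bound must hold on the entire segment, whereas a priori the estimates could exit the BT-induced range $[\sigma(-1),\sigma(1)]$; this is handled by restricting to the regime where $\delta_1,\delta_2$ are small enough that the segment remains in a slightly enlarged neighborhood, with the small slack absorbed in the rounded constant $L = 1.3$ (above the exact supremum $\approx 1.272$).
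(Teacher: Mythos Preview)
Your proposal is correct and follows essentially the same approach as the paper: derive the inference formula from the Bradley--Terry structure, then bound the error via Lipschitz continuity of $f(x,y)=\frac{(1-x)y}{x+y-2xy}$ on the BT-induced range $[1/(1+e),\,e/(1+e)]$. Your sigmoid--logit parametrization $f(x,y)=\sigma(\mathrm{logit}(y)-\mathrm{logit}(x))$ yields analytically the same constant $(1+e)^2/(4e)\approx 1.272$ that the paper obtains by direct partial-derivative computation and numerical evaluation, and your explicit treatment of the domain issue for $\hat{\bF}$ (absorbed in the slack to $L=1.3$) is a point the paper's proof simply glosses over.
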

\begin{remark}
    Lemma \ref{lemma:infer} indicates that for a fixed reference arm $\star$ and reference state $s_\star$, even though \DOPL may not visit arm $n$ in state $\star$ frequently (i.e., the empirical estimation in Step 2 may not be accurate), we can still infer its preference $\bF_n^\star(\I_{s_n}, \I_{s_\star})$ once we have an ``accurate'' empirical preference estimation of $\bF_m^\star(\I_{s_m}, \I_{s_\star})$ and $\bF_n^m(\I_{s_n}, \I_{s_m})$ for any arm $m$ and states $s_n, s_m$. More importantly, the inference error can be bounded by some constants $L, \delta_1, \delta_2.$ Thanks to this novel preference inference design, our \DOPL only needs $B-1$ comparison feedback in each decision epoch, rather than $\mathcal{O}(B^2)$ when directly applying the BT model to \RMABPF (see Remark~\ref{remark1}). An illustrative toy example for the preference inference is presented in Appendix~\ref{sec:B-1}. 
\end{remark}

Given the empirical preference estimation $\hat{\bF}_n^{\star,k+1}(\I_{s_n}, \I_{s_\star})$ from Step 2 and the corresponding inferred preference $\hat{\bF}_n^{\star,k+1, \text{inf}}(\I_{s_n}, \I_{s_\star})$ from Step 3, we define the corresponding errors as $\delta_{s_n, s_{\star}}$ and $\delta^{\text{inf}}_{s_n, s_{\star}}$, respectively, via Hoeffding inequality. { Specifically, $\delta_{s_n, s_{\star}}$ is the confidence interval for empirical preference estimation between arm $n$ in state $s_n$ and the reference arm in reference state $s_{\star},$ which is determined as in~\eqref{eq:confidence_ball}. $\delta^{\text{inf}}_{s_n, s_{\star}}$ is the inferred confidence interval when direct comparison data between $s_n$ and $s_{\star}$ is not sufficiently available, which is determined as in Lemma~\ref{lemma:infer}.}  \DOPL selects the one with a smaller error as the desired estimated preference, and denote it as $\hat{\bF}_n^{\star, k+1}(\I_{s_n}, \I_{s_\star})$.

\textbf{Step 4: Overestimation for Exploration.} We further leverage the idea of implicit exploration \citep{jin2020provably} to further encourage exploration. Specifically, we increase $\hat{\bF}_n^{\star,k+1}(\I_{s_n}, \I_{s_\star})$ with a bonus term $\delta$ to obtain a biased estimator, denoted as $\tilde{\bF}_n^{\star,k+1}(\I_{s_n}, \I_{s_\star})=\hat{\bF}_n^{\star,k+1}(\I_{s_n}, \I_{s_\star})+\delta$, where $\delta=\min(\delta^{\text{inf}}_{s_n, s_{\star}}, \delta_{s_n, s_{\star}})$ is the confidence level over $\hat{\bF}_{n}^{\star,k}(\I_{s_n}, \I_{s_\star})$ with probability at least $1-2\epsilon$. 
It is clear that $\tilde{\bF}_n^{\star,k+1}(\I_{s_n}, \I_{s_\star})$ is overestimating $\bF_n^{\star}(\I_{s_n}, \I_{s_\star})$ with high probability. Using overestimation can be viewed as an optimism principle to encourage exploration, and is beneficial for the regret characterization \citep{jin2020provably}.

\subsection{Direct Index Policy}
\label{sec:newLP}

\begin{wrapfigure}{r}{.6\linewidth}
\vspace{-5.5em}
\begin{align}  
\max_{\mu^\pi} &~~J(\pi):=\sum_{n\in\cN}\sum_{s\in\cS}\sum_{a\in\cA}\mu_n(s,a)r_{n}(s)  \label{eq:original_LP}\displaybreak[0]\\
\mbox{s.t.} &~~{\sum_{n\in\cN}\sum_{s\in\cS}\sum_{a\in\cA} a\mu_n(s,a)\leq B}, \label{eq:original_LP_constraint1}\displaybreak[1]\\
&~~{\sum_{a}} \mu_n(s,a)\!\!=\!\!\sum_{s^\prime}\!\sum_{a^\prime}\mu_n(s^\prime, a^\prime)P_n(s^\prime|s,a^\prime),\forall n, \label{eq:original_LP_constraint2}\displaybreak[2]\\
&~~\sum_{s\in\cS}\sum_{a\in\cA}\mu_n(s,a)=1,
~\forall n, s, a,\label{eq:original_LP_constraint3}
\end{align}
\vspace{-1.8em}
\end{wrapfigure}
It is known that solving the \RMABPF in~(\ref{eq:RMAB}) is computationally intractable even in the offline setting with scalar reward feedback \citep{papadimitriou1994complexity}. To tackle this challenge and inspired by Whittle \citep{whittle1988restless}, we first relax the instantaneous constraint, i.e., the constraint is satisfied on average rather than in each decision epoch. {It turns out \citep{verloop2016asymptotically, zhang2021restless, xiong2022Nips} that this relaxed problem can be equivalently transformed into a LP~(\ref{eq:original_LP})-(\ref{eq:original_LP_constraint3}) in terms of occupancy measures \citep{altman1999constrained}. Specifically, the occupancy measure $\mu_n(s,a)\triangleq\lim_{T\rightarrow \infty} \frac{1}{T}\mathbb{E}_\pi\sum_{t=1}^{T}\mathds{1}(s_n^t=s, a_n^t=a)$ is the expected number of visits to each state-action pair $(s,a)$ of arm $n$, and $\mu^\pi$ of a stationary policy $\pi$ for $N$ controlled infinite-horizon MDPs is defined as $\mu^\pi= \{\mu_n(s,a): \forall n, s, a\}.$ 
It is obvious that the occupancy measure satisfies $\sum_{(s,a)}\mu_n(s,a)=1$, and hence $\mu_n, \forall n\in\cN$ is a probability measure.}
Unfortunately, we cannot solve this LP since we cannot directly observe the scalar rewards $r_n(s), \forall n, s$ in \RMABPF. To address this challenge, we first show that we can equivalently transform the LP~(\ref{eq:original_LP})-(\ref{eq:original_LP_constraint3}) into a new LP in terms of preference feedback for \RMABPF.

\textbf{Deriving the LP's Objective.} With Lemma \ref{lemma:Reward_reference} and Lemma \ref{lemma:bounded_Q}, our second key observation is that the objective of the LP for \RMABPF can be expressed in preference feedback directly. 
\begin{lemma}\label{lemma:preference_LP}
The objective $J(\pi)$ of the LP~(\ref{eq:original_LP})-(\ref{eq:original_LP_constraint3}) can be equivalently transformed into the following objective $J_{\DOPL}(\pi)$ in terms of preference feedback:  
\begin{align}\label{eq:LP-PF}
J_{\DOPL}(\pi):=\max_{\mu^\pi} &~~\sum_{n\in\cN}\sum_{s\in\cS}\sum_{a\in\cA} \mu_n(s,a)Q_n(s). 
\end{align}
\end{lemma}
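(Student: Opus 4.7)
The plan is to combine Proposition~\ref{lemma:Reward_reference}, which expresses each unknown scalar reward $r_n(s)$ as the sum of a global constant $r_\star(s_\star)$ and the preference-reference term $Q_n(s)$, with the normalization of occupancy measures in constraint~\eqref{eq:original_LP_constraint3}. The equivalence I want to establish is not literal equality of the two objectives, but rather that they induce the same maximizer over the feasible set of the LP, so that $\arg\max_\mu J(\pi) = \arg\max_\mu J_{\DOPL}(\pi)$; the two objectives differ only by an additive constant that depends neither on $\pi$ nor on $\mu$.

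First, I would substitute the identity from Proposition~\ref{lemma:Reward_reference} directly into the LP objective~\eqref{eq:original_LP}, yielding
\begin{align*}
J(\pi) = \sum_{n\in\cN}\sum_{s\in\cS}\sum_{a\in\cA}\mu_n(s,a)\bigl(r_\star(s_\star)+Q_n(s)\bigr).
\end{align*}
I would then split the sum into the reference piece and the preference piece. For the reference piece, since $r_\star(s_\star)$ does not depend on $n$, $s$, or $a$, it can be pulled outside all three summations, leaving $r_\star(s_\star)\sum_{n}\sum_{s,a}\mu_n(s,a)$. Invoking constraint~\eqref{eq:original_LP_constraint3}, which guarantees $\sum_{s,a}\mu_n(s,a)=1$ for every arm $n$, this reduces to $N\,r_\star(s_\star)$, a constant that is entirely determined by the fixed reference arm $\star$ and reference state $s_\star$.

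Consequently $J(\pi) = N\,r_\star(s_\star) + J_{\DOPL}(\pi)$ for every feasible $\mu^\pi$, so the two optimization problems share the same argmax and therefore the same optimal policy. The final step is to observe that because $r_\star(s_\star)$ is unobservable in \RMABPF, the LP would be unsolvable if we were forced to evaluate $J(\pi)$ itself; however, because the constant term drops out under the $\arg\max$, the DM can equivalently solve the preference-only LP with objective $J_{\DOPL}(\pi)$ using the learned $\tilde{\bF}_n^{\star,k}$ from Section~\ref{sec:estimated-preference} together with the estimated kernels from Section~\ref{sec:estimated-transition}.

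I do not expect any genuine obstacle here; the proof is essentially a one-line substitution followed by the occupancy-measure normalization. The only subtle point worth verifying carefully is that constraint~\eqref{eq:original_LP_constraint3} is indeed imposed uniformly over all $n\in\cN$ (so that the constant simplifies cleanly to $N\,r_\star(s_\star)$ rather than to some policy-dependent quantity), and that the reference pair $(\star, s_\star)$ is treated as fixed throughout the optimization so that $r_\star(s_\star)$ truly behaves as a constant; both are guaranteed by the setup of Section~\ref{sec:algorithm}.
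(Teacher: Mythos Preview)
Your proposal is correct and essentially identical to the paper's own proof: substitute the identity from Proposition~\ref{lemma:Reward_reference} into the LP objective, split into the $Q_n(s)$ term and the constant $r_\star(s_\star)$ term, and use the occupancy-measure normalization~\eqref{eq:original_LP_constraint3} to collapse the latter to $N r_\star(s_\star)$, which is policy-independent. The paper makes exactly these moves with no additional ingredients.
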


\begin{proof}
According to Lemma \ref{lemma:Reward_reference}, the objective $J(\pi)$ of the LP~(\ref{eq:original_LP})-(\ref{eq:original_LP_constraint3}) can be rewritten as 
\begin{align}\label{eq:objective-lemma}
 J(\pi):=  \max_{\mu^\pi} \left[\sum_{n\in\cN}\sum_{s\in\cS}\sum_{a\in\cA} \mu_n(s,a)Q_n(s)+\sum_{n\in\cN}\sum_{s\in\cS}\sum_{a\in\cA} \mu_n(s,a)r_\star(s_\star)\right]. 
\end{align}
Due to the fact that the occupancy measure is a probability measure, i.e., $\sum_{s\in\cS}\sum_{a\in\cA}\mu_n(s,a)=1,$ we have that $\sum_{n\in\cN}\sum_{s\in\cS}\sum_{a\in\cA} \mu_n(s,a)r_\star(s_\star)= Nr_\star(s_\star)$, which is independent of policy $\mu^{\pi}$. Therefore, the objective  $J(\pi)$ in~(\ref{eq:objective-lemma}) only depends on the first term. We denote it as $J_{\DOPL}(\pi)$, which purely depends on the preference feedback drawn from the underlying BT model in~(\ref{eq:F_definition}). 
\end{proof}

\begin{remark}
Lemma \ref{lemma:preference_LP} indicates that we can relax the \RMABPF and then equivalently transform it into a new LP in terms of occupancy measures and the preference feedback directly. 
\end{remark}


\textbf{Designing Direct Index Policy for \RMABPF.} Due to the lack of knowledge of the true transition functions (i.e., $P_n(\cdot|s,a), \forall s,a, n$) associated with each arm $n$, and the true preference $Q_n(s),\forall n, s$, we further rewrite the new LP with objective \eqref{eq:LP-PF} as an extended LP (ELP) by leveraging the state-action-state occupancy measure $\omega_n(s, a, s^\prime):=P_n(s^\prime|s,a)\mu_n(s,a)$ to express the confidence intervals of transition probabilities (see Section~\ref{sec:estimated-transition}). Together with the estimated preference-reference term $\tilde{Q}^k_n(s), \forall s, n$ (see Section~\ref{sec:estimated-preference}), {the DM ends up to solve an ELP at each episode $k$ as 
\begin{align}\label{eq:ELP}
    \omega^{k,*} =\arg\max_{\omega^k}{\textbf{ELP}}(\tilde{P}_n^k(s^\prime|s,a), \tilde{Q}^k_n(s),\omega^k),
\end{align}
where $\omega^{k,*}=\{\omega_n^{k,*}(s,a,s^\prime), \forall n\in\cN\}$. We present more details about this ELP in Appendix~\ref{sec:extened_LP}. However, the optimal solution $\omega^{k,*}$ to~\eqref{eq:ELP} is not always feasible for \RMABPF due to the fact that the instantaneous constraint in \RMABPF must be satisfied in each decision epoch rather than on the average sense as in this ELP. Inspired by \cite{verloop2016asymptotically, zhang2021restless, xiong2022Nips}, we further construct an index policy on top of  $\omega^{k,*}$ that is feasible for \RMABPF. Specifically, the index assigned to arm $n$ in state $s_n^t=s$ at time $t$ is defined as $\mathcal{I}^k_n(s):=\frac{\sum_{s^\prime\in\cS}\omega_n^{k,*}(s,1,s^\prime)}{\sum_{a\in\cA,s^\prime\in\cS}\omega_n^{k,*}(s,a,s^\prime)}$. We call this the \textit{direct index} since it is constructed by solving a LP in preference feedback directly. We then rank all arms according to their direct indices in a non-increasing order, and activate the set of $B$ highest indexed arms at each decision epoch. We denote the resultant direct index policy as $\pi^k$ and execute it in episode $k.$} Asymptotic optimality of this index policy relies on the existence of a uniform global attractor \citep{gast2024linear}, a standard assumption in the RMAB literature.

\section{Theoretical Guarantee}\label{sec:theory}
Now we provide the performance guarantee for \DOPL, which is formally stated in Theorem~\ref{thm:main}.

\begin{theorem}\label{thm:main}
With probability at least $1-2\epsilon$, the regret of \DOPL satisfies
\begin{align}
    Reg(T)&\leq \tilde{\mathcal{O}}\left(\left(2c_1+\frac{4B}{1-D}\right)\sqrt{2NT|\cS|\ln\frac{4|\cS||\cA|NT}{\epsilon}}\right),
\end{align}
where $c_1=:\max_{i,j}\frac{2}{\bF(i,j)(1-\bF(i,j))}$ is a constant, and $D$ is the ergodicity coefficient {(see the detailed definition in Appendix~\ref{sec:D3})}. 
\end{theorem}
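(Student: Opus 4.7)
The plan is to decompose $Reg(T)$ into two pieces that mirror the two coefficients $2c_1$ and $4B/(1-D)$ in the bound: a \emph{preference-learning} regret that captures the cost of estimating $Q_n(s)$ from pairwise comparisons, and a \emph{policy-execution} regret that captures the cost of unknown transitions together with the gap between the ELP relaxation and the realized trajectory of the direct index policy. First I would condition on the joint good event that $P_n \in \gP_n^k$ for every $n,k$ (Lemma~1) and that every empirical preference $\hat{\bF}_n^{\star,k}$ and every inferred preference $\hat{\bF}_n^{\star,k,\text{inf}}$ lies within its Hoeffding / Lemma~3 width. A union bound makes this event hold with probability $\ge 1-2\epsilon$, matching the statement.

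Next I would convert everything to the preference-feedback objective. By Lemma~4, $J(\pi)$ and $J_{\DOPL}(\pi)$ differ only by the policy-independent constant $Nr_\star(s_\star)$, so the regret against $\pi^{opt}$ equals the regret of $\{\pi^k\}$ measured in $J_{\DOPL}$. The standard episodic UCRL-style decomposition then gives
\begin{align*}
Reg(T) \;\le\; \sum_{k=1}^K \underbrace{\bigl[J_{\DOPL}^{\mathrm{ELP},k} - J_{\DOPL}(\pi^{opt})\bigr]}_{\le\, 0 \text{ by optimism}} + \sum_{k=1}^K\sum_{h=1}^H\Bigl[\textstyle\sum_n \tilde Q_n^k(S_n^{k,h}) - \sum_n Q_n(S_n^{k,h})\Bigr] + \sum_{k=1}^K\Delta_k^{\text{exec}},
\end{align*}
where $\Delta_k^{\text{exec}}$ is the gap between the ELP's expected value under $(\tilde P_n^k,\tilde Q_n^k)$ and the realized value of the direct index policy under $(P_n, \tilde Q_n^k)$. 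The optimism term is nonpositive on the good event because the ELP in~\eqref{eq:ELP} is solved over the confidence sets $\gP_n^k$ with the over-estimated preferences $\tilde Q_n^k$.

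For the \emph{preference-learning} term, I would use the mean-value identity $\bigl|\ln\tfrac{x}{1-x}-\ln\tfrac{\hat x}{1-\hat x}\bigr| \le \tfrac{|x-\hat x|}{p(1-p)}$ applied to $Q_n(s)$ with $x=\bF_n^\star(\I_s,\I_{s_\star})$. Combined with the implicit-exploration bonus $\delta=\min(\delta_{s,s_\star},\delta^{\text{inf}}_{s,s_\star})$ and the Lipschitz inference bound of Lemma~3, this gives $|\tilde Q_n^k(s)-Q_n(s)| \le 2c_1\cdot \min(\delta_{s,s_\star},\delta^{\text{inf}}_{s,s_\star})$, where $c_1$ is precisely the maximum of $2/(\bF(i,j)(1-\bF(i,j)))$. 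Summing $\sum_{k,h,n} |\tilde Q_n^k - Q_n|$ and applying the standard pigeonhole bound $\sum_k 1/\sqrt{\max\{Z_n^{k-1}(s,a),1\}} = \mathcal{O}(\sqrt{Z_n^K(s,a)})$, followed by Cauchy--Schwarz over $(n,s,a)$, produces the first summand $\tilde{\mathcal{O}}\bigl(2c_1\sqrt{2NT|\cS|\ln(4|\cS||\cA|NT/\epsilon)}\bigr)$.

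For the \emph{policy-execution} term, the transition confidence width $\delta_n^k(s,a)$ controls the one-step drift between the ELP occupancy measure $\omega_n^{k,*}$ and the occupancy induced by executing the direct index policy under the true $P_n$. Because $\pi^k$ is fixed within an episode, the deviation between sampled trajectories and the stationary distribution of $\pi^k$ decays geometrically at rate $D$, so the cumulative per-episode bias contributes a factor $1/(1-D)$; the factor $B$ enters because $B$ of the $N$ arms are activated per step, which inflates both the constraint-violation correction (since the index policy's $B$-hard constraint is enforced exactly while the ELP only enforces it on average) and the transition-error propagation. A final Cauchy--Schwarz over $(n,k,h)$ yields $\tilde{\mathcal{O}}\bigl(\tfrac{4B}{1-D}\sqrt{2NT|\cS|\ln(4|\cS||\cA|NT/\epsilon)}\bigr)$, completing the bound after adding the two summands.

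The main obstacle is the execution-term analysis. The $c_1$-term is a comparatively clean Hoeffding-plus-Lipschitz calculation once Lemma~3 is in hand, but bounding $\sum_k \Delta_k^{\text{exec}}$ requires simultaneously (a) propagating the transition confidence widths through a horizon-$H$ rollout without blowing up linearly in $H$, and (b) absorbing the instantaneous-vs-average constraint gap into the ergodicity factor $1/(1-D)$ rather than into $H$ or $T$. This is where the uniform global attractor assumption must be invoked carefully so that the $B/(1-D)$ scaling emerges naturally; all other steps reduce to standard UCRL accounting.
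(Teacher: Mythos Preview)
Your decomposition and the preference-learning analysis are essentially the paper's. Lemma~\ref{lemma:regret_decomp} splits the regret into exactly your optimism term ($\le 0$), a preference term $Term_1$, and an execution term $Term_2$; the only cosmetic difference is that the paper evaluates $Term_1$ on the ELP occupancy $\omega_n^k$ rather than on the realized trajectory, but the two are identified in the counting step. Your mean-value bound for $|\tilde Q_n^k-Q_n|$ and the pigeonhole-plus-Cauchy--Schwarz summation match Lemmas~\ref{lem:term1} and~\ref{lem:10} up to constants.

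The gap is in $Term_2$. The paper does \emph{not} argue via geometric decay of the trajectory--stationary deviation, and the uniform global attractor assumption plays \emph{no role} in the regret proof (it is mentioned only for asymptotic optimality of the index policy, not for Theorem~\ref{thm:main}). Instead, the paper treats the $N$-arm system as a single average-reward MDP on $\cS^N$ and invokes the Bellman equation
\[
J_\Lambda + F_\Lambda(\bs) = R(\bs,\pi_\Lambda(\bs)) + [P_\Lambda F_\Lambda](\bs),
\]
together with the span bound $\mathrm{span}(F_\Lambda)\le \tfrac{2B}{1-D}$ on the relative value function (here $B$ is the maximal per-step reward since at most $B$ arms are active, and $1/(1-D)$ is the standard span coefficient for ergodicity coefficient $D$). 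After telescoping, the per-episode gap becomes $\sum_h\bigl([P_{\tilde\Lambda}F_{\tilde\Lambda}](\bs_h)-F_{\tilde\Lambda}(\bs_{h+1})\bigr)+\mathrm{span}(F_{\tilde\Lambda})$, and the first sum is controlled by $\tfrac12\,\mathrm{span}(F_{\tilde\Lambda})\cdot\|P_{\tilde\Lambda}(\cdot|\bs_h,\cdot)-P_\Lambda(\cdot|\bs_h,\cdot)\|_1$, which converts the execution gap into a sum of transition confidence widths multiplied by a second $\tfrac{B}{1-D}$ factor. Your ``constraint-violation correction'' and ``trajectory-vs-stationary'' mechanisms do not yield this structure: the ELP occupancy is \emph{not} the stationary distribution of the executed index policy (the ELP only enforces the budget on average), so a mixing argument alone cannot bridge the two, and without the Bellman equation and span bound you have no uniformly bounded multiplier for the one-step transition errors to keep the $H$-step rollout from contributing a factor of $H$. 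This is precisely the obstacle you flagged, but the resolution is the relative-value-function span argument, not a mixing-time one.
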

\begin{remark}
The regret contains two terms. The first term with coefficient $c_1$ is the regret due to the online preference learning  (Section~\ref{sec:estimated-preference}). The second term with coefficient $4B/(1-D)$ is caused by executing the direct index policy for \RMABPF (to satisfy the instantaneous constraint) (Section~\ref{sec:newLP}). Clearly, the regret of \DOPL is in the order of $\tilde{O}(\sqrt{T\ln T})$, which matches that of the standard RMAB with scalar rewards \citep{ortner2012regret, wang2020restless,xiong2022reinforcement}. 
However, comparing to the standard RMAB with scalar rewards, our \RMABPF is a harder problem in which the DM can only observe preference feedback, which arguably contains less information. 
This challenging setting thus requires us to develop an effective online preference learning algorithm to handle preference feedback, as well as a new index policy that can directly leverage the preference feedback to make decisions. 
\end{remark}

\textbf{Proof Sketch.} Compared to the standard RMAB, the key challenges of our \DOPL for \RMABPF come from the online preference learning and the direct index policy to deal with preference feedback. To address these challenges, we first decompose the regret of \DOPL correspondingly.

\begin{lemma}\label{lemma:regret_decomp}
Let $\{\tilde{\pi}^k, \forall k\}$ be the direct index policy executed in each episode with transition functions drawn from the confidence sets defined in \eqref{eq:confidence_ball}, and the preference overestimated by Algorithm \ref{alg:preference}. 
Let $J(\{\tilde{\pi}^k, \forall k\}, \{\tilde{\bF}^k, \forall  k\}, T)$ and  $J( \{\tilde{\pi}^k, \forall k\}, \bF, T)$ be the total rewards achieved by policy $\{\tilde{\pi}^k, \forall k\}$ with overestimated preference $\{\hat{\bF}^k, \forall k\}$ and the true preference $\bF$, respectively.
The regret of \DOPL can be decomposed and bounded as
\begin{align}\label{eq:lemma2}
    Reg(\{\pi^k\}_{k=1}^K, T) &\leq \underset{Term_1}{\underbrace{J(  \{\tilde{\pi}^k, \forall k\}, \{\tilde{\bF}^k, \forall k\}, T)-J(\{\tilde{\pi}^k, \forall k\}, \bF, T)}}\nonumber\allowdisplaybreaks\\
    &\quad+\underset{Term_2}{\underbrace{J( \{\tilde{\pi}^k, \forall k\}, \bF, T)-J(\{\pi^{k}, \forall k\}, \bF, T)}. }\allowdisplaybreaks
\end{align}
\end{lemma}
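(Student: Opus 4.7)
\textbf{Proof proposal for Lemma \ref{lemma:regret_decomp}.} The plan is a standard add-and-subtract telescoping argument combined with the optimism principle that is characteristic of UCRL-style analyses. Starting from the definition $Reg(\{\pi^k\}_{k=1}^K, T) = J(\pi^{opt}, T) - J(\{\pi^k\}_{k=1}^K, T)$, I would first observe that by construction $J(\{\pi^k\}_{k=1}^K, T) = J(\{\pi^k, \forall k\}, \bF, T)$, since the actually executed policies interact with the true (but unknown) environment characterized by the true transitions $P_n$ and the true preference matrix $\bF$. I would then insert and subtract the two reference values $J(\{\tilde{\pi}^k, \forall k\}, \{\tilde{\bF}^k, \forall k\}, T)$ and $J(\{\tilde{\pi}^k, \forall k\}, \bF, T)$, which splits the regret into three pieces: an optimism gap $J(\pi^{opt}, T) - J(\{\tilde{\pi}^k, \forall k\}, \{\tilde{\bF}^k, \forall k\}, T)$, plus the two terms $Term_1$ and $Term_2$ as claimed.

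The substantive step is then to show that the optimism gap is non-positive with probability at least $1-2\epsilon$. I would condition on the joint high-probability event guaranteed by Lemma~\ref{lemma:good_set} (the true transitions $P_n$ lie in the confidence sets $\mathcal{P}_n^k$ for every $n$ and $k$) and the Hoeffding-type event from Step~4 of Section~\ref{sec:estimated-preference} (the biased estimator satisfies $\tilde{\bF}_n^{\star,k}(\I_{s_n}, \I_{s_\star}) \geq \bF_n^{\star}(\I_{s_n}, \I_{s_\star})$ uniformly). Using Proposition~\ref{lemma:bounded_Q}, the latter event implies $\tilde Q_n^k(s) \geq Q_n(s)$ for every $n,s$. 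On the intersection of these events, the true model parameters $(P_n, Q_n)$ are themselves a feasible choice inside the ELP (\ref{eq:ELP}) at every episode $k$, so the ELP optimum $\omega^{k,\ast}$ dominates the occupancy measure realised by $\pi^{opt}$ under the true dynamics. Combined with Lemma~\ref{lemma:preference_LP}, this gives $J(\pi^{opt}, T) \leq J(\{\tilde{\pi}^k, \forall k\}, \{\tilde{\bF}^k, \forall k\}, T)$, making the optimism gap non-positive. A union bound across the two events (each holding with probability at least $1-\epsilon$) yields the $1-2\epsilon$ guarantee.

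Once optimism has been established, the inequality collapses to $Reg \leq Term_1 + Term_2$ exactly as stated, without any further calculation. The main obstacle I foresee lies in the second step, specifically in rigorously pushing the optimism from the relaxed ELP back to the offline index policy $\pi^{opt}$. Because $\pi^{opt}$ is itself an index policy derived from the LP (\ref{eq:original_LP})--(\ref{eq:original_LP_constraint3}) rather than the original \RMABPF, the argument must proceed in two stages: the Whittle-style relaxation gives $J(\pi^{opt}, T) \leq J_{\mathrm{LP}}^{\mathrm{true}}$, and optimism for the confidence-set/overestimated-preference model then gives $J_{\mathrm{LP}}^{\mathrm{true}} \leq J(\{\tilde{\pi}^k\}, \{\tilde{\bF}^k\}, T)$. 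Handling the subtleties at this juncture — in particular that $Q_n(s) \in [-1, +1]$ (Proposition~\ref{lemma:bounded_Q}) is signed rather than non-negative, so overestimation of $\bF$ genuinely translates into overestimation of the LP objective via the monotonicity of $Q_n$ in $\bF_n^\star$ — is the delicate part of the argument.
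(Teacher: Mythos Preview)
Your proposal is correct and follows the same add-and-subtract plus optimism route as the paper. The paper's appendix proof introduces the identical three-term decomposition (with your optimism gap labelled $Term_0$) and disposes of $Term_0\le 0$ via a separate Lemma (``Optimism'') showing that the LP relaxation upper-bounds $J(\pi^{opt})$; the confidence-set and preference-overestimation ingredients you invoke are only stated in the main text rather than spelled out in the appendix, so your two-stage argument (relaxation $J(\pi^{opt})\le J_{\mathrm{LP}}^{\mathrm{true}}$, then feasibility of the true model in the ELP giving $J_{\mathrm{LP}}^{\mathrm{true}}\le J(\{\tilde{\pi}^k\},\{\tilde{\bF}^k\},T)$) is actually more explicit than the paper's own presentation.
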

Specifically, $Term_1$ is the regret due to the preference estimation error between the true preference $\bF$ and estimated ones $\tilde{\bF}^k, \forall k$ (Section~\ref{sec:estimated-preference}). $Term_2$ is the performance gap between the policy $\{\tilde{\pi}^k, \forall k\}$ in the optimistic plausible MDP and the learned index policy $\{\pi^k, \forall k\}$ for the true MDP (Section~\ref{sec:newLP}). 
The inequality holds due to the optimism brought by upper confidence ball search (Section~\ref{sec:estimated-transition}) and the preference overestimation (Section~\ref{sec:estimated-preference}), and the fact that the LP in \eqref{eq:original_LP}-\eqref{eq:original_LP_constraint3} provides an upper bound for the optimal solution of \RMABPF in \eqref{eq:RMAB} (Section~\ref{sec:newLP}).

\textbf{Bounding $Term_1$.} We first bound $Term_1$, i.e., the regret caused by online preference learning. 
\begin{lemma}\label{lem:term1}
With probability $1-2\epsilon$, we have $Term_1\leq 2c_1\sqrt{2N^2T|\cS|\ln\frac{4|\cS||\cA|NT}{\epsilon}}.$ 
\end{lemma}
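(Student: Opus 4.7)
\textbf{Proof plan for Lemma~\ref{lem:term1}.} My plan is to first rewrite $Term_1$ as a sum of per-visit preference gaps through the LP representation in Lemma~\ref{lemma:preference_LP}, then control each summand by Lipschitz continuity of the logit map $x\mapsto\ln\tfrac{x}{1-x}$ together with the Hoeffding-type radii built in Section~\ref{sec:estimated-preference}, and finally aggregate via a Cauchy--Schwarz / pigeonhole argument in the spirit of the classical UCRL regret analysis.

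By Lemma~\ref{lemma:preference_LP}, the expected cumulative reward of any fixed policy under a preference matrix $\bF'$ equals the expected sum of its preference--reference values; since both terms in $Term_1$ evaluate the common policy $\{\tilde\pi^k\}$ on the same realized trajectory, differencing produces the clean identity
\begin{align*}
Term_1 \;=\; \mathbb{E}\!\left[\sum_{k=1}^{K}\sum_{h=1}^{H}\sum_{n=1}^{N}\bigl(\tilde Q_n^k(S_n^{k,h}) - Q_n(S_n^{k,h})\bigr)\right].
\end{align*}
The derivative of $x\mapsto\ln\tfrac{x}{1-x}$ equals $\tfrac{1}{x(1-x)}$, so the mean value theorem then yields the pointwise Lipschitz bound
\begin{align*}
|\tilde Q_n^k(s) - Q_n(s)| \;\leq\; \frac{|\tilde\bF_n^{\star,k}(\I_s,\I_{s_\star}) - \bF_n^\star(\I_s,\I_{s_\star})|}{\bF_n^\star(\I_s,\I_{s_\star})(1-\bF_n^\star(\I_s,\I_{s_\star}))}.
\end{align*}
On the good event from Lemma~\ref{lemma:good_set} (and its preference analogue), Step~4 of Section~\ref{sec:estimated-preference} guarantees $|\tilde\bF_n^{\star,k}-\bF_n^\star|\leq 2\delta$ with Hoeffding width $\delta=\sqrt{\tfrac{1}{2\max\{C^{k-1}_{n\star}(s,s_\star),1\}}\ln\tfrac{4|\cS||\cA|NT}{\epsilon}}$, since the Hoeffding radius already bounds the unbiased empirical (or inferred) estimate and the overestimation bonus has the same order. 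Combining with the Lipschitz bound and absorbing the worst-case denominator into $c_1=\max_{i,j}\tfrac{2}{\bF(i,j)(1-\bF(i,j))}$ produces a per-step bound of the form $c_1\delta$.

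The final step is to sum $c_1\delta$ across the $KHN=TN$ decision tuples. Applying Cauchy--Schwarz together with the standard UCRL pigeonhole estimate $\sum_{k,h}\mathds{1}\{S_n^{k,h}=s\}/C^{k-1}_{n\star}(s,s_\star)=\tilde{\mathcal{O}}(1)$, summed over the $N|\cS|$ arm--state pairs, I would obtain
\begin{align*}
\sum_{k=1}^{K}\sum_{h=1}^{H}\sum_{n=1}^{N} c_1\delta \;\leq\; c_1\sqrt{\,2\,N^2\,T\,|\cS|\,\ln\tfrac{4|\cS||\cA|NT}{\epsilon}\,},
\end{align*}
and the remaining factor~$2$ from the overestimation bonus delivers the stated inequality. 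The main obstacle I anticipate is handling the ``inference'' branch of Step~3: when $\tilde\bF_n^{\star,k}$ is produced by~\eqref{eq:inference} rather than by a direct counter, its error involves the Lipschitz constant $L=1.3$ of Lemma~\ref{lemma:infer} and two auxiliary Hoeffding radii indexed by intermediate arms, and I must argue that \DOPL's $\min$-selection between the direct and inferred widths never picks a branch that degrades the $C^{k-1}_{n\star}(s,s_\star)^{-1/2}$ rate, so that the Cauchy--Schwarz step still telescopes over visits to a single $(n,s,s_\star)$ counter rather than proliferating across intermediate arms and inflating the $N|\cS|$ factor inside the square root.
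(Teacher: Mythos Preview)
Your proposal tracks the paper's proof closely: both rewrite $Term_1$ through the preference-based LP objective as $H\sum_{k,n}\langle\omega_n^k,\,Q_n(\tilde\bF^k)-Q_n(\bF)\rangle$, invoke the Lipschitz bound on $x\mapsto\ln\tfrac{x}{1-x}$ (the paper states it as a first-order Taylor estimate, Lemma~\ref{lemma:Q}) to extract the factor $c_1=\max_{i,j}\tfrac{2}{\bF(i,j)(1-\bF(i,j))}$, split $\tilde\bF-\bF=(\tilde\bF-\hat\bF)+(\hat\bF-\bF)$ to produce the factor of~$2$, and then sum the resulting confidence widths. The obstacle you anticipate regarding the inference branch is dissolved in the paper by the one-line bound $d_n^k(s)=\min(\delta^{\text{inf}}_{s,s_\star},\delta_{s,s_\star})\leq\delta_{s,s_\star}$, so the inferred width is simply discarded when upper-bounding and no proliferation over intermediate arms occurs; the paper then passes from the comparison counter $C_{n\star}^k(s,s_\star)$ to the visit counter $Z_n^{t}(s,a)$ and closes with the elementary pigeonhole inequality $\sum_k w_k/\sqrt{W_k}\leq 2\sqrt{2}\sqrt{W_T}$ followed by Jensen over $(s,a)$, rather than a Cauchy--Schwarz step.
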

Bounding $Term_1$ requires us first to characterize the impact on the preference-reference term $Q_n(s), \forall n, s$ made by the preference estimation error, and then to bound the inner product
$\sum_{k=1}^K\sum_{n=1}^N\langle  \omega_n^k, \tilde{Q}_n^k-Q_n^k \rangle$, with $\omega_n^k$ and $Q_n^k$ being the vectors containing all $\{\omega_n^k(s,a,s^\prime), \forall s, a, s^\prime\}$ and $\{Q_n^k(s), \forall s\}$, respectively.

\textbf{Bounding $Term_2$.} Next we bound $Term_2$, i.e., the regret due to the direct index policy.
\begin{lemma}\label{lem:term2}
With probability $1-2\epsilon$, we have $Term_2\leq\frac{4B}{1-D}\sqrt{2N^2T|\cS|\ln\frac{4|\cS||\cA|NT}{\epsilon}}.$
\end{lemma}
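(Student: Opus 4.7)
The plan is to bound $Term_2 = J(\{\tilde{\pi}^k\}, \bF, T) - J(\{\pi^k\}, \bF, T)$, which captures the performance gap between the idealized policy $\tilde{\pi}^k$ (computed from the ELP \eqref{eq:ELP} with confidence-set transitions and overestimated preferences) and the executed policy $\pi^k$ on the true environment. Since both sides carry the same overestimated preference values, $Term_2$ isolates the error attributable to transition estimation and to the restriction to hard-constrained direct index policies. Accordingly, the proof combines a UCRL-style simulation argument with a mixing-time analysis driven by the ergodicity coefficient $D$ under the uniform global attractor assumption stated at the end of Section~\ref{sec:newLP}.

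First I would invoke a performance-difference / simulation identity. For any fixed stationary policy, the expected reward gap between two transition kernels $P$ and $\tilde P$ decomposes into a sum over epochs of the inner product between the kernel difference and the state-value function. On the good event of Lemma~\ref{lemma:good_set}, $\tilde P_n^k$ and $P_n$ both lie in $\mathcal{P}_n^k$, so the $L^1$ divergence at each $(s,a)$ is controlled by the confidence width $\delta_n^k(s,a)$ in \eqref{eq:confidence_ball}. The key non-trivial step is bounding the span of the value function: naively it would scale with the episode length $H$, but under the uniform global attractor the Markov chain induced by the direct index policy contracts geometrically with rate $D$, so the effective horizon becomes $1/(1-D)$. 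Since at most $B$ arms contribute reward per epoch, the per-epoch reward is bounded by $B$, and the overall span is $O(B/(1-D))$, which is the origin of the coefficient $4B/(1-D)$.

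Next I would aggregate the per-episode bounds over $K=T/H$ episodes. Plugging in $\delta_n^k(s,a)$ from \eqref{eq:confidence_ball} and applying the standard UCRL pigeonhole inequality
\begin{align*}
\sum_{k=1}^K \sum_{n,s,a} \omega_n^{k,*}(s,a)\,\frac{1}{\sqrt{\max\{Z_n^{k-1}(s,a),1\}}} \;=\; O\!\left(\sqrt{N T |\cS||\cA|}\right),
\end{align*}
together with the Hoeffding logarithmic radius $\sqrt{\ln(4|\cS||\cA|NT/\epsilon)}$, yields the target bound $\frac{4B}{1-D}\sqrt{2 N^2 T |\cS|\ln(4|\cS||\cA|NT/\epsilon)}$. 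The binary-action factor $|\cA|=2$ is absorbed into numerical constants, and the extra $N$ inside the radical accounts for the aggregate transition estimation error across all $N$ arms.

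The main obstacle is the mixing step. A naive treatment of the value-function span produces an $O(H)$ per-episode factor and hence a vacuous $O(T)$ contribution from this term alone. Replacing $H$ by the ergodicity-based effective horizon $1/(1-D)$ requires a coupling or Dobrushin-type contraction between the Markov chains induced by the direct index policy under $P$ and under the optimistic kernel $\tilde P^k$, plus a uniform-in-$k$ control that leverages the uniform global attractor assumption so that the contraction constant does not deteriorate across episodes with $k$-dependent estimated transitions. Once this geometric-contraction step is in place, the remaining aggregation is the textbook UCRL pigeonhole computation, and the stated bound follows after routine bookkeeping of constants.
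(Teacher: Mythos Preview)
Your proposal is correct and follows essentially the same route as the paper. The paper works with the average-reward Bellman equation for the global $N$-arm system, bounds the span of the relative value function $F_{\tilde{\Lambda}}$ by $2B/(1-D)$ via the ergodicity coefficient (citing Lemma~8 of \cite{xiong2022Nips}), telescopes the per-episode regret into $\sum_h [P_{\tilde{\Lambda}}F_{\tilde{\Lambda}}](\bs_h)-F_{\tilde{\Lambda}}(\bs_{h+1})$ plus a boundary term of size $2B/(1-D)$, and then aggregates with the UCRL pigeonhole sum exactly as you describe; your ``Dobrushin-type contraction'' framing of the span bound is the same mechanism stated more abstractly.
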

Bounding $Term_2$ requires us to decompose the regret into each episode. The key is to leverage the ergodicity of the underlying MDP of restless arms and bound the gap between stationary distribution and the true number of visits of each state-action pair, related to the instantaneous constraint $B$.

\begin{figure}[htbp]
\centering
\vspace{-0.1in}
\includegraphics[width=\linewidth]{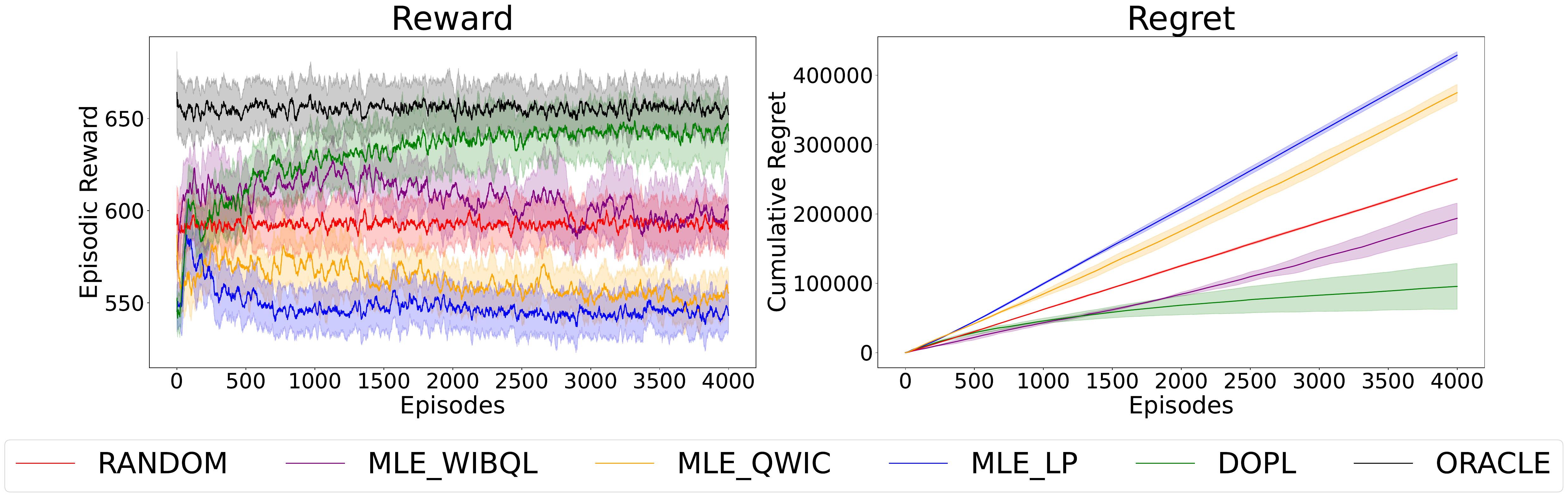}
\caption{Comparisons of episodic reward and cumulative regret under App-Marketing environment.}
\label{fig:app_marketing}
\vspace{-0.15in}
\end{figure}

\section{Experiments}\label{sec:exp}

In this section, we evaluate the efficacy of \DOPL for \RMABPF via real-world applications. The experiment code is available at \url{https://github.com/flash-36/DOPL}.

\textbf{Baselines.} We consider three classes of baselines: (i) \textit{Oracle}, which is the optimal index policy \citep{verloop2016asymptotically, zhang2021restless}, derived from solving (\ref{eq:original_LP})-(\ref{eq:original_LP_constraint3}) using full knowledge of the reward function and transition kernels, serving as a performance upper bound. (ii) \textit{MLE based algorithms}, which employ maximum likelihood estimation (MLE) under the BT model to estimate rewards and then apply different learning algorithms, including \textit{MLE$\_$WIBQL} that utilizes a Whittle index-based Q-learning algorithm \citep{avrachenkov2022whittle}, \textit{MLE$\_$LP} that estimates transition kernels from observed state transitions and solves the linear program (\ref{eq:original_LP})-(\ref{eq:original_LP_constraint3}) for index policy derivation, and \textit{MLE$\_$QWIC} that integrates MLE with the Q-learning algorithm from \cite{fu2019towards}. (iii) \textit{RANDOM}, in which the DM activates each arm randomly, without preference feedback.

\textbf{Synthetic Environment.} We consider a synthetic environment named \textit{App Marketing} that mimics user engagement and retention dynamics. A detailed description is provided in Appendix~\ref{sec:App_marketing_description} and the transition kernel and latent reward function specifications are provided in Appendix~\ref{sec:App_marketing_appendix}. 

\textbf{Real-world Environments.} We consider two real-world applications for \RMABPF: (i) \textit{Continuous Positive Airway Pressure (CPAP) Therapy.} The CPAP environment \citep{herlihy2023planning,li2022towards,wang2024online} addresses patient adherence to treatment for obstructive sleep apnea. A detailed description is provided in Appendices~\ref{sec:CPAP_description} and~\ref{sec:CPAP_appendix}. 
The objective is to maximize cumulative patient adherence and we measure this by tracking these latent  rewards.  
(ii) \textit{Maternal Healthcare.} ARMMAN \citep{biswas2021learn,killian2022restless} addresses a maternal healthcare intervention problem via a binary-action RMAB. For more details refer Appendices~\ref{sec:Armman_description} and~\ref{sec:Armman_appendix}.

\begin{figure}[t]
\centering
\includegraphics[width=\linewidth]{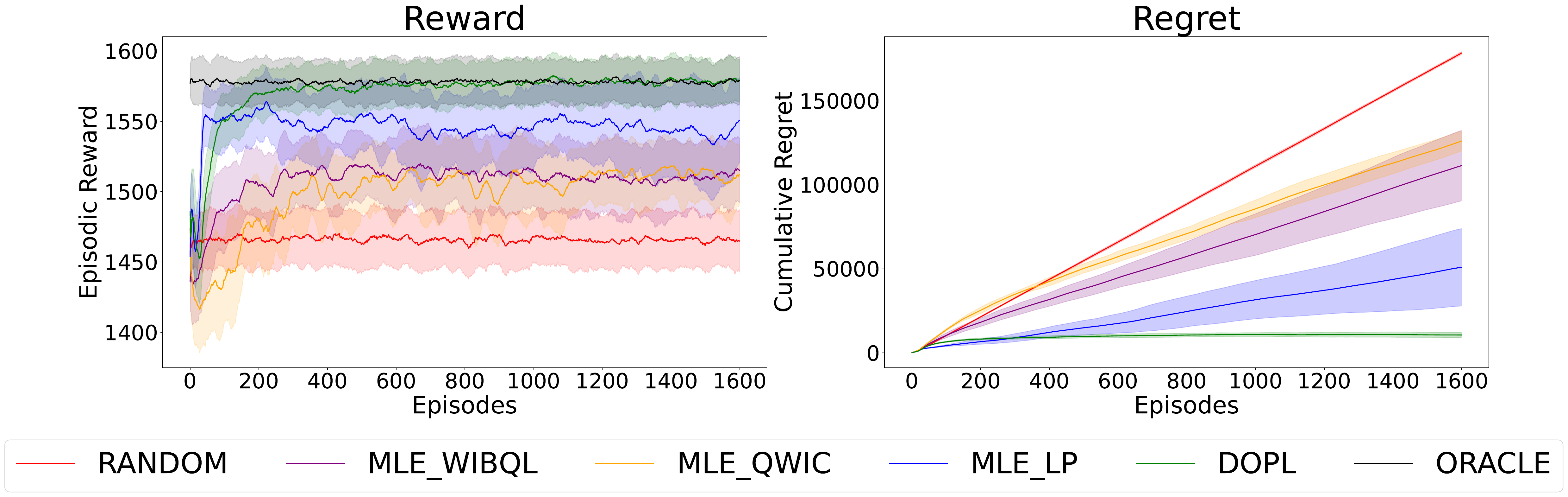}
\vspace{-0.2in}
\caption{Comparisons of episodic reward and cumulative regret under CPAP environment.}
\label{fig:cpap}
\vspace{-0.15in}
\end{figure}

\begin{figure}[t]
\centering
\includegraphics[width=\linewidth]{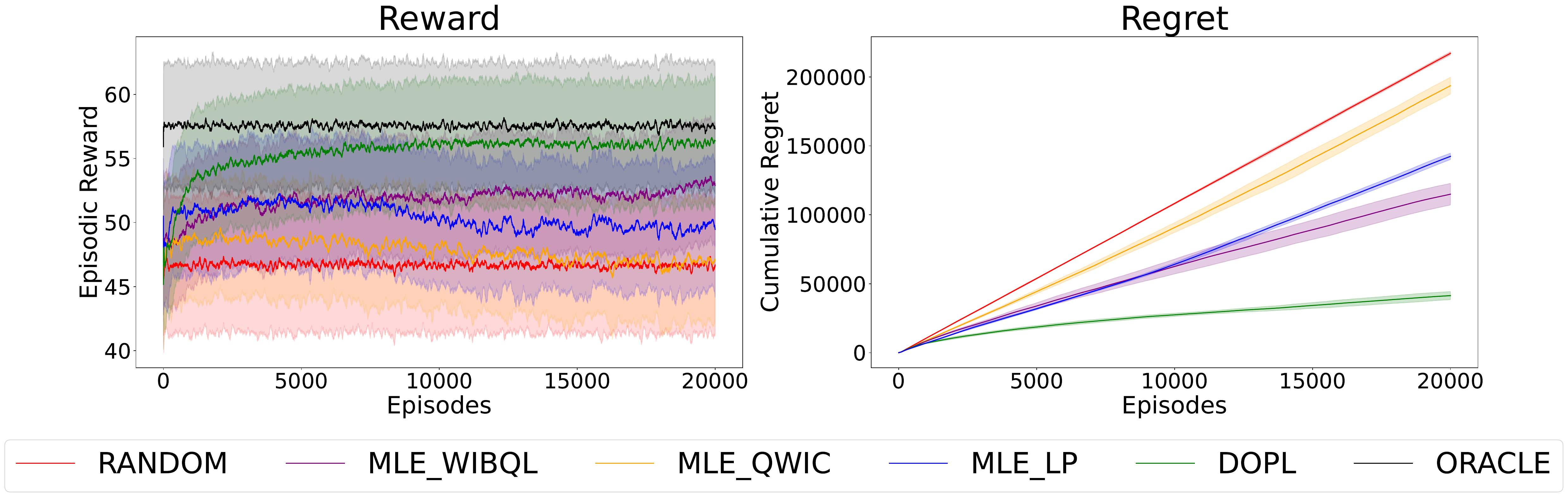}
\vspace{-0.2in}
\caption{Comparisons of episodic reward and cumulative regret under ARMMAN environment.}
\label{fig:armmman}
\vspace{-0.2in}
\end{figure}

\textbf{Observations.} As shown in Figures~\ref{fig:app_marketing}~\ref{fig:cpap}~\ref{fig:armmman} (left), \DOPL significantly outperforms all considered baselines and reaches close to the oracle. In addition, \DOPL is much more computationally efficient since it can \textit{directly} leverage the preference feedback for decision making, while the considered baselines require a reward estimation step via MLE in the presence of preference feedback, which can be computationally expensive. Finally, our \DOPL yields a sublinear regret as illustrated in Figures~\ref{fig:app_marketing}~\ref{fig:cpap}~\ref{fig:armmman} (right), consistent with our theoretical analysis (Theorem~\ref{thm:main}), while none of these baselines yield a sublinear regret  in the presence of preference feedback.

\section{Conclusion}
In this paper, we presented a novel restless bandits model, dubbed as \RMABPF, in the presence of preference feedback, and proposed \DOPL, an episodic reinforcement learning algorithm that can efficiently explore the unknown environment, adaptively collect preference data in an online manner, and directly leverage the preference feedback for decision-making. We proved that \DOPL achieves a sublinear regret and demonstrated its utility via experiments in real-world applications.

\subsection*{Limitations}
While the proposed study on the restless multi-armed bandits (RMAB) introduces a novel approach by incorporating preference feedback instead of scalar rewards, it has several limitations. The shift to preference feedback, although potentially more accessible than rewards, inherently provides less information, which implies that greater amounts of data are needed for online training.  Most existing RL algorithms and preference learning methods, such as RLHF and DPO, are typically designed for offline settings with pre-collected datasets, whereas the proposed RMAB with preference feedback (\RMABPF) model operates in an online context.   The empirical estimation of preferences, crucial for the proposed Direct Online Preference Learning (DOPL) algorithm, requires frequent activation of arms in various states for online comparisons, which would require the development of a platform for exercising  options and collecting feedback from users as the algorithms learn.

\subsubsection*{Acknowledgments}
Research was funded in part by grants NSF CNS 2148309, 2312978 and 2337914. All all opinions expressed are those of the authors, and need not represent those of the sponsoring agencies

\bibliography{refs,refs2}
\bibliographystyle{iclr2025_conference}

\appendix
\section{Related work}

\textbf{Dueling Bandit.}
Dueling bandit generalizes the classic multi-armed bandit problem by pulling two arms to duel at a time and only receives the preference feedback \citep{yue2012k}.  Introduced by \cite{yue2012k}, the initial algorithms such as Interleaved Filter laid the groundwork for this field. Subsequent developments include the Double Thompson Sampling (DTS) \citep{wu2016double}, which improved empirical performance and robustness. Theoretical advancements have been made in \cite{komiyama2015regret} which analyzed regret bounds. Dueling bandit has been extensively studied, under various objectives and generalizations, and applied to various applications (see \cite{saha2022versatile} for detailed discussions).  
 Recently, \cite{saha2022versatile} made notable contributions by proposing a reduction from dueling bandits to multi-armed bandits, enabling optimal performance in both stochastic and adversarial environments. Their algorithm achieves an optimal regret bound against the Condorcet-winner benchmark and demonstrates robustness under adversarially corrupted preferences. Another significant work  \cite{saha2022efficient} addresses the contextual dueling bandit problem, focusing on regret minimization under realizability. Their algorithm achieves optimal regret rates for a new performance measure called best response regret, resolving an open problem regarding oracle-efficient, regret-optimal algorithms for contextual dueling bandits posed by \cite{dudik2015contextual}. These advancements highlight the progress in dueling bandit research.

\textbf{Reinforcement Learning from Human Feedback.}
Reinforcement Learning from Human Feedback (RLHF) integrates human preferences into the RL framework to guide the agent's learning process, overcoming the challenge of specifying appropriate reward functions. A pioneering work by \cite{christiano2017deep} demonstrated that using human comparisons to train agents significantly improved their performance in complex tasks. Building on this, \cite{ibarz2018reward} combined demonstrations with human feedback, accelerating the learning process and reducing the amount of required feedback. 
Most existing RLHF work \citep{chen2022human,zhu2023principled,du2024exploration} typically assumes a linear reward function, i.e., there exists a known feature mapping to specify the feature vectors of state-action pairs. Later on, a line of work proposed algorithms that directly learn from preferences without explicit reward modeling \citep{rafailov2024direct, zhao2023slic,azar2024general,ethayarajh2024kto,tang2024generalized}.
Need to mention that  most of the latest RLHF and DPO based methods are considered offline and provided with a given and pre-collected human preference (and transition) dataset, leading to the problem of overoptimization \citep{xiong2023iterative}.
Recently, \cite{saha2023dueling} proposed Dueling RL, which utilizes binary preferences over trajectory pairs to guide learning, providing robust performance in environments with non-Markovian rewards and unknown transition dynamics. These advancements highlight the potential of RLHF in creating more adaptable and user-aligned reinforcement learning systems \citep{rafailov2024direct}. There is also a line of research from the theoretical perspective of RLHF, showing the efficiency of RLHF comparable to conventional RL. Please see \cite{wang2024rlhf} for detailed discussions.

\textbf{Restless Multi-Armed Bandits.}
RMAB was first introduced in \cite{whittle1988restless}, and has been widely studied, see \cite{xiong2022Nips} and references therein. In particular, RL algorithms have been proposed for RMAB with unknown transitions. Colored-UCRL2 is the state-of-the-art method for {online RMAB with $\tilde{\mathcal{O}}(\sqrt{T\ln T})$ regret. To address the exponential computational complexity of colored-UCRL2, low-complexity RL algorithms have been developed. For example, \cite{wang2020restless} proposed a generative model-based algorithm with $\tilde{\mathcal{O}}(T^{2/3})$ regret, and \cite{xiong2022reinforcement,xiong2022Nips} designed index-aware RL algorithms for both finite-horizon and infinite-horizon average reward settings with $\tilde{\mathcal{O}}(\sqrt{T})$ regret. However, the aforementioned existing literature on RMAB focus on the stochastic setting, where the reward functions are stochastically sampled from a fixed distribution, either known or unknown, and at each decision epoch, the reward is observed by the decision marker. To our best knowledge, this work is the first to study RMAB in the presence of preference feedback, where the decision marker only observes pairwise preference signals.

\section{Motivating Examples for \RMABPF}\label{sec:examples-app}

In the following, we provide some motivating examples that can be modeled as a \RMABPF problem, i.e., the RMAB problem in the presence of preference feedback. 

\subsection{App Marketing} \label{sec:App_marketing_description}

\textbf{Scenario}: Imagine you're a marketing manager for a popular mobile app that offers various features to users. You're tasked with the job of maximizing user engagement and retention. Each day, you have a limited budget to send personalized push notifications or in-app messages to a subset of users to boost engagement and retention. Due to resource constraints, you can only target a fixed number of users daily.

\textbf{State Modelling}: Each user is modeled as a "restless" arm whose engagement state evolves over time based on whether they receive targeted marketing efforts.

$
s_1:$ User at risk of churn—has not used the app in over a month.

$
s_2:$ Low engagement—has not used the app in over a week.

$
s_3:$ Moderately engaged—uses the app a few times a week.

$
s_4:$ Highly engaged—uses the app daily, interacts with multiple features.

\textbf{Preference Feedack}: Instead of directly measuring the exact engagement metrics (which can be noisy or delayed), you can obtain preference feedback by comparing the relative engagement levels among the users you targeted. For example, after sending out notifications, you observe which users opened the app, how much time they spent, or how they interacted with the app features. 

This preference feedback is modeled using the BT model:
\begin{align*}
   \text{Pr}(\text{User A engages with the app more than User B} \mid s_A, s_B) = \frac{\exp(r_A(s_A))}{\exp(r_A(s_A)) + \exp(r_B(s_B))}
\end{align*}
where $r_A(s_A)$ and $r_B(s_B)$ represent the latent engagement and retention potentials of Users A and B based on their states.

\textbf{Objective}: The goal is to maximize overall user engagement and retention without needing to quantify the exact contribution of each user (which can be challenging due to data privacy concerns or measurement noise). By focusing on pairwise preference feedback among the targeted users, you can iteratively improve your targeting strategy to select users who are more likely to respond positively to your marketing efforts.

\subsection{CPAP Treatment} \label{sec:CPAP_description}

\textbf{Scenario}: You are a healthcare provider managing a large cohort of patients diagnosed with obstructive sleep apnea (OSA). The standard treatment for OSA is Continuous Positive Airway Pressure (CPAP) therapy, which requires patients to use a CPAP machine during sleep every night. However, patient adherence to CPAP therapy is often low due to discomfort, inconvenience, or lack of immediate perceived benefits. With limited resources (e.g., time, staff), you can provide personalized follow-up support or interventions to only a select number of patients each week to encourage adherence.

\textbf{State Modelling}: Each patient is modeled as a "restless" arm whose adherence state evolves over time based on whether they receive follow-up support. The state space for each patient considers various factors affecting adherence:

$
s_1:$ Low Adherence—Uses CPAP sporadically with short usage durations.

$
s_2:$ Moderate Adherence—Uses CPAP most nights but not consistently for the full duration.

$
s_3:$ High Adherence—Uses CPAP every night for the recommended duration ($\geq$ 4 hours per night).

\textbf{Preference Feedack}: Quantifying exact adherence levels for each patient can be challenging. Instead, you obtain preference feedback by comparing relative adherence among the patients you have intervened with. 

This preference feedback is modeled using the BT model:
\begin{align*}
   \text{Pr}(\text{Patient A is more adherent than Patient B} \mid s_A, s_B) = \frac{\exp(r_A(s_A))}{\exp(r_A(s_A)) + \exp(r_B(s_B))}
\end{align*}
where $r_A(s_A)$ and $r_B(s_B)$ represent the latent adherence potentials of Patients A and B based on their current states.

\textbf{Objective}: The goal is to maximize overall patient adherence to CPAP therapy without needing precise adherence measurements for each individual. By focusing on pairwise preference feedback among the patients you have provided interventions to, you aim to:
\begin{itemize}
    \item Optimize Resource Allocation: Identify and support patients who are more likely to improve their adherence with additional intervention.
    \item Improve Health Outcomes: Enhance the effectiveness of CPAP therapy across the patient population by increasing overall adherence rates.
\end{itemize}

\subsection{Maternal Healthcare} \label{sec:Armman_description}

\textbf{Scenario}: You are collaborating with ARMMAN, a non-governmental organization focused on improving maternal and child health in underserved communities by delivering vital health information through mobile technology. The program involves sending automated voice calls to expecting and new mothers, providing them with timely healthcare guidance. However, due to limited resources such as call center capacity and funding, you can only reach out to a subset of mothers each day. The challenge is to select which mothers to contact to maximize overall engagement with the program, thereby enhancing health outcomes.

\textbf{State Modelling}: Each mother(or benefeciary) is modeled as a "restless" arm in a Markov Decision Process (MDP) with three ordered states representing their level of engagement:

$ s_1:$ Lost Cause—beneficiaries listening to less than 5\% of the content of the automated calls.

$ s_2:$ Persuadable—beneficiaries listening to between 5\% and 50\% of the content of the automated calls.

$ s_3:$ Self-motivated—beneficiaries listening to more than 50\% of the content of the automated calls.

\textbf{Preference Feedack}: Having to measure and keep track of the exact engagement levels of each beneficiary can be tedious or even infeasible with scale. Thus you could just rank the engagement levels among the beneficiaries you reached out to in order to get your preference feedback signal.

This preference feedback is modeled using the BT model:
\begin{align*}
   \text{Pr}(\text{Beneficiary A is more engaged than Beneficiary B} \mid s_A, s_B) = \frac{\exp(r_A(s_A))}{\exp(r_A(s_A)) + \exp(r_B(s_B))}
\end{align*}
where $r_A(s_A)$ and $r_B(s_B)$ represent the latent engagement potentials of Beneficiaries A and B based on their states.

\textbf{Objective}: The goal is to maximize overall beneficiary engagement and retention without needing to quantify the exact contribution of each user (which can be challenging due to data privacy concerns or measurement noise). By focusing on pairwise preference feedback among the targeted users, you can iteratively improve your targeting strategy to select users who are more likely to respond positively to your marketing efforts.

\subsection{Sports Psychologist}

\textbf{Scenario}: Say you're a sports psychologist at a top-tier football club in the premier league and in a workday you can counsel at most 5 players. You need to choose which 5 players from the squad to counsel such that the overall team performance is maximized. Each  player is modeled as a "restless" arm, whose mental and physical state evolves with time based on whether the player received counsel or not.

\textbf{State Modelling}: The following is one possible simplified state space definition that takes into account various factors that might affect player performance such as amount of sleep had last night, calorie intake, mental sharpness, sprint speed:

$
s_1:$ Over 8 hrs of sleep; Met calorie intake level; Feeling mentally sharp; Sprint speed within 10\% of personal best for the season.

$
s_2:$ Less than 8 hrs of sleep; Met calorie intake level; Feeling mentally sharp; Sprint speed within 10\% of personal best for the season.

\vdots
$
s_{16}:$ Less than 8 hrs of sleep; Did not meet calorie intake level; Not feeling mentally sharp; Sprint speed less than 90\% of personal best for the season.

\textbf{Preference Feedack}:  The sports psychologist can rank the players he counselled based on their performance levels on the next game/training session  in order to get the preference feedback signal needed for decision making.

This preference feedback is modeled using the BT model:
\begin{align*}
   \text{Pr}(\text{Player A performs better than Player B} \mid s_A, s_B) = \frac{\exp(r_A(s_A))}{\exp(r_A(s_A)) + \exp(r_B(s_B))}
\end{align*}
where $r_A(s_A)$ and $r_B(s_B)$ represent the latent measures of how much a player contributes to team performance.

\textbf{Objective}: Here the  objective  of maximizing team performance is attained without having to specifically identify how much each player contributed to the team performance (which can be a difficult thing to quantify). The sports psychologist only needs to assess which player contributed more in comparison with one another to achieve the same end goal.

\section{Missing details for the main paper}

\subsection{Extended LP}\label{sec:extened_LP}
The detailed formulation of \textbf{ELP} is provided as follows.
\begin{align}\label{eq:extend_LP}
{\textbf{ELP}}(\{\tilde{P}_n^k(s^\prime|s,a)\}, \{\tilde{Q}^k_n(s)\},\omega^k):& \max_{\omega^k}\quad \sum_{n=1}^{N}\sum_{(s,a)\in\cS\times\cA} \sum_{s^\prime\in\cS}\omega_n^k(s,a, s^\prime)\tilde{Q}^k_n(s)\nonumber\allowdisplaybreaks\\
&\mbox{s.t.}\quad  \sum_{n=1}^{N}\sum_{(s,a)\in\cS\times\cA} \sum_{s^\prime\in\cS} a\omega^k_n(s,a,s^\prime)\leq B,\nonumber\allowdisplaybreaks\\
&~\sum_{(s,a)\in\cS\times\cA} \sum_{s^\prime\in\cS} \omega^k_n(s,a, s^\prime)=1, \quad \forall n\in\cN\nonumber\allowdisplaybreaks\\
&{\sum_{( s^\prime, a)\in\cS\times\cA}} \omega^k_n(s,a, s^\prime)=\sum_{(s^\prime, a^\prime)\in\cS\times\cA}\omega^k_n(s^\prime, a^\prime, s),\quad\forall { n\in\cN},
\nonumber\allowdisplaybreaks\\ 
&\qquad\frac{\omega^k_n(s,a,s^\prime)}{\sum_y \omega^k_n(s,a,y)}\!-\!(\hat{P}^k_n(s^\prime|s,a)\!+\!\delta^t_n(s,a))\leq 0,\nonumber\allowdisplaybreaks\\
&\qquad-\frac{\omega^k_n(s,a,s^\prime)}{\sum_y \omega^k_n(s,a,y)}\!+\!(\hat{P}^k_n(s^\prime|s,a)\!-\!\delta^t_n(s,a))\!\leq\! 0.
\end{align}

\subsection{Preference matrix and example}\label{sec:examples}
In the following, we first present the detailed compact form of the preference matrix $\bF$ and then provide an illustrating example. 

The compact form of the preference matrix $\bF\in(0,1)^{N|\cS|\times N|\cS|}$ is given as:
\begin{align*}
 \bF\!=\!  \overset{N|\cS|\times N|\cS|}{\overbrace{\left[\!\!
\begin{array}{c|c|c|c|c}
  \bF_1^1
        & \bF_1^2 &\cdots & \cdots & \bF_1^N\\
    \vdots & \bF_2^2 & \bF_2^3& \cdots & \vdots \\
     \vdots & \cdots &  \ddots & \bF_i^j\!=\!\left[\!\!\begin{array}{ccc}
         \frac{\exp(r_i(s_{1}))}{\exp(r_i(s_{1}))+\exp(r_j(s_{1}))}& \cdots &\frac{\exp(r_i(s_{1}))}{\exp(r_i(s_{1}))+\exp(r_j(s_{|\cS|}))} \\
         \vdots& \ddots & \vdots\\
         \frac{\exp(r_i(s_{|\cS|}))}{\exp(r_i(s_{|\cS|}))+\exp(r_j(s_{1}))}& \cdots& \frac{\exp(r_i(s_{|\cS|}))}{\exp(r_i(s_{|\cS|}))+\exp(r_j(s_{|\cS|}))}
    \end{array}\!\!\right] & \vdots \\
    \vdots& \cdots & \ddots & \cdots & \vdots \\
  \bF_N^1  & \cdots & \cdots & \cdots & \bF_{N}^N \\
\end{array}
\!\!\right] }},
\end{align*}
which contains $N\times N$ block matrix $\bF_m^n\in\mathbb{R}^{|\cS|\times|\cS|},$ $\forall m, n \in \cN$. In particular, the preference of arm $m$ at state $s_m$ over arm $n$ at state $s_n$ lies in the $((m-1)|\cS|+\sigma_{s_m})$-th row and $((n-1)|\cS|+\sigma_{s_n})$-th column of $\bF$, i.e.,
\begin{align}
    \bF((m-1)|\cS|+\sigma_{s_m}, (n-1)|\cS|+\sigma_{s_n})=\frac{\exp(r_m(s_m))}{\exp(r_m(s_m)+\exp(r_n(s_n))}.
\end{align}
For ease of understanding, we present a toy example as follows.

\begin{exmp}\label{example:1}
An illustrative example with 2 arms (i.e., $N=2$) is shown in~(\ref{eq:example}), where each arm has 2 states (i.e., $|\cS|=2$) and rewards are denoted as $r_1(s_1), r_1(s_2)$, and $r_2(s_1), r_2(s_2)$, respectively. Given the preference matrix in~(\ref{eq:example}), we have a reward order as $r_1(s_1)< r_2(s_1) < r_1(s_2) \leq r_2(s_2).$ Note that the comparison between the same arm and the same state always results in $0.50$ according to the Bradley-Terry model in~(\ref{eq:F_definition}).

\begin{align}\label{eq:example}
\bF=\begin{matrix}
r_1(s_1)\\ r_1(s_2) \\ r_2(s_1) \\r_2(s_2)
\end{matrix}
\overset{\begin{matrix}
r_1(s_1)& \!\!r_1(s_2) &\!\!r_2(s_1) &\!\!r_2(s_2)
\end{matrix}}{\begin{bmatrix}
0.50& 0.30 & 0.33 & 0.24\\
0.70& 0.50 & 0.54 & 0.43\\
0.67& 0.46 & 0.50 & 0.39\\
0.76& 0.57 & 0.61 & 0.50
\end{bmatrix}}
\end{align}
\end{exmp}

According to Example \ref{example:1}, the preference of arm $2$ at state $s_1$ over the arm $1$ at state $s_2$ can be located at the $3$rd row and the $2$nd column of $\bF$, i.e., $\bF(3,2)$, where $\sigma_{s_1}=1$ and $\sigma_{s_2}=2.$  

\subsection{Pseudocode of Online Preference Learning}\label{sec:algorithm_OPL}
The pseudocode of online preference learning in Section \ref{sec:estimated-preference} is presented as the following algorithm. In particular, we focus on the detailed procedures of the algorithm at the $k$-th episode,  $\forall k$.
\begin{algorithm}
\caption{Online Preference Learning for the $k$-th Epsiode}\label{alg:preference}
\begin{algorithmic}[1]
\REQUIRE Arms set  $\mathcal{N}$, confidence parameter $\epsilon \in (0, 1)$, reference arm $\star$ and reference state $s_\star$;
\ENSURE Estimating the true $\bF$;
\STATE Initialize: $C_{ij}^k(s_i, s_j)=C_{ij}^{k-1}(s_i, s_j)$ and $W_{ij}^k(s_i, s_j)=W_{ij}^{k-1}(s_i, s_j), \forall i, j\in\cN, s_j, s_j\in\cS$;
\FOR{$h = 1, 2, \dots, H$}
    \STATE At each time step, choose $B$ arms denoted as $\cA^h$ and perform $(B-1)$ duels randomly;
    
    \FOR{$i,j \in \cA^h$ do}
    \STATE $C_{ij}^k(s_i, s_j)\leftarrow C_{ij}^k(s_i, s_j)+1;$
    \STATE Observe $\alpha_h(i ~\text{over}~ j) = 1 - \alpha_h(j~\text{over}~i)$;
        \STATE Define $\mathds{1}_h(i,j) := \mathds{1}(\alpha_h(i~ \text{over}~j))$ and $W_{ij}^k(s_i, s_j) \leftarrow W_{ij}^k(s_i, s_j)+\sum_{\tau=1}^h \mathds{1}_\tau(i,j)$;     
    \ENDFOR
\ENDFOR
\STATE Empirical estimation of each element in $\bF$ is calculated according to $\hat{\bF}_i^{j,k}(\sigma_{s_i}, \sigma_{s_j}) := \frac{W_{ij}^k(s_i, s_j)}{C^k_{ij}(s_i, s_j)}$ with estimation error $\delta_{s_i, s_j}, \forall \I, j\in\cN, s_i, s_j\in\cS$;
\STATE Select one element $j$ from the $((\star-1)|\cS|+\I_{s_\star})$-th column of matrix $\hat{\bF}^k$m i.e., $\hat{\bF}^k(j, ((\star-1)|\cS|+\I_{s_\star}))$ with the smallest estimation error $\delta$;
\STATE Infer $\hat{\bF}_n^{\star,k+1, \text{inf}}(\I_{s_n}, \I_{s_\star}) $ with $\hat{\bF}^k(j, ((\star-1)|\cS|+\I_{s_\star}))$ according to \eqref{eq:inference} and keep the corresponding inference error $\delta^{\text{inf}}_{s_n, s_{\star}}$ $\forall n\in\cN, s_n\in\cS$; 
\STATE \DOPL selects the one with a smaller error as the desired estimated preference, and denote it as $\hat{\bF}_n^{\star, k+1}(\I_{s_n}, \I_{s_\star}), \forall n\in\cN, s_n\in\cS$ from the empirical estimator and the inference.
        \STATE Add a bonus term as $\tilde{\bF}_n^{\star, k+1}(\I_{s_n}, \I_{s_\star})\leftarrow \hat{\bF}_n^{\star, k+1}(\I_{s_n}, \I_{s_\star}) + \min\left(\delta^{\text{inf}}_{s_n, s_{\star}},\sqrt{\frac{1}{\max(2C_{n\star}^k(s_n, s_\star),1)}\ln\!\Big(\frac{4|\cS||\cA|N(k\!-\!1)H}{\epsilon}\Big)}\right)$;

\end{algorithmic}
\end{algorithm}

Algorithm \ref{alg:preference} contains the four key steps in in Section \ref{sec:estimated-preference}.
In particular, lines 2-9 in Algorithm \ref{alg:preference} correspond to \textbf{Step  1: Pariwise Comparison}, line 10 denotes the \textbf{Step 2: Empirical Preference Estimation}, lines 11-13 are the realization of \textbf{Step 3: Preference Inference}, and line 14  is the final  \textbf{Step 4: Overestimation for Exploration}. 

\subsection{Example of Lemma \ref{lemma:infer}}\label{sec:B-1}
To better understand the importance of Lemma \ref{lemma:infer}, we provide the following example. 

\begin{exmp}\label{example:B-1}
Assume we have $N=5$ arms (i.e., 1 , 2, 3, 4, 5) and at each time $B=4$ out of them can be pulled. Without loss of generality, the pulled arms are arm 1, arm 2, arm 3, and arm 4, and their current states are $s_1, s_2, s_3$ and  $s_4$. For pairwise comparisons, we can have a total $6$ comparisons and get $\hat{\bF}_1^2(\I_{s_1},\I_{s_2}), \hat{\bF}_1^3(\I_{s_1},\I_{s_3}), \hat{\bF}_1^4(\I_{s_1},\I_{s_4}), \hat{\bF}_2^3(\I_{s_2},\I_{s_3}), \hat{\bF}_2^4(\I_{s_2},\I_{s_4})$ and $\hat{\bF}_3^4(\I_{s_3},\I_{s_4})$with $\hat{\bF}_m^n(\I_{s_m},\I_{s_n})$ being defined as 
\begin{align*} \bF_m^n(\I_{s_m},\I_{s_n}) :=\bF((m-1)|\cS|+\I_{s_{m}},(n-1)|\cS|+\I_{s_{n}})=\frac{\exp(r_m(s_{m}))}{\exp(r_m(s_{m}))+\exp(r_n(s_{n}))}.
\end{align*}

However, Lemma \ref{lemma:infer} indicates we only need to do $B-1=3$ comparisons and infer the rest of them. For example, we randomly select the comparison between arm 1 and arm 2, arm 1 and arm 3, arm 1 and arm 4. Hence, we have   $\hat{\bF}_1^2(\I_{s_1},\I_{s_2})$, $\hat{\bF}_1^3(\I_{s_1},\I_{s_3})$, and $\hat{\bF}_1^4(\I_{s_1},\I_{s_4})$. Next, we can infer $\hat{\bF}_2^3(\I_{s_2},\I_{s_3})$, $\hat{\bF}_3^4(\I_{s_3},\I_{s_4})$, and $\hat{\bF}_2^4(\I_{s_2},\I_{s_4})$ as follows
\begin{align*}
    \hat{\bF}_2^{3,\text{inf}}(s_2, s_3)
    &=\frac{(1-\hat{\bF}_1^2(s_1,s_2))\hat{\bF}_1^3(s_1,s_3)}{(1-\hat{\bF}_1^2(s_1,s_2))\hat{\bF}_1^3(s_1,s_3)+(1-\hat{\bF}_1^3(s_1,s_3))\hat{\bF}_1^2(s_1,s_2)},\\
    \hat{\bF}_3^{4,\text{inf}}(s_3, s_4)
    &=\frac{(1-\hat{\bF}_1^3(s_1,s_3))\hat{\bF}_1^4(s_1,s_4)}{(1-\hat{\bF}_1^3(s_1,s_3))\hat{\bF}_1^4(s_1,s_4)+(1-\hat{\bF}_1^4(s_1,s_4))\hat{\bF}_1^3(s_1,s_3)},\\
    \hat{\bF}_2^{4,\text{inf}}(s_2, s_4)
    &=\frac{(1-\hat{\bF}_1^2(s_1,s_2))\hat{\bF}_1^4(s_1,s_4)}{(1-\hat{\bF}_1^2(s_1,s_2))\hat{\bF}_1^4(s_1,s_4)+(1-\hat{\bF}_1^4(s_1,s_4))\hat{\bF}_1^2(s_1,s_2)}.
\end{align*}
\end{exmp}

\section{Proofs of Lemmas in Section \ref{sec:algorithm}}
In this section, we provide the missing proofs of the lemmas in Section \ref{sec:algorithm}.
\subsection{Proof of Proposition \ref{lemma:Reward_reference}}

According to the Bradley-Terry model in (\ref{eq:F_definition}), we have the preference of arbitrary arm $n$ with arbitrary state $s_n$ over the reference arm $\star$ with reference state $s_\star$ as
\begin{align}
    \bF_n^\star(\sigma_s, \sigma_{s_\star})=\frac{\exp(r_n(s))}{\exp(r_n(s))+\exp(r_\star(s_\star))}.
\end{align}
With standard manipulation, we have
\begin{align}
    \exp(r_\star(s_\star))=\frac{1-\bF_n^\star(\sigma_s, \sigma_{s_\star})}{\bF_n^\star(\sigma_s, \sigma_{s_\star})}\exp(r_n(s)). 
\end{align}
Taking logarithms to both sides further leads to 
\begin{align}
    r_n(s)=r_\star(s_\star)+\ln\frac{\bF_n^\star(\sigma_s, \sigma_{s_\star})}{1-\bF_n^\star(\sigma_s, \sigma_{s_\star})},
\end{align}
with the fact that $\bF_\star^n(\sigma_{s_\star},\sigma_s)=1-\bF_n^\star(\sigma_{s},\sigma_{s_\star})$. This completes the proof.

\subsection{Proof of Proposition \ref{lemma:bounded_Q}}

Recall that the underlying reward for each arm and state $r_n(s)$ belongs to the range of $ [0, 1]$, we can easily find that any preference value in $\bF$ lies in the range of $[\frac{1}{1+e}, \frac{e}{1+e}]$ under the Bradely-Terry model in \eqref{eq:F_definition}. Defining an auxiliary function $Q(x)=\ln \frac{x}{1-x}$ and taking the derivative of $Q(x)$ with respect to $x$, and thus we have
\begin{align}
    \frac{dQ(x)}{dx}=\frac{d}{dx}\left[\ln\frac{x}{1-x}\right]=\frac{1}{x(1-x)}.
\end{align}
Notice that $\frac{dQ(x)}{x}>0$ for $x\in[\frac{1}{1+e}, \frac{e}{1+e}]$, and hence $Q(x)$ is monotonically increasing with $x$ in this domain. Substituting the endpoints into $Q(x)$, we can easily find that
$Q(x)\in[-1, 1]$. Since $\bF_n^\star(\I_s, \I_{s_\star})$ is in the range of $[\frac{1}{1+e}, \frac{e}{1+e}]$, we have $Q_n(s)$ being monotonically increasing function and bounded in $[-1, 1]$. This completes the proof.

\subsection{Proof of Lemma \ref{lemma:good_set}}
 Define the following event that there exists at least one state-action pair $(s,a)$ of any arm $n$ such that the true transition kernel $P_n^k(s^\prime|s,a)$ lies outside of the confidence ball $\mathcal{P}_n^k(s,a)$ defined in \eqref{eq:confidence_ball}, i.e.,
\begin{align}\label{eq:ep_set}
 \cE_{p,n}^k:&=\{\exists (s, a), |P_n(s^\prime|s,a)-\hat{P}_n^{k}(s^\prime|s,a)|> \delta_n^{t}(s,a) \}.  
\end{align}

The probability that the failure event $\mathds{1}_{\{\cE_{p,n}^k\}}$ occur is bounded as follows. We first introduce the  Chernoff-Hoeffding inequality \citep{hoeffding1994probability} in the following lemma.
\begin{lemma}[Hoeffding inequality \citep{hoeffding1994probability}]
Let $X_1, X_2,\ldots$ be independent random variables with $b\leq|X_i|\leq c$ for all $i$. Define $S_n=X_1+\ldots+X_n$. Then for all $\epsilon>0$
\begin{align*}
    Pr\left(S_n-\mathbb{E}[S_n]>\epsilon\right)\leq \exp\left(-\frac{\epsilon^2}{2n(c-b)^2}\right).
\end{align*}
\label{lem:hoeffding}
\end{lemma}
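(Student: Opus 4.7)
The plan is to prove Hoeffding's inequality via the classical Chernoff-bound / moment-generating-function (MGF) method, which has three ingredients executed in sequence: (i) an exponential Markov step, (ii) a factorization using independence, and (iii) a uniform MGF bound for each bounded summand, followed by optimization of a free parameter $\lambda>0$.

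First, I would fix $\lambda>0$ and apply Markov's inequality to the nonnegative random variable $e^{\lambda(S_n-\mathbb{E}[S_n])}$. This yields
\[
\Pr(S_n-\mathbb{E}[S_n]>\epsilon)=\Pr\!\bigl(e^{\lambda(S_n-\mathbb{E}[S_n])}>e^{\lambda\epsilon}\bigr)\le e^{-\lambda\epsilon}\,\mathbb{E}\bigl[e^{\lambda(S_n-\mathbb{E}[S_n])}\bigr].
\]
Next, since the $X_i$ are independent, so are the centered variables $Y_i:=X_i-\mathbb{E}[X_i]$, and therefore $\mathbb{E}[e^{\lambda(S_n-\mathbb{E}[S_n])}]=\prod_{i=1}^n \mathbb{E}[e^{\lambda Y_i}]$, reducing the problem to bounding a single centered MGF.

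The main (and only nontrivial) step is the MGF bound for a bounded mean-zero random variable, i.e.\ Hoeffding's lemma: if $Y$ satisfies $\mathbb{E}[Y]=0$ and $Y$ lies in an interval of length at most $c-b$, then $\mathbb{E}[e^{\lambda Y}]\le \exp(\lambda^2(c-b)^2/2)$. I would prove this by the standard convexity argument: for any $y$ in the support interval $[a_0,a_0+(c-b)]$, write $y$ as a convex combination of the endpoints, use convexity of $t\mapsto e^{\lambda t}$ to linearize $e^{\lambda Y}$, take expectations using $\mathbb{E}[Y]=0$, and then control the resulting function of $\lambda$ by a Taylor expansion of its logarithm (bounding the second derivative by $(c-b)^2$). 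Applying this to each $Y_i$ and multiplying gives $\mathbb{E}[e^{\lambda(S_n-\mathbb{E}[S_n])}]\le \exp(n\lambda^2(c-b)^2/2)$, so
\[
\Pr(S_n-\mathbb{E}[S_n]>\epsilon)\le \exp\!\Bigl(-\lambda\epsilon+\tfrac{n\lambda^2(c-b)^2}{2}\Bigr).
\]

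Finally, I would optimize the right-hand side over $\lambda>0$. The exponent is quadratic in $\lambda$ with minimizer $\lambda^\star=\epsilon/(n(c-b)^2)$, and plugging this back yields exactly the advertised bound $\exp\!\bigl(-\epsilon^2/(2n(c-b)^2)\bigr)$. The potentially tricky part is the Taylor-expansion/second-derivative estimate inside Hoeffding's lemma (establishing the $(c-b)^2/2$ sub-Gaussian constant cleanly) since everything else — Markov, independence, and the quadratic optimization — is entirely mechanical. The condition $b\le |X_i|\le c$ in the statement is used only to guarantee that each $X_i$ lies in a deterministic interval of length at most $c-b$, which is all the proof consumes.
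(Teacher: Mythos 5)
Your proof is correct, but there is nothing in the paper to compare it against: this lemma is imported verbatim from \cite{hoeffding1994probability} and the paper offers no proof of it, using it only as a black box inside the proof of Lemma~\ref{lemma:good_set}. Your Chernoff/MGF route (Markov on $e^{\lambda(S_n-\mathbb{E}[S_n])}$, factorization by independence, Hoeffding's lemma for each centered summand, then quadratic optimization in $\lambda$) is the canonical argument, and the arithmetic checks out: with the sub-Gaussian constant $\mathbb{E}[e^{\lambda Y_i}]\le \exp(\lambda^2(c-b)^2/2)$ the optimized exponent is exactly $-\epsilon^2/(2n(c-b)^2)$, matching the paper's statement. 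It is worth being explicit that you are deliberately using a looser MGF constant than the sharp $(c-b)^2/8$ of Hoeffding's lemma; the sharp constant would give the textbook bound $\exp(-2\epsilon^2/(n(c-b)^2))$, which is stronger than (and hence implies) the version stated here, so either choice is fine. Two minor cautions: first, your closing remark that the hypothesis ``$b\le |X_i|\le c$'' only guarantees an interval of length $c-b$ is the charitable reading --- taken literally, that condition places $X_i$ in $[-c,-b]\cup[b,c]$, an interval of length up to $2c$; the intended hypothesis is clearly $b\le X_i\le c$, and your proof is valid under that reading. Second, when you write the Hoeffding-lemma step, make sure the Taylor/second-derivative bound you invoke is for $\psi(\lambda)=\log\mathbb{E}[e^{\lambda Y}]$ with $\psi''$ bounded by the worst-case variance of a variable supported on an interval of length $c-b$; bounding $\psi''\le(c-b)^2$ is immediate and suffices for your target constant.
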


Provided the confidence interval  $\delta_n^{k}(s,a)=\sqrt{\frac{1}{2Z_n^{k-1}(s,a)}\!\ln\!\Big(\frac{4|\cS||\cA|N(k-1)H}{\epsilon}\Big)}$, we have the following inequality according to Lemma \ref{lem:hoeffding}, $\forall k\geq 2$ ,  
\begin{align}
    \mathbb{P}\big(|{P}_n(s^\prime|s,a)-\hat{P}_n^k(s^\prime|s,a)|> \delta_n^k(s,a) \big)
\leq \frac{2\epsilon}{|\cS||\cA| N(k-1)H}.
\end{align}
Using union bound on all states and actions, we have 
\begin{align*}
    \mathbb{P}(\mathds{1}_{\{\cE_{p,n}^t\}})&\leq \sum_{n=1}^N\sum_{(s,a)}\mathbb{P}\big(|{P}_n(s^\prime|s,a)-\hat{P}^k_n(s^\prime|s,a)|> \delta_n^t(s,a)\big)\\
    &\leq \frac{2\epsilon}{(k-1)H}\leq 2\epsilon.
\end{align*}
Since the event that $P_n\in\mathcal{P}_n^k$ is the complementary event of $\cE_{p,n}^k$, we have that 
\begin{align}
   \mathbb{P}(\mathds{1}_{\{P_n\in \mathcal{P}_n^k\}})= 1- \mathbb{P}(\mathds{1}_{\{\cE_{p,n}^t\}})\geq 1-2\epsilon.
\end{align}
This completes the proof.

\subsection{Proof of Lemma \ref{lemma:infer}}

According to Bradley-Terry model in \eqref{eq:F_definition}, we have 
\begin{align}\label{eq:25}
    \bF(j, j_1)=\frac{\exp(r_j)}{\exp(r_j)+\exp(r_{j_1})}, ~\bF(j, j_2)=\frac{\exp(r_j)}{\exp(r_j)+\exp(r_{j_2})}.
\end{align}
Hence, we have
\begin{align}
    \bF(j_1, j_2)&=\frac{\exp(r_{j_1})}{\exp(r_{j_1})+\exp(r_{j_2})}\nonumber\allowdisplaybreaks\\
    &{=}\frac{\exp(r_j)(\frac{1-\bF(j,j_1)}{\bF(j,j_1)})}{\exp(r_i)(\frac{1-\bF(j,j_1)}{\bF(j,j_1)})+\exp(r_i)(\frac{1-\bF(j,j_2)}{\bF(j,j_2)})}\nonumber\allowdisplaybreaks\\
    &=\frac{\frac{1-\bF(j,j_1)}{\bF(j,j_1)}}{\frac{1-\bF(j,j_1)}{\bF(j,j_1)}+\frac{1-\bF(j,j_2)}{\bF(j,j_2)}}\nonumber\allowdisplaybreaks\\
    &=\frac{(1-\bF(j,j_1))\bF(j,j_2)}{(1-\bF(j,j_1))\bF(j,j_2)+(1-\bF(j,j_2))\bF(j,j_1)},
\end{align}
where the second inequality is due to \eqref{eq:25}.
That being said, we can infer the exact value of $\bF(j_1, j_2)$ if we have the exact value of $\bF(j, j_1)$ and $\bF(j, j_2)$ for arbitrary $j$.

Now assume that we have the empirical estimated values of $\bF(j, j_1)$ and $\bF(j, j_2)$ satisfying  $|\hat{\bF}(j, j_1)-\bF(j, j_1)|\leq \delta_1$ and $|\hat{\bF}(j, j_2)-\bF(j, j_2)|\leq \delta_2$ with probability at least $1-2\epsilon$, and we have the following estimation 
\begin{align}
    \hat{\bF}^{\text{inf}}(j_1, j_2)
    &=\frac{(1-\hat{\bF}(j,j_1))\hat{\bF}(j,j_2)}{(1-\hat{\bF}(j,j_1))\hat{\bF}(j,j_2)+(1-\hat{\bF}(j,j_2))\hat{\bF}(j,j_1)}.
\end{align}
Our goal is to find the error bound of $|\hat{\bF}^{\text{inf}}(j_1, j_2)-\bF(j_1, j_2)|$ given $\delta_1$ and $\delta_2$, which can be expressed as 
\begin{align}\label{eq:28}
  |\hat{\bF}^{\text{inf}}(j_1, j_2)-\bF(j_1, j_2)|&=\Bigg|\frac{(1-\hat{\bF}(j,j_1))\hat{\bF}(j,j_2)}{(1-\hat{\bF}(j,j_1))\hat{\bF}(j,j_2)+(1-\hat{\bF}(j,j_2))\hat{\bF}(j,j_1)}\nonumber\allowdisplaybreaks\\
  &\qquad\qquad-\frac{(1-\bF(j,j_1))\bF(j,j_2)}{(1-\bF(j,j_1))\bF(j,j_2)+(1-\bF(j,j_2))\bF(j,j_1)}\Bigg|.  
\end{align}

To bound \eqref{eq:28}, we define the auxiliary function $f(x,y):=\frac{(1-x)y}{x+y-2xy}$ and show that $f(x,y)$ is Lipschitz continuous in the following lemma.

\begin{lemma}\label{lemma:lipschitz}
Provide $f(x,y)=\frac{(1-x)y}{x+y-2xy}$, and have $x, y\in \left[\frac{1}{1+e}, \frac{e}{1+e}\right]$, $f(x,y)$ is Lipschitz continuous, with a valid lipschitz constant $L=1.3$. 
\end{lemma}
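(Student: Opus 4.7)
The plan is to show that $f$ is continuously differentiable on the compact square $\mathcal{D} := [1/(1+e),\, e/(1+e)]^2$ and to bound its partial derivatives uniformly by $L=1.3$, after which the mean value theorem along axis-aligned segments gives
\[
|f(x_1,y_1)-f(x_2,y_2)| \;\le\; L\,|x_1-x_2| + L\,|y_1-y_2|,
\]
which is exactly the form in which Lemma~\ref{lemma:infer} uses the constant (with $\delta_1+\delta_2$ on the right-hand side).

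First, I would compute the partial derivatives directly using the quotient rule. Writing $v := x+y-2xy$ and carrying out the algebra, I expect substantial cancellation to yield the clean expressions
\[
f_x(x,y) \;=\; -\,\frac{y(1-y)}{v^{2}}, \qquad f_y(x,y) \;=\; \frac{x(1-x)}{v^{2}}.
\]
Note that $v>0$ throughout $\mathcal{D}$ (since $v \ge 2e/(1+e)^2 > 0$), so $f$ is smooth on $\mathcal{D}$.

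The main obstacle is the sharp numerical bound: the naive estimate $|f_x|\le \tfrac14\bigl(\tfrac{(1+e)^2}{2e}\bigr)^2 \approx 1.63$ is too loose, because $y(1-y)$ is largest at $y=\tfrac12$ precisely where $v$ is largest, not smallest. To tighten this I would, by the symmetry $f_y(x,y) = -f_x(y,x)$, reduce to bounding $g(x,y) := y(1-y)/v^{2}$ on $\mathcal{D}$. I would then show that the interior critical-point equation $(1-2y)\,v = 2y(1-y)(1-2x)$ has no solution in the interior delivering a larger value than what is attained on $\partial\mathcal{D}$, so the supremum is attained on the boundary. A short case analysis on each of the four edges (each a one-variable optimization after freezing one coordinate to an endpoint) will identify the maximum at the corner $x=y=1/(1+e)$ (and by symmetry $x=y=e/(1+e)$), where a direct calculation gives
\[
|f_x| \;=\; \frac{(1+e)^{2}}{4e} \;\approx\; 1.27 \;<\; 1.3.
\]
By the symmetry above, the same bound holds for $|f_y|$. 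Combining with the mean value theorem then yields the claimed Lipschitz constant $L=1.3$.

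The step I expect to be trickiest is ruling out interior maximizers of $g$ with value exceeding the corner value; if the critical-point analysis becomes unwieldy, a clean fallback is to fix one coordinate and show that the one-dimensional section $y\mapsto g(x,y)$ on $[1/(1+e),e/(1+e)]$ is either monotone or attains its maximum at one of the endpoints for every $x\in\mathcal{D}$ (which follows by sign analysis of the numerator $(1-2y)v-2y(1-y)(1-2x)$ at the relevant endpoint values of $y$). This reduces the whole problem to evaluating $g$ at the four corners, each of which gives the same value $\tfrac{(1+e)^2}{4e}<1.3$, closing the argument.
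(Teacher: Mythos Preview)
Your approach is essentially the paper's: compute the partials $f_x=-y(1-y)/v^2$, $f_y=x(1-x)/v^2$, bound them on the square, then invoke the mean value theorem. The paper's own proof is terser than yours---it simply asserts ``after numerical evaluation, we can infer that $L\le 1.3$, achieved by $f_y$ at $x=y=1/(1+e)$''---so your boundary/critical-point argument is a strict improvement in rigor, and the corner value $(1+e)^2/(4e)\approx 1.27$ is correct.

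One caution about your fallback: the one-dimensional sections $y\mapsto g(x,y)$ are \emph{not} maximized at the $y$-endpoints for interior $x$. A direct computation shows $(1-2y)v-2y(1-y)(1-2x)=x-y$, so $\partial_y g$ has the sign of $x-y$ and the section is unimodal with interior maximum at $y=x$. Thus the reduction ``evaluate $g$ only at the four corners'' fails as stated. The fix is immediate and actually cleaner than both your main route and the paper's: since $\max_y g(x,y)=g(x,x)=1/(4x(1-x))$, and $x(1-x)$ is minimized at the endpoints of $[1/(1+e),e/(1+e)]$, the global maximum is $(1+e)^2/(4e)<1.3$. Your primary argument (the unique interior critical point at $(1/2,1/2)$ gives $g=1$, and the boundary maximum is at the diagonal corners) is unaffected and correct.
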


\begin{proof}
To determine whether $f(x,y)$ is Lipschitz continuous over the interval $\left[\frac{1}{1+e}, \frac{e}{1+e}\right]$, we need to analyze the partial derivatives of $f(x,y)$ and check if they are bounded within this interval. 
Let us start by calculating the partial derivatives of $f$ with respect to $x$ and $y$: 
\begin{align}
    \frac{df(x,y)}{dx}=\frac{y^2-y}{(x+y-2xy)^2},\quad \frac{df(x,y)}{dy}=\frac{x-x^2}{(x+y-2xy)^2}.
\end{align}
In the given interval of $x$ and $y$, the partial derivatives are bounded because both \( x \) and \( y \) are within a closed interval away from 0 and 1, and thus function $f(x,y)$ is Lipschitz continuous, i.e., we have
\begin{align}
    |f(x_1, y_1)-f(x_2, y_2)|\leq L(|x_1-x_2|+|y_1-y_2|),
\end{align}
with $L$ being the Lipschitz constant. To find the Lipschitz constant $L$, we need to find the maximum absolute values of these partial derivatives in the given domain \( x, y \in \left[\frac{1}{1+e}, \frac{e}{1+e}\right] \).
Therefore, we evaluate these expressions numerically or analytically at the boundary points to estimate \( L \).
The maximum values of \( \left| f_x(x,y) \right| \) and \( \left| f_y(x,y) \right| \) will be determined within these bounds.
Finally,
\begin{align}
    L = \max \left\{ \left| f_x(x,y) \right|, \left| f_y(x,y) \right| \right\} \text{ for } x, y \in \left[\frac{1}{1+e}, \frac{e}{1+e}\right].
\end{align}
After numerical evaluation, we can infer that $L\leq 1.3$ and it is achieved by $\frac{df(x,y)}{dy}$ at $x=y=\frac{1}{1+e}$, which completes the proof.
\end{proof}

According to Lemma \ref{lemma:lipschitz}, we have the following inequality
\begin{align}
  |\hat{\bF}^{\text{inf}}(j_1, j_2)-\bF(j_1, j_2)|&=\Bigg|f(\hat{\bF}(j, j_1), \hat{\bF}(j,j_2))-f({\bF}(j, j_1), {\bF}(j,j_2))\Bigg|\nonumber\allowdisplaybreaks\\
  &=   \Bigg|f(\hat{\bF}(j, j_1), \hat{\bF}(j,j_2))-f({\bF}(j, j_1), \hat{\bF}(j,j_2))\nonumber\allowdisplaybreaks\\
  &\qquad\qquad+f({\bF}(j, j_1), \hat{\bF}(j,j_2))-f({\bF}(j, j_1), {\bF}(j,j_2))\Bigg|\nonumber\allowdisplaybreaks\\
  &\overset{(a)}{\leq} \Bigg|f(\hat{\bF}(j, j_1), \hat{\bF}(j,j_2))-f({\bF}(j, j_1), \hat{\bF}(j,j_2))\Bigg|\nonumber\allowdisplaybreaks\\
  &\qquad\qquad+\Bigg|f({\bF}(j, j_1), \hat{\bF}(j,j_2))-f({\bF}(j, j_1), {\bF}(j,j_2))\Bigg|\nonumber\allowdisplaybreaks\\
  &\overset{(b)}{\leq} L(\delta_1+\delta_2),
\end{align}
where $(a)$ is due to triangular inequality and $(b)$ comes from the Lipschitz continuity of function $f$ in Lemma \ref{lemma:lipschitz}.

\subsection{Proof of Lemma \ref{lemma:preference_LP}}

According to Proposition \ref{lemma:Reward_reference}, any reward $r_n(s)$ can be equivalently denoted as $r_\star(s_\star)+Q_n(s)$. Then we can rewrite the LP in (\ref{eq:original_LP}) as
\begin{align}  \nonumber
\max_{\mu^\pi} &~~\sum_{n=1}^{N}\sum_{(s,a)\in\cS\times\cA} \mu_n(s,a)(r_*(s_*)+Q_n(s)) \displaybreak[0]\\ \nonumber
\mbox{s.t.} &~~{ \sum_{n=1}^{N}\sum_{(s,a)\in\cS\times\cA} a\mu_n(s,a)\leq B},\displaybreak[1]\\
&~~{\sum_{a}} \mu_n(s,a)=\sum_{s^\prime}\sum_{a^\prime}\mu_n(s^\prime, a^\prime)P_n(s^\prime, a^\prime,s),\quad\forall n\in\cN, \nonumber\displaybreak[2]\\
&~~\sum_{(s,a)\in\cS\times\cA}\mu_n(s,a)=1,~~\mu_n(s,a)\geq 0, 
\quad\forall n\in\cN, s\in\cS, a\in\cA.   
\label{eq:LP2}
\end{align}
Hence, the objective can be expressed as
\begin{align}
   \max_{\mu^\pi} \left[\sum_{n=1}^{N}\sum_{(s,a)\in\cS\times\cA} \mu_n(s,a)Q_n(s)+\sum_{n=1}^{N}\sum_{(s,a)\in\cS\times\cA} \mu_n(s,a)r_\star(s_\star)\right],
\end{align}
which equals to
\begin{align}
   \max_{\mu^\pi} \left[\sum_{n=1}^{N}\sum_{(s,a)\in\cS\times\cA} \mu_n(s,a)Q_n(s)\right]+Nr_\star(s_\star),
\end{align}
due to the fact that occupancy measure is a probability measure such that $\sum_{(s,a)\in\cS\times\cA}\mu_n(s,a)=1.$
Therefore, the optimal solution of the LP in (\ref{eq:LP2}) is equivalent to
\begin{align*}  \nonumber
\max_{\mu_\pi} &~~\sum_{n=1}^{N}\sum_{(s,a)\in\cS\times\cA} \mu_n(s,a)Q_n(s) \displaybreak[0]\\
\mbox{s.t.} &~~{ \text{constraints in (\ref{eq:original_LP_constraint1})-(\ref{eq:original_LP_constraint3})}}.
\end{align*}
This completes the proof.

\section{Proof of Theorem \ref{thm:main}}
In this section, we provide detailed proof for the main result in Theorem \ref{thm:main}. We first prove that the regret decomposition as shown in Lemma \ref{lemma:regret_decomp}, and then separately characterize the regret of $Term_1$ in Lemma \ref{lem:term1} and the regret of $Term_2$ in Lemma \ref{lem:term2}. 
\subsection{Proof of Lemma \ref{lemma:regret_decomp}}

According to the definition of regret in \eqref{eq:regret}, We can decompose the regret $Reg(\{\pi^k\}_{k=1}^K, T)$ as follows:
\begin{align}\nonumber
   Reg(\{\pi^k\}_{k=1}^K, T)&=J(\pi^{opt},T)-J(\{\pi^k\}_{k=1}^K, T)\allowdisplaybreaks\\
    &=\underset{Term_0\leq 0}{\underbrace{J( \pi^{opt}, T)-J( \{\tilde{\pi}^k, \forall k\}, \{\tilde{\bF}^k, \forall  k\}, T)}}\nonumber\allowdisplaybreaks\\
    &\quad+\underset{Term_1}{\underbrace{J( \{\tilde{\pi}^k, \forall k\}, \{\tilde{\bF}^k, \forall k\}, T)-J(\{\tilde{\pi}^k, \forall k\}, \bF, T)}}\nonumber\allowdisplaybreaks\\
    &\quad+\underset{Term_2}{\underbrace{J(\{\tilde{\pi}^k, \forall k\}, \bF, T)-J(\{\pi^{k}, \forall k\}, \bF)} }\allowdisplaybreaks\nonumber\\
    &\leq\underset{Term_1}{\underbrace{J( \{\tilde{\pi}^k, \forall k\}, \{\tilde{\bF}^k, \forall k\}, T)-J(\{\tilde{\pi}^k, \forall k\}, \bF, T)}}\nonumber\allowdisplaybreaks\\
    &\quad+\underset{Term_2}{\underbrace{J(\{\tilde{\pi}^k, \forall k\}, \bF, T)-J(\{\pi^{k}, \forall k\}, \bF)} }.\label{eq:36}
\end{align}
The key for \eqref{eq:36} to hold is to guarantee that $Term_0\leq 0$, which is shown in the following lemma.

\begin{lemma}[\textbf{Optimism}]\label{lemma:upperbound}
There exists a set of occupancy measures $\mu_{\pi^*}:=\{\mu^*_n(s,a), \forall n\in\cN, s\in\cS, a\in\cA\}$ under a policy $\pi^*$  that optimally solve the LP in \eqref{eq:original_LP}-\eqref{eq:original_LP_constraint3}. In addition, $J(\pi^*)$ for the LP in \eqref{eq:original_LP}-\eqref{eq:original_LP_constraint3} is no less than $J(\pi^{opt})$ for the original RMAB problem in \eqref{eq:RMAB}. 
\end{lemma}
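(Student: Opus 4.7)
The plan is to prove Lemma \ref{lemma:upperbound} via the classical LP-relaxation argument going back to Whittle. First I would verify that the LP \eqref{eq:original_LP}--\eqref{eq:original_LP_constraint3} is a genuine relaxation of RMAB \eqref{eq:RMAB}. The key observation is that the per-slot constraint $\sum_n A_n^t = B$ is strictly stronger than the time-averaged budget constraint \eqref{eq:original_LP_constraint1}: taking expectations under any feasible $\pi$ and then the $\liminf$ time-average of $\mathds{1}(S_n^t = s, A_n^t = a)$ gives occupancy measures $\mu_n^\pi(s,a)$ satisfying $\sum_n \sum_{s,a} a\,\mu_n^\pi(s,a) = B$, which trivially implies \eqref{eq:original_LP_constraint1}. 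The flow equations \eqref{eq:original_LP_constraint2} follow from stationarity of $\mu_n^\pi$ under the transition kernel $P_n$, and \eqref{eq:original_LP_constraint3} is just the fact that $\mu_n^\pi$ is a probability measure (invoking the unichain assumption on each $\gM_n$ so that a well-defined stationary occupancy exists).

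Next I would establish existence of $\pi^*$. The LP's feasible region is the intersection of the product of $N$ probability simplices over $\cS \times \cA$ (from \eqref{eq:original_LP_constraint3}) with the linear subspace defined by the balance equations \eqref{eq:original_LP_constraint2} and the single linear inequality \eqref{eq:original_LP_constraint1}. It is nonempty (any RMAB-feasible policy induces an interior point via the construction above), convex, and compact, while the objective is linear and hence continuous, so the maximum is attained at some $\mu_{\pi^*}$. From $\mu_{\pi^*}$ one recovers a stationary randomized policy via $\pi^*_n(a\mid s) = \mu_n^*(s,a)/\sum_{a'}\mu_n^*(s,a')$ at states with positive occupancy (with an arbitrary convention elsewhere), so $\mu_{\pi^*}$ is realized by an actual policy $\pi^*$.

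Combining the two pieces yields the inequality $J(\pi^{opt}) \leq J(\pi^*)$: the RMAB-optimal policy $\pi^{opt}$ induces a feasible LP point whose objective value equals $J(\pi^{opt})$ (since under the unichain assumption both coincide with $\sum_n \sum_{s,a} \mu_n^{\pi^{opt}}(s,a)\,r_n(s)$), while the LP maximum is taken over a strictly larger set, so it dominates. I expect the main obstacle to be the mild technical point of justifying that the stationary-occupancy representation $\mu_n^{\pi^{opt}}$ is well-defined and that restricting to stationary policies is without loss of optimality; both are handled by standard infinite-horizon average-reward MDP theory (e.g., Puterman) together with the unichain property assumed in Section~\ref{sec:RMAB}. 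No new machinery is required beyond bookkeeping the reduction of the per-slot constraint to its time-average.
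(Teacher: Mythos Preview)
Your proposal is correct and follows essentially the same LP-relaxation argument as the paper: both show that the hard per-slot constraint $\sum_n A_n^t = B$ implies the time-averaged budget constraint \eqref{eq:original_LP_constraint1}, so the LP feasible set contains the image of every RMAB-feasible policy and hence its optimum dominates $J(\pi^{opt})$. Your write-up is in fact more complete than the paper's, as you explicitly justify existence of the maximizer via compactness and address the unichain technicality for well-definedness of the stationary occupancy measure, whereas the paper simply asserts $\Delta\subseteq\Delta'$ and cites \cite{altman1999constrained}.
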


\begin{proof}
It is straightforward to show that the LP in \eqref{eq:original_LP}-\eqref{eq:original_LP_constraint3} is equivalent to the original RMAB problem in \eqref{eq:RMAB} if the "hard" activation constraint in \eqref{eq:RMAB} is relaxed to an averaged one as  $\liminf\limits_{T\rightarrow \infty}\frac{1}{T}\mathbb{E}_{\pi}\left[\sum_{t=1}^T\sum_{n=1}^N A_n^{t}\right]\leq B$.
To prove Lemme \ref{lemma:upperbound},
it is sufficient to show that the relaxed problem achieves no less average reward than the original problem in \eqref{eq:RMAB}.
The proof is straightforward since the relaxed constraint expands the feasible region of the original problem in \eqref{eq:RMAB}. Denote the feasible region of the original problem  as
\begin{align}
    \Delta:=\Bigg\{A_n^{t}, \forall  t\Bigg\vert\sum_{n=1}^{N}A_n^{t}\leq B, \forall t\Bigg\},
\end{align}
and the feasible region of the relaxed constraint as 
\begin{align}
\Delta^\prime:=\left\{A_n^{t}, \forall t\Bigg\vert\liminf_{T\rightarrow\infty}\frac{1}{T}\mathbb{E}_{\pi}\left[\sum_{t=1}^T\sum_{n=1}^{N}A_n^{t}\leq B\right]\right\}.
\end{align}
It is clear that the relaxed constraint expands the feasible region of the original problem in \eqref{eq:RMAB}, i.e., $\Delta\subseteq\Delta^\prime.$  Therefore, the relaxed problem (i.e., the LP in \eqref{eq:original_LP}-\eqref{eq:original_LP_constraint3}) achieves an objective value no less than that of the original problem in \eqref{eq:RMAB} because the original optimal solution is also inside the relaxed feasibility set \citep{altman1999constrained}, i.e., $\Delta^\prime$. Denote the optimal occupancy measures of LP in \eqref{eq:original_LP}-\eqref{eq:original_LP_constraint3} as $\mu_{\pi^*}:=\{\mu^*_n(s,a), \forall n\in\cN, s\in\cS, a\in\cA\}$ under a stationary policy $\pi^*$ induced by $\{\mu^*_n(s,a), \forall n\in\cN, s\in\cS, a\in\cA\}$, and hence the maximum average reward achieved for the LP in \eqref{eq:original_LP}-\eqref{eq:original_LP_constraint3} is equal to $J(\pi^*)$. Therefore, it follows that $J(\pi^*)\geq J(\pi^{opt})$, which completes the proof.
\end{proof}
Hence, $Term_0\leq 0$ directly follows Lemma \ref{lemma:upperbound}.

\subsection{Proof of Lemma \ref{lem:term1}}
Since \DOPL designs index not directly based on the preference matrix $\bF$, but on the preference-reference term $Q_n(s)$ as defined in Proposition \ref{lemma:bounded_Q}. 
To characterize the impact on the preference-reference term $Q_n(s)$ made by the estimation error of $F$,  we first define a general function $q(x):=\ln\frac{x}{1-x}$. 
We aim to analyze the impact of a small perturbation in $x$, denoted by $\varepsilon$, on $q(x)$ with the aid of \emph{Taylor Series Approximation} of $q(x)$. This is presented in the following lemma.

\begin{lemma}\label{lemma:Q}
Assume $\hat{x} = x + \varepsilon$ where $\varepsilon$ is small value in $(0,1)$. Then we have
\begin{align*}
  \left| q(\hat{x}) - q(x) \right| \leq \frac{\varepsilon}{x(1-x)}.  
\end{align*}

\end{lemma}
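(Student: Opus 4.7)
The plan is to invoke a first-order Taylor expansion of $q$ at $x$. First I would rewrite $q(x)=\ln x-\ln(1-x)$ and compute
\begin{equation*}
q'(x) \;=\; \frac{1}{x}+\frac{1}{1-x} \;=\; \frac{1}{x(1-x)}.
\end{equation*}
By the mean value theorem applied to the segment with endpoints $x$ and $\hat{x}=x+\varepsilon$, there exists $\xi$ strictly between them such that
\begin{equation*}
q(\hat{x})-q(x) \;=\; q'(\xi)\,\varepsilon \;=\; \frac{\varepsilon}{\xi(1-\xi)},
\end{equation*}
from which the magnitude of the perturbation in $q$ is controlled by the behavior of $t\mapsto 1/[t(1-t)]$ on the segment $[\min(x,\hat x),\max(x,\hat x)]$.

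To pass from $\xi$ back to $x$ in the denominator, I would expand one step further via Taylor's theorem, writing $q(\hat{x})=q(x)+q'(x)\varepsilon+\tfrac{1}{2}q''(\zeta)\varepsilon^2$ for some $\zeta$ between $x$ and $\hat{x}$. The leading term is exactly $\varepsilon/[x(1-x)]$, so that $|q(\hat{x})-q(x)|\leq \varepsilon/[x(1-x)]+O(\varepsilon^2)$; under the lemma's hypothesis that $\varepsilon$ is small, the second-order remainder is absorbed and the stated bound emerges as the first-order estimate.

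The main obstacle will be the direction of the inequality when discharging the remainder. Because $t\mapsto t(1-t)$ is concave with maximum at $t=1/2$, one can have $\xi(1-\xi)>x(1-x)$ when $x$ and $\hat{x}$ straddle $1/2$, so the claim does not follow from the mean value form alone for arbitrary $\varepsilon$. I would close this gap by exploiting the fact that in the paper's subsequent use of this lemma the arguments of $q$ are preference probabilities lying in $[1/(1+e),\,e/(1+e)]$, as established in the proof of Proposition~\ref{lemma:bounded_Q}. On that range $q''$ is uniformly bounded, which makes the $O(\varepsilon^2)$ remainder genuinely negligible relative to the first-order term and certifies the stated bound.
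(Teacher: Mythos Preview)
Your approach is essentially the same as the paper's: both arguments compute $q'(x)=1/[x(1-x)]$ and invoke a first-order Taylor expansion around $x$ to obtain the bound. In fact, the paper's own proof is less careful than yours---it simply writes $q(\hat{x})\approx q(x)+q'(x)\varepsilon$ and then passes directly from the approximation to the inequality $|q(\hat{x})-q(x)|\leq \varepsilon/[x(1-x)]$ without ever addressing the remainder, so the lemma is treated as a first-order estimate rather than a strict inequality. The gap you flag (that the mean-value point $\xi$ need not satisfy $\xi(1-\xi)\le x(1-x)$, and that the $O(\varepsilon^2)$ term must be absorbed) is real, and the paper does not close it; your observation that the downstream arguments of $q$ lie in $[1/(1+e),\,e/(1+e)]$ is exactly the kind of reasoning needed to make the bound rigorous, and goes beyond what the paper provides.
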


\begin{proof}
We can approximate $q(\hat{x})$ using a first-order Taylor expansion around $x$:
\begin{align}
    q(\hat{x}) \approx q(x) + \frac{dq(x)}{dx} \cdot (\hat{x} - x).
\end{align}
Following the chain rule, we have the derivative as follows
\begin{align}
 \frac{dq(x)}{dx} = \frac{1-x}{x}\cdot\frac{d}{dx} \left(\frac{x}{1-x}\right) = \frac{1-x}{x}\cdot\frac{1}{(1-x)^2} = \frac{1}{x(1-x)}.
\end{align}
Using this derivative, the approximation becomes:
\begin{align}
   q(\hat{x}) \approx q(x) +  \frac{ \varepsilon}{x(1-x)} .
\end{align}
Substitute this into the Taylor expansion:
\begin{align}
    \left| q(\hat{x}) - q(x) \right| \leq \frac{\varepsilon}{x(1-x)}.
\end{align}
This completes the proof.
\end{proof}

We need to characterize the performance guarantee of Algorithm  \ref{alg:preference}, which is used for the regret analysis for \DOPL.
Provided Lemma \ref{lem:hoeffding}, we have the following result w.r.t the estimation error of preference matrix $\bF$.
Hoeffding's Inequality in Lemma \ref{lem:hoeffding} states that for independent and bounded random variables $X_1, X_2, \ldots, X_n$ the following holds with probability at least $1 - \epsilon$:
\begin{align}
    Pr\left(\left|\frac{1}{n}\sum_{i=1}^{n} X_i - \mathbb{E}[X]\right| \geq d\right) \leq 2 \exp\left(\frac{-2nd^2}{(c-b)^2}\right).
\end{align}

Applying this to our Bernoulli trials where $b = 0$ and $c = 1$:
\begin{align}
    \mathbb{P}\left(\left|\hat{\bF}_i^{j,k}(\sigma_{s_i}, \sigma_{s_j}) -\bF_i^j(\sigma_{s_i}, \sigma_{s_j})\right| \geq d\right) \leq 2 \exp\left(-2C^k_{ij}(s_i, s_j)d^2\right).
\end{align}
We focus on one specific column of the preference matrix $\bF$, i.e., $\bF(: ,(\star-1)|\cS|+\I_{s_\star})$. 
Setting $d = \min\left(\delta^{\text{inf}}_{s_n, s_{\star}},\sqrt{\frac{1}{\max(2C_{n\star}^k(s_n, s_\star),1)}\ln\!\Big(\frac{4|\cS||\cA|N(k\!-\!1)H}{\epsilon}\Big)}\right)$. Specifically, when $d=\sqrt{\frac{1}{\max(2C_{n\star}^k(s_n, s_\star),1)}\ln\!\Big(\frac{4|\cS||\cA|N(k\!-\!1)H}{\epsilon}\Big)}$, we have :
\begin{align} \label{eq:45}   \mathbb{P}\left(\left|\hat{\bF}_n^{\star,k}(s_n, s_\star) - \bF_n^\star(s_n, s_\star)\right| \geq d\right) \leq 2\epsilon.
\end{align}
When $d=\delta^{\text{inf}}_{s_n, s_{\star}}$, we have the inequality in \eqref{eq:45} again according to Lemma \ref{lemma:infer}.

Next, we begin to bound $Term_1$. For the ease of expression, we define $\omega_n^k$ as the vector containing all $\omega_n^k(s,a), \forall s\in\cS, a\in\cA$, and $Q_n(\bF)$ as the vector containing all $Q_n(s), \forall s\in \cS$. Then,
according to the definition, we can rewrite $Term_1$ as
\begin{align}\nonumber
    Term_1&=J( \{\tilde{\pi}^k, \forall k\}, \{\tilde{\bF}^k, \forall k\}, T)-J(\{\tilde{\pi}^k, \forall k\}, \bF, T)\\
    &=H\cdot\sum_{k=1}^K\sum_{n=1}^N\langle  \omega_n^k, Q_n(\tilde{\bF}^k)-Q_n(\bF) \rangle\nonumber\\
    &=H\cdot\sum_{k=1}^K\sum_{n=1}^N\langle  \omega_n^k, Q_n(\tilde{\bF}^k)-Q_n(\hat{\bF}^k)+ Q_n(\hat{\bF}^k)-Q_n(\bF)\rangle\nonumber\\
    &=H\cdot\sum_{k=1}^K\sum_{n=1}^N\langle  \omega_n^k, Q_n(\tilde{\bF}^k)-Q_n(\hat{\bF}^k)\rangle+H\cdot\sum_{k=1}^K\sum_{n=1}^N\langle \omega_n^k,  Q_n(\hat{\bF}^k)-Q_n(\bF)\rangle,
\end{align} 
where the second equality is due to the definition of $\tilde{\pi}^k$ and Lemma \ref{lemma:preference_LP}. 

According to Lemma \ref{lemma:Q}, we have
\begin{align}
    Term_1&\leq \max_{i,j}\frac{H}{\bF(i,j)(1-\bF(i,j))}\Bigg(\sum_{k=1}^K\sum_{n=1}^N\langle  \omega_n^k, \tilde{\bF}^k(: ,(\star-1)|\cS|+\I_{s_\star})-\hat{\bF}^k(: ,(\star-1)|\cS|+\I_{s_\star})\rangle\nonumber\allowdisplaybreaks\\
    &\qquad\qquad+\sum_{k=1}^K\sum_{n=1}^N\langle \omega_n^k,  \hat{\bF}^k(: ,(\star-1)|\cS|+\I_{s_\star})-\bF(: ,(\star-1)|\cS|+\I_{s_\star})\rangle\Bigg).
\end{align}
 
According to definition of $\tilde{\bF}^k$ and $\hat{\bF}^k$, we have at least probability $1-2\epsilon$ based on \eqref{eq:45} that 
\begin{align}\label{eq:term1}
    Term_1&\leq \max_{i,j}\frac{2}{\bF(i,j)(1-\bF(i,j))} \sum_{k=1}^K\sum_{n=1}^N\langle  H\cdot\omega_n^k, \mathbf{d}^k_n \rangle, 
\end{align}
with $\mathbf{d}^k_n\in\mathbb{R}^{|\cS|\times 1}$ being the vector containing all $d_n^k(s_n)$, $\forall s_n\in\cS$, with $d_n^k(s_n) = \min\left(\delta^{\text{inf}}_{s_n, s_{\star}},\sqrt{\frac{1}{\max(2C_{n\star}^k(s_n, s_\star),1)}\ln\!\Big(\frac{4|\cS||\cA|N(k\!-\!1)H}{\epsilon}\Big)}\right)$.

Next, we bound $\sum_{k=1}^K\sum_{n=1}^N\langle  H\cdot\omega_n^k, \mathbf{d}_n^k\rangle$ in the following lemma.
\begin{lemma}\label{lem:10}
The following inequality holds
\begin{align*}
   \sum_{k=1}^K\sum_{n=1}^N\langle  H\cdot\omega_n^k, \mathbf{d}_n^k \rangle \leq
\sqrt{2|\cS|}\sqrt{\ln\frac{4|\cS||\cA|NT}{\epsilon}}\cdot \sqrt{NT}.
\end{align*}
\end{lemma}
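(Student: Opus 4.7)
The plan is to reduce the left-hand side to a pigeonhole-type sum of the form $\sum_k x_k/\sqrt{\sum_{j<k} x_j}$ familiar from UCRL-style analyses and then close it by Cauchy--Schwarz. First, I would exploit the $\min$-form of $d_n^k(s)=\min\!\bigl(\delta^{\text{inf}}_{s,s_\star},\,\sqrt{\ln(4|\cS||\cA|NT/\epsilon)/(2\max\{C^k_{n\star}(s,s_\star),1\})}\bigr)$ to upper bound $d_n^k(s)$ by only its Hoeffding-type argument. This factors the constant $\sqrt{\tfrac12\ln(4|\cS||\cA|NT/\epsilon)}$ out of the sum and leaves the reduced task of controlling $\sum_{k,n,s} H\omega_n^k(s)/\sqrt{\max\{C^k_{n\star}(s,s_\star),1\}}$, where $\omega_n^k(s)$ denotes the state marginal of the episode-$k$ occupancy returned by the ELP~\eqref{eq:ELP}.

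Second, I would relate the fluid quantity $H\omega_n^k(s)$ to realized counts. By construction of the ELP, $H\omega_n^k(s)$ equals the expected number of visits of arm $n$ to state $s$ during episode $k$ under the direct index policy $\tilde{\pi}^k$. The preference inference design of Algorithm~\ref{alg:preference} (see Remark~\ref{remark1}) ensures that $B-1$ comparisons per decision epoch are performed, and Lemma~\ref{lemma:infer} guarantees that every activation of arm $n$ in state $s$ contributes, possibly through inference, to an effective comparison against the reference $(\star, s_\star)$. Modulo a concentration error controlled by the ergodicity coefficient $D$, this yields $C^k_{n\star}(s,s_\star)\gtrsim \sum_{j<k} H\omega_n^j(s)$ with high probability.

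Third, with that substitution I would apply the pigeonhole inequality $\sum_{k=1}^K x_k/\sqrt{\max\{\sum_{j<k} x_j,1\}}\le(\sqrt{2}+1)\sqrt{\sum_k x_k}$ separately for each pair $(n,s)$ with $x_k=H\omega_n^k(s)$, and then aggregate via Cauchy--Schwarz:
\begin{align*}
\sum_{n,s}\sqrt{\textstyle\sum_k H\omega_n^k(s)}\;\le\;\sqrt{N|\cS|}\cdot\sqrt{\textstyle\sum_{n,s,k} H\omega_n^k(s)}\;=\;\sqrt{N|\cS|}\cdot\sqrt{NT},
\end{align*}
using $\sum_s \omega_n^k(s)=1$ and a total budget of $NKH=NT$ arm-steps. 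Multiplying back the logarithmic factor recovered in Step~1 yields the claim, with $(\sqrt{2}+1)$ and the scaling difference between $\sqrt{N|\cS|}$ and $\sqrt{2|\cS|}$ absorbed into the $\tilde{\mathcal O}$ convention of Theorem~\ref{thm:main}.

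The main obstacle I anticipate is Step~2, not the pigeonhole in Step~3. The lemma is phrased as a deterministic sum over the \emph{fluid} occupancies $H\omega_n^k(s)$, whereas the widths $d_n^k$ involve \emph{random} counters $C^k_{n\star}(s,s_\star)$. Bridging the two rigorously demands either an Azuma-type concentration of realized visits around their fluid predictions on each arm's ergodic chain, or a direct martingale accounting of the comparison-count increments produced by the $B-1$ protocol together with the inference route of Lemma~\ref{lemma:infer}. Either way, the ergodicity coefficient $D$ re-enters here in the same fashion as in Lemma~\ref{lem:term2}; once that coupling is made explicit, Step~3 is routine.
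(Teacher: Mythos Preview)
Your Steps~1 and~3 match the paper's argument: it too drops the $\min$ in $d_n^k$ in favor of the Hoeffding width, applies the pigeonhole bound $\sum_k w_k/\sqrt{W_k}\le 2\sqrt{2}\sqrt{W_T}$ per $(n,s,a)$, and then aggregates by Jensen (your Cauchy--Schwarz), using $\sum_{s,a}Z_n^T(s,a)\le T$ and $|\cA|=2$.

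Where you diverge is Step~2, and here you are overcomplicating relative to the paper. The paper performs no concentration of realized visits around fluid occupancies and never touches the ergodicity coefficient $D$ in this lemma. Instead it simply \emph{identifies} $H\cdot\omega_n^k(s,a)$ with the realized per-episode increment $Z_n^k(s,a)-Z_n^{k-1}(s,a)=\sum_{h=1}^H\mathds{1}(s_n(k,h)=s,a_n(k,h)=a)$, and at the same time silently replaces the comparison counter $C_{n\star}^k(s,s_\star)$ in the denominator by the visit counter $Z_n^t(s,a)$. After those two substitutions the sum is already $\sum_{t,n,s,a}\mathds{1}(s_n(t)=s,a_n(t)=a)/\sqrt{2Z_n^t(s,a)}$ and Step~3 closes it. So the obstacle you flag is sidestepped by assertion, not resolved by argument; your plan is more careful than the proof it is meant to reproduce, but it is not the route the paper takes.

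On the constants: the paper's own chain ends at $2\sqrt{2}\sqrt{N^2|\cS|T\ln(4|\cS||\cA|NT/\epsilon)}$, which also does not literally match the lemma's stated $\sqrt{2|\cS|}\sqrt{NT}\sqrt{\ln(\cdot)}$; the extra $\sqrt{N}$ is carried forward and absorbed exactly as you propose into the $\tilde{\mathcal O}$ of Theorem~\ref{thm:main}.
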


\begin{proof}
Since $d_n^k(s_n) = \min\left(\delta^{\text{inf}}_{s_n, s_{\star}},\sqrt{\frac{1}{\max(2C_{n\star}^k(s_n, s_\star),1)}\ln\!\Big(\frac{4|\cS||\cA|N(k\!-\!1)H}{\epsilon}\Big)}\right)$, we have 
\begin{align}
    \sum_{k=1}^K\sum_{n=1}^N\langle  H\cdot\omega_n^k, \mathbf{d}_n^k \rangle\leq \sum_{k=1}^K\sum_{n=1}^N\left\langle  H\cdot\omega_n^k, \left[\sqrt{\frac{1}{\max(2C_{n\star}^k(s_i, s_\star),1)}\ln\!\Big(\frac{4|\cS||\cA|N(k\!-\!1)H}{\epsilon}\Big)} \right]_{i=1}^{|\cS|}\right\rangle.
\end{align}
Hence, the proof goes as follows.
\begin{align}
&\sum_{k=1}^K\sum_{n=1}^N\left\langle  H\cdot\omega_n^k, \left[\sqrt{\frac{1}{\max(2C_{n\star}^k(s_i, s_\star),1)}\ln\!\Big(\frac{4|\cS||\cA|N(k\!-\!1)H}{\epsilon}\Big)} \right]_{i=1}^{|\cS|}\right\rangle\nonumber\allowdisplaybreaks\\
&=\sum_{k=1}^K\sum_{n=1}^N\sum_{s,a}\left\langle  H\cdot\omega_n^k(s,a), \sqrt{\frac{1}{\max(2C_{n\star}^k(s, s_\star),1)}\ln\!\Big(\frac{4|\cS||\cA|N(k\!-\!1)H}{\epsilon}\Big)} \right\rangle\nonumber\allowdisplaybreaks\\
 &\overset{(a)}{\leq} \sqrt{\ln\frac{4|\cS||\cA|NT}{\epsilon}}\sum_{t=1}^{T}\sum_{n=1}^N\sum_{(s,a)}\mathds{1}(s_n(t)=s,a_n(t)= a) \sqrt{\frac{1}{2Z_n^{t}(s,a)}}\nonumber\allowdisplaybreaks\\
   &= \sqrt{\ln\frac{4|\cS||\cA|NT}{\epsilon}}\sum_{t=1}^{T}\sum_{n=1}^N\sum_{(s,a)}\frac{\mathds{1}(s_n(t)=s,a_n(t)= a)}{ \sqrt{{2Z_n^{t}(s,a)}}}\nonumber\allowdisplaybreaks\\
  &\overset{(b)}{\leq}  2\sqrt{\ln\frac{4|\cS||\cA|NT}{\epsilon}}\sum_{n=1}^N\sum_{(s,a)}{ \sqrt{{Z_n^{T}(s,a)}}}\nonumber\allowdisplaybreaks\\
  &\overset{(c)}{\leq}  2\sqrt{\ln\frac{4|\cS||\cA|NT}{\epsilon}}\sum_{n=1}^N{ |\cS||\cA|\sqrt{\frac{\sum_{(s,a)}{Z_n^{T}(s,a)}}{|\cS||\cA|}}}~~~\text{Jensen's inequality}\nonumber\allowdisplaybreaks\\
  &\overset{(d)}{\leq}  2\sqrt{\ln\frac{4|\cS||\cA|NT}{\epsilon}}\sum_{n=1}^N{ \sqrt{{|\cS||\cA|}T}}~~~\text{due to} \sum_{s,a}Z_n^T(s,a)\leq T\nonumber\allowdisplaybreaks\\
  &=  2N\sqrt{\ln\frac{4|\cS||\cA|NT}{\epsilon}}{ \sqrt{{|\cS||\cA|}T}}\nonumber\allowdisplaybreaks\\
  &\leq  2\sqrt{2}\sqrt{\ln\frac{4|\cS||\cA|NT}{\epsilon}}{ \sqrt{N^2{|\cS|}T}}~~~\text{due to} |\cA|=2\allowdisplaybreaks \label{eq:50}
\end{align}
where (a) follows since $\omega_n^k$ is a probability measure, $H\cdot\omega_n^k(s,a)=Z_n^{k}(s,a)-Z_n^{k-1}(s,a)=\sum_{h=1}^{H}\mathds{1}(s_n(k,h)=s,a_n(k,h)=a)$, (b) is due to the fact that 
for any sequence of numbers $w_1,w_2,...,w_T$ with $0\leq w_k$, and define $W_{k}:=\sum_{i=1}^k w_i$, we have
    \begin{align*}
        \sum_{k=1}^T \frac{w_k}{\sqrt{W_{k}}} \leq 2\sqrt{2} \sqrt{W_T}.
    \end{align*}
(c) follows Jensen's inequality and the fact that $\sum_{s,a}Z_n^T(s,a)=T$, and (d) uses the fact that $|\mathcal{A}|=2$. This completes the proof.
\end{proof}

\textbf{Bound on $Term_1$.} {Combining the results in (\ref{eq:term1}) and} Lemma \ref{lem:10}, we can bound $Term_1$ as
\begin{align*}
    Term_1\leq  \max_{i,j}\frac{2}{\bF(i,j)(1-\bF(i,j))}\sqrt{2NT|\cS|\ln\frac{4|\cS||\cA|NT}{\epsilon}}.
\end{align*}

\subsection{Proof of Lemma \ref{lem:term2}}\label{sec:D3}

To characterize the regret caused by $Term_2$, we leverage similar techniques from \cite{xiong2022Nips}. We first introduce the definition of ergodicity coefficient as shown \cite{arapostathis1993discrete,akbarzadeh2022learning}, which depicts the transition property of the whole $N$ arms system. 
Denote the global state for the whole $N$-arm system as $\bold{s}\in\cS^N:=[s_1, s_2,\ldots, s_N]$, the corresponding actions under the index policy $\pi$ as $\pi(\bs)$, and the unknown MDPs as ${\Lambda}:=[\lambda_1, \lambda_2, \ldots, \lambda_N]$ with  $\lambda_n:=\{P_n, \bF_n^\star\}$. Hence, we have a transition kernel for the global system as $P_\Lambda(\cdot|\bs, \pi(\bold{s})), \forall \bold{s}\in\cS^N$. The ergodicity coefficient is then defined as follows. 
\begin{definition}[\textbf{Ergodicity coefficient}]
The ergodicity coefficient of a system with transition kernel $P_{{\Lambda}}$ is defined
\begin{align}
    D_{P_{{\Lambda}}}:=1-\min_{\bold{s}, \bold{s}^\prime} \sum_{\bold{z}\in\cS^N}\min\{P_{{\Lambda}}(\bold{z}|\bold{s},\pi^\star(\bold{s})),P_{{\Lambda}}(\bold{z}|\bold{s}^\prime,\pi^\star(\bold{s}^\prime)) \} , 
\end{align}
and $D:=\sup_{{\Lambda}}D_{P_{{\Theta}}}$ as the maximum value.
\end{definition}
Since the dynamics of the arms are independent, the
definition of contraction factor implies that a sufficient condition is that for every arm, and every pair
of state-action pairs, there exists a next state that can be reached from both state-action pairs with
positive probability in one step. Let $\pi_{{\Lambda}}$ denote the proposed   index policy corresponding to the transition model ${\Lambda}$ and $P_\Lambda$ be the controlled transition matrix under policy $\pi_{{\Lambda}}$. Denote $J_{{\Lambda}}$ as the average reward of policy $\pi_{{\Lambda}}$ and $R_{{\Lambda}}$ does not depend on the initial state and satisfy the average reward Bellman equation \citep{altman1999constrained, puterman1994markov},
\begin{align}\label{eq:bellman}
    J_{{\Lambda}}+F_{{\Lambda}}(\bs)=R(\bs, \pi_{{\Lambda}}(\bs))+[P_{{\Lambda}} F_{{\Lambda}}](s), \quad\forall \bs\in\cS^N,
\end{align}
where  $F_{\Lambda}(\bs)$ is the relative value function.
 
To bound $Term_2$, we first bound the episodic regret $J( \tilde{\pi}^k, \bF, H)-J(\pi^{k}, \bF, H)$, $\forall k$. 

\begin{lemma} \label{lemma:episodic_regret}
With probability at least $1-2\epsilon$, the regret for $J( \tilde{\pi}^k, \bF, H)-J(\pi^{k}, \bF, H)$ can be expressed as
\begin{align*}
   J( \tilde{\pi}^k, \bF, H)-J(\pi^{k}, \bF, H)\leq \left[\sum_{h=1}^{H}[P_{\tilde{{\Lambda}}}F_{\tilde{{\Lambda}}}](\bs_h)-F_{{\tilde{\Lambda}}}(\bs_{h+1})\right]+\frac{2B}{1-D},
\end{align*}
{where $\tilde{\Lambda}$ is the parameter of the optimistic MDP $\{\tilde{P}_n, \forall n\in\cN\}.$}
\end{lemma}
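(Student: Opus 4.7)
The plan is to follow the classical UCRL-style template for average-reward episodic MDPs: invoke the Poisson-style Bellman equation \eqref{eq:bellman} on the optimistic MDP $\tilde{\Lambda}$, telescope along the actually realized trajectory $\bs_1,\ldots,\bs_{H+1}$, and close with a span bound on the relative value function $F_{\tilde{\Lambda}}$ controlled by the ergodicity coefficient $D$.

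Concretely, the direct index policy $\pi^k$ is by construction the same state-action map as the optimistic policy $\tilde{\pi}_{\tilde{\Lambda}}$ delivered by the ELP \eqref{eq:ELP}, so applying \eqref{eq:bellman} at each realized global state $\bs_h$ gives $R(\bs_h, \pi^k(\bs_h)) = J_{\tilde{\Lambda}} + F_{\tilde{\Lambda}}(\bs_h) - [P_{\tilde{\Lambda}} F_{\tilde{\Lambda}}](\bs_h)$. Summing for $h=1,\ldots,H$ and using that $H J_{\tilde{\Lambda}} \geq J(\tilde{\pi}^k, \bF, H)$ --- which holds with probability at least $1-2\epsilon$ by optimism (Lemma~\ref{lemma:good_set} together with the preference overestimation of Section~\ref{sec:estimated-preference}) --- yields
\begin{align*}
J(\tilde{\pi}^k, \bF, H) - J(\pi^k, \bF, H) \leq \sum_{h=1}^H \bigl([P_{\tilde{\Lambda}} F_{\tilde{\Lambda}}](\bs_h) - F_{\tilde{\Lambda}}(\bs_h)\bigr).
\end{align*}
Adding and subtracting $F_{\tilde{\Lambda}}(\bs_{h+1})$ inside each summand splits the right-hand side into the ``model-error'' piece $\sum_{h=1}^H \bigl([P_{\tilde{\Lambda}} F_{\tilde{\Lambda}}](\bs_h) - F_{\tilde{\Lambda}}(\bs_{h+1})\bigr)$, which already matches the first term of the lemma (and is deferred for later cross-episode bounding), plus a telescoping residue $F_{\tilde{\Lambda}}(\bs_{H+1}) - F_{\tilde{\Lambda}}(\bs_1)$ whose absolute value is at most $\mathrm{span}(F_{\tilde{\Lambda}})$.

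The closing ingredient is the span inequality $\mathrm{span}(F_{\tilde{\Lambda}}) \leq 2B/(1-D)$. Per-step rewards lie in $[0,B]$ because at most $B$ arms are active and each contributes a reward in $[0,1]$, so by the standard span bound for average-reward unichain MDPs \citep{puterman1994markov,arapostathis1993discrete} the relative value function satisfies $\mathrm{span}(F_{\tilde{\Lambda}}) \leq 2 r_{\max}/(1-D_{P_{\tilde{\Lambda}}})$, and $D_{P_{\tilde{\Lambda}}} \leq D$ by the definition of $D$ as a supremum over all plausible models.

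The main obstacle is the span bound in the last step: tying $\mathrm{span}(F_{\tilde{\Lambda}})$ to the \emph{system-wide} ergodicity coefficient $D$, rather than to an individual arm's mixing rate, requires checking that the $N$-arm joint chain under the index policy $\pi^k$ is unichain and that the ergodicity coefficient of its controlled transition matrix is upper bounded by $D$. A secondary, more routine subtlety is the optimism step: one must verify that on the high-probability event where both the true transitions lie in the confidence balls (Lemma~\ref{lemma:good_set}) and the overestimated preferences dominate the true ones elementwise (Section~\ref{sec:estimated-preference}), $H J_{\tilde{\Lambda}}$ legitimately upper-bounds $J(\tilde{\pi}^k,\bF,H)$ so that the inequality in the display above is valid.
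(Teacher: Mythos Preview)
Your proposal is correct and follows essentially the same route as the paper: apply the Bellman equation \eqref{eq:bellman} for the optimistic model $\tilde{\Lambda}$ along the realized trajectory, cancel the reward terms using $\pi^k(\bs_h)=\tilde{\pi}^k(\bs_h)$, telescope to isolate $F_{\tilde{\Lambda}}(\bs_{H+1})-F_{\tilde{\Lambda}}(\bs_1)$, and bound that residue by $\mathrm{span}(F_{\tilde{\Lambda}})\le 2B/(1-D)$. The only cosmetic differences are that the paper writes the first line as the equality $J(\tilde{\pi}^k,\bF,H)=H\tilde{J}_{\Lambda}$ (treating $J(\tilde{\pi}^k,\bF,H)$ definitionally as $H$ times the optimistic average reward) rather than invoking optimism as an inequality, and that the paper delegates the span bound to Lemma~8 of \cite{xiong2022Nips} rather than spelling out the $r_{\max}/(1-D)$ argument you give.
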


\begin{proof}
The proof goes as follows:
\begin{align*}
     J( \tilde{\pi}^k, \bF, H)-J(\pi^{k}, \bF, H)
    &=H\tilde{J}_{\Lambda}-\sum_{h=1}^{H}R(\bs_h,\pi^k(\bs_h))\\
    &\overset{(a)}{=}\sum_{h=1}^{H}R(\bs_h,\tilde{\pi}^k(\bs_h))+\sum_{h=1}^{H}[P_{\tilde{{\Lambda}}}F_{\tilde{{\Lambda}}}](\bs_h)-F_{{\tilde{\Lambda}}}(\bs_{h})-\sum_{h=1}^{H}R(\bs(h),\pi^k(\bs_h))\displaybreak[3]\\
    &\overset{(b)}{=}\sum_{h=1}^{H}[P_{\tilde{{\Lambda}}}F_{\tilde{{\Lambda}}}](\bs_h)-F_{{\tilde{\Lambda}}}(\bs_{h})\displaybreak[3]\\
    &{=}\sum_{h=1}^{H}[P_{\tilde{{\Lambda}}}F_{\tilde{{\Lambda}}}](\bs_h)-F_{{\tilde{\Lambda}}}(\bs_{h+1})+F_{{\tilde{\Lambda}}}(\bs_{h+1})-F_{{\tilde{\Lambda}}}(\bs_{h}) \displaybreak[3]\\
    &= \sum_{h=1}^{H}[P_{\tilde{{\Lambda}}}F_{\tilde{{\Lambda}}}](\bs_h)-F_{{\tilde{\Lambda}}}(\bs_{h+1}) +F_{{\tilde{\Lambda}}}(\bs_{H+1})-F_{{\tilde{\Lambda}}}(\bs_{1})\\
    &\overset{(c)}{\leq}\sum_{h=1}^{H}[P_{\tilde{{\lambda}}}F_{\tilde{{\Lambda}}}](\bs_h)-F_{{\tilde{\Lambda}}}(\bs_{t+1})+\frac{2B}{1-D} .
\end{align*}
 The equality in (a) directly follows the Bellman equation in \eqref{eq:bellman}. (b) is due to the fact that $\sum_{h=1}^{H}R(\bs_h,\tilde{\pi}^k(\bs_h))=\sum_{h=1}^{H}R(\bs_h,{\pi}^k(\bs_h))$, and (c) follows from Lemma 8 in \cite{xiong2022Nips}.  
 \end{proof}

Then, we bound $J( \{\tilde{\pi}^k, \forall k\}, \bF, T)-J(\{\pi^{k}, \forall k\}, \bF, T)$ as follows.
\begin{lemma}\label{lemma:regret-good-events}
With probability at least $1-2\epsilon$,  we have the regret bounded conditioned on the good event that the $\tilde{P}^k_n\in\mathcal{P}^k_n$, as 
\begin{align*}
    J( \{\tilde{\pi}^k, \forall k\}, \bF, T)-J(\{\pi^{k}, \forall k\}, \bF, T)\leq \frac{2\sqrt{2}B}{1-D}\sqrt{\log\frac{4|\cS||\cA|NT}{\eta}}\cdot \sqrt{NT}.
\end{align*}
\end{lemma}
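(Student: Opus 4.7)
The plan is to sum the per-episode bound from Lemma \ref{lemma:episodic_regret} across all $K$ episodes, and then reduce the resulting Bellman-error sum to the transition confidence quantities from Section \ref{sec:estimated-transition}.

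First I would apply Lemma \ref{lemma:episodic_regret} in every episode and union-bound to obtain, with probability $1-2\epsilon$,
\begin{align*}
J(\{\tilde\pi^k\},\bF,T) - J(\{\pi^k\},\bF,T) \leq \sum_{k=1}^K\sum_{h=1}^H\bigl([P_{\tilde\Lambda^k}F_{\tilde\Lambda^k}](\bs_h^k) - F_{\tilde\Lambda^k}(\bs_{h+1}^k)\bigr) + \frac{2BK}{1-D}.
\end{align*}
Inside the double sum I would add and subtract $[P_\Lambda F_{\tilde\Lambda^k}](\bs_h^k)$, splitting it into a \emph{confidence-bias} term $\sum_{k,h}[(P_{\tilde\Lambda^k}-P_\Lambda)F_{\tilde\Lambda^k}](\bs_h^k)$ and a \emph{martingale-difference} term $\sum_{k,h}\bigl[P_\Lambda F_{\tilde\Lambda^k}(\bs_h^k) - F_{\tilde\Lambda^k}(\bs_{h+1}^k)\bigr]$ with respect to the trajectory filtration.

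The martingale piece has increments bounded by $\|F_{\tilde\Lambda^k}\|_{\mathrm{span}} \leq 2B/(1-D)$ (the same ergodicity bound invoked inside the proof of Lemma \ref{lemma:episodic_regret}), so Azuma--Hoeffding delivers a $\tilde{O}\bigl(\tfrac{B}{1-D}\sqrt{T\log(1/\epsilon)}\bigr)$ contribution. For the confidence-bias piece, the independence of the arm dynamics lets me factor $P_\Lambda$ arm-by-arm and telescope one arm at a time, bounding each summand by $\tfrac{2B}{1-D}\sum_n \delta_n^k(S_n^{k,h},A_n^{k,h})$, where $\delta_n^k$ is the Hoeffding width from (\ref{eq:confidence_ball}) and the inequality is valid on the good event of Lemma \ref{lemma:good_set}. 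Feeding these widths into the same inverse-square-root telescoping trick used at (\ref{eq:50}) in the proof of Lemma \ref{lem:10} then produces the advertised rate of $\tfrac{2\sqrt{2}B}{1-D}\sqrt{NT\log(4|\cS||\cA|NT/\epsilon)}$.

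The main obstacle I expect is justifying the uniform span bound $\|F_{\tilde\Lambda^k}\|_{\mathrm{span}} \leq 2B/(1-D)$ across every plausible transition kernel in the confidence sets $\mathcal{P}_n^k$; this forces the ergodicity coefficient $D$ to be well-defined and $<1$ not only under the true dynamics but uniformly across the optimistic MDPs that \DOPL considers. A secondary subtlety is that the residual $2BK/(1-D)$ term from Step 1 is linear in the number of episodes, so matching the $\sqrt{T}$ rate hinges on choosing the episode length $H$ (and hence $K=T/H$) so that this piece is absorbed into the Hoeffding contribution rather than dominating it.
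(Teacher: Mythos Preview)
Your plan is essentially the paper's: sum Lemma~\ref{lemma:episodic_regret} over episodes, control the Bellman residual by $\tfrac12\,\mathrm{span}(F_{\tilde\Lambda^k})\|P_{\tilde\Lambda^k}-P_\Lambda\|_1$, use $\mathrm{span}(F_{\tilde\Lambda^k})\le 2B/(1-D)$, decompose the product kernel arm-by-arm into $\sum_n 2\delta_n^k$, and telescope via the same pigeonhole computation as~(\ref{eq:50}). Your explicit martingale/bias split with Azuma on the $P_\Lambda$-piece is in fact more careful than the paper's step~(a), which asserts the sum has zero mean and then passes to absolute values without separating the two pieces; your version is the rigorous way to justify that step.

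On your two flagged obstacles: the first is resolved by the paper's definition $D:=\sup_\Lambda D_{P_\Lambda}$ in Section~\ref{sec:D3}, so the span bound $2B/(1-D)$ is uniform over all plausible kernels by construction and you can invoke it for every $\tilde\Lambda^k$. The second is genuine and the paper does not cleanly dispose of it either---the proof simply leaves $\tfrac{2BK}{1-D}$ on the left-hand side and the stated $\sqrt{T}$ bound only follows once $K=T/H=\tilde O(\sqrt{T})$, which is never made explicit; your observation that this forces a particular choice of $H$ is correct.
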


\begin{proof}
From Lemma~\ref{lemma:episodic_regret}, we can rewrite the summation over $J( \tilde{\pi}^k, \bF, H)-J(\pi^{k}, \bF, H), \forall k$ as follows:
\begin{align*}
J( \{\tilde{\pi}^k, \forall k\}, \bF, T)&-J(\{\pi^{k}, \forall k\}, \bF, T)-\frac{2BK}{1-D}\nonumber\allowdisplaybreaks\\
&\leq \sum_{k=1}^K\sum_{h=1}^{H}[P_{\tilde{{\Lambda}}^k}F_{\tilde{{\Lambda}}^k}](\bs_h)-F_{{\tilde{\Lambda}}^k}(\bs_{h+1})\displaybreak[0]\\
&\overset{(a)}{\leq}\sum_{k=1}^K\sum_{h=1}^{H}\Big|[P_{\tilde{{\Lambda}}^k}F_{\tilde{{\Lambda}}^k}](\bs_h)-F_{{{\Lambda}}^k}(\bs_{h+1})\Big|\displaybreak[1]\\
&\overset{(b)}{\leq} \sum_{k=1}^K\sum_{h=1}^{H}\frac{1}{2}\text{span}(F_{\tilde{\Lambda}^k})\Big\|P_{\tilde{\Lambda}^k}(\cdot|\bs_h,\bold{a}_h)-P_{{{\Lambda}^k}}(\cdot|\bs_{h},\bold{a}_h)\Big\|_1\displaybreak[2]\\
&\overset{(c)}{\leq} \frac{B}{1-D}\sum_{k=1}^K\sum_{h=1}^{H}\sum_{n=1}^N 2\delta_n^k(s,a)\displaybreak[3]\\
&\leq \frac{2B}{1-D}\sum_{t=1}^{T}\sum_{n=1}^N \sqrt{\frac{1}{2Z_n^{t}(s,a)}\log\frac{4|\cS||\cA|NT}{\eta}}\allowdisplaybreaks\\
&{\leq} \frac{2B}{1-D}\sqrt{\log\frac{4|\cS||\cA|NT}{\eta}}\sum_{t=1}^{T}\sum_{n=1}^N\sum_{(s^\prime,a^\prime)}\mathds{1}(s_n(t)=s^\prime, a^\prime) \sqrt{\frac{1}{2Z_n^{t}(s^\prime,a^\prime)}}\allowdisplaybreaks\\
&\overset{(d)}{\leq} \frac{4\sqrt{2}B}{1-D}\sqrt{\log\frac{4|\cS||\cA|NT}{\eta}}\cdot \sqrt{N^2|\cS|T},
\end{align*}
where (a) follows since $\mathbb{E}\left[\sum_{k=1}^K\sum_{h=1}^{H}[P_{\tilde{{\Lambda}}^k}F_{\tilde{{\Lambda}}^k}](\bs_h)-F_{{\tilde{\Lambda}}^k}(\bs_{h+1})\right]=0,$ (b) follows standard linear algebra manipulation \citep{akbarzadeh2022learning}, (c) follows the same reason as \eqref{eq:50}.
\end{proof}

\section{Experiment Details}\label{sec:exp_details}
The code for these experiments is available at \href{https://anonymous.4open.science/r/dopo-04C5}{this link}. Some of the experiments presented in this paper were run on an M1 Macbook Air and some on a compute cluster with Dual AMD EPYC 7443 with 48 cores and 256GB RAM.

\subsection{App Marketing Environment}\label{sec:App_marketing_appendix}

\subsubsection{Overview}
For a verbose explanation of the setup refer to Appendix ~\ref{sec:App_marketing_description}. Each user is modeled as a restless arm with 4 possible states. Our simulation was carried out   with 10 arms in the system and an arm pull budget of 4 arms per timestep.

\subsubsection{System Dynamics}

\textbf{Transition probabilities}:

The transition dynamics are such that without marketing intervention, users tend to decrease in engagement over time. Even the most engaged users only  have a 50\% chance of staying as engaged. On the other hand, if intervened upon, there's a high probability that the user increases in engagement by two levels.

\begin{itemize}
    \item \textbf{Without intervention:}
    \[
P_{\text{no\_int}} = \begin{bmatrix}
0.7 & 0.1 & 0.1 & 0.1 \\
0.5 & 0.3 & 0.1 & 0.1 \\
0.2 & 0.4 & 0.3 & 0.1 \\
0.1 & 0.2 & 0.2 & 0.5 \\
\end{bmatrix}
\]
    \item \textbf{With intervention:}
    \[
P_{\text{int}} = \begin{bmatrix}
0.1 & 0.1 & 0.7 & 0.1 \\
0.1 & 0.1 & 0.1 & 0.7 \\
0.1 & 0.1 & 0.1 & 0.7 \\
0.05 & 0.05 & 0.05 & 0.85 \\
\end{bmatrix}
\]
\end{itemize}

\textbf{Rewards}:

The latent reward is structured as follows. Higher the engagement level higher the reward.
\[
R = \begin{bmatrix}
0 \\
0.33 \\
0.66 \\
1 \\
\end{bmatrix}
\]
\subsubsection{Hyperparameters}
Below we detail the hyperparameters used during the training.

\begin{table}[h!]
\centering
\begin{tabular}{|l|c|}
\hline
\textbf{Hyperparameter} & \textbf{Value} \\ \hline
K (Number of Epochs) & 4000 \\ \hline
H (Horizon) & 100 \\ \hline
$\epsilon$ (Epsilon) & $1 \times 10^{-5}$ \\ \hline
\end{tabular}
\caption{Hyperparameters used in App Marketing Environment Training.}
\label{tab:hyperparameters_cpap}
\end{table}

\subsection{CPAP Environment}\label{sec:CPAP_appendix}
\subsubsection{Overview}
For a detailed description of the environment refer Appendix~\ref{sec:CPAP_description}. The environment models each patient undergoing sleep apnea treatment as a restless arm. The patient can be in three possible states based on his level of adherence to the treatment.

Our simulations consist of 20 patients (arms) only 8 of whom can be intervened with by the doctor at any given time (arm budget).

\subsubsection{System Dynamics}

\textbf{Transition probabilities}:

 Patients can be classified into two clusters based on their transition behaviours.
 (i) \textit{General patients}  and  (ii)\textit{High-risk patients}

\paragraph{General Patients (Arm type 1)}
These patients are very responsive to intervention and also adhere to their treatment fairly well even without intervention as seen in the transition probabilities below.
\begin{itemize}
    \item \textbf{Without intervention:}
    \[
    P_{\text{no\_int}} = \begin{bmatrix}
    0.1385 & 0.1 & 0.7615 \\
    0.1 & 0.1 & 0.8 \\
    0.1257 & 0.1245 & 0.7498 \\
    \end{bmatrix}
    \]
    \item \textbf{With intervention:}
    \[
    P_{\text{int}} = \begin{bmatrix}
    0.1 & 0.1 & 0.8 \\
    0.1 & 0.1 & 0.8 \\
    0.1 & 0.1 & 0.8 \\
    \end{bmatrix}
    \]
\end{itemize}

\paragraph{High-Risk Patients (Arm type 2)}
These patients need constant intervention in order for them to adhere to their treatments.
\begin{itemize}
    \item \textbf{Without intervention:}
    \[
    P_{\text{no\_int}} = \begin{bmatrix}
    0.7427 & 0.0741 & 0.1832 \\
    0.3399 & 0.1634 & 0.4967 \\
    0.2323 & 0.1020 & 0.6657 \\
    \end{bmatrix}
    \]
    \item \textbf{With intervention:}
    \[
    P_{\text{int}} = \begin{bmatrix}
    0.1427 & 0.3741 & 0.4832 \\
    0.1399 & 0.1 & 0.7601 \\
    0.1323 & 0.1 & 0.7677 \\
    \end{bmatrix}
    \]
\end{itemize}

\textbf{Rewards}:

The latent reward is as follows, with higher reward for more adherence.
\[
R = \begin{bmatrix}
0 \\
0.5 \\
1 \\
\end{bmatrix}
\]

\subsubsection{Hyperparameters}
Below we detail the hyperparameters used during the training.

\begin{table}[h!]
\centering
\begin{tabular}{|l|c|}
\hline
\textbf{Hyperparameter} & \textbf{Value} \\ \hline
K (Number of Epochs) & 300 \\ \hline
H (Horizon) & 1000 \\ \hline
$\epsilon$ (Epsilon) & $1 \times 10^{-5}$ \\ \hline
\end{tabular}
\caption{Hyperparameters used in Constructed Environment Training.}
\label{tab:hyperparameters_cpap}
\end{table}

\subsubsection{Evolution of Estimation Errors Throughout Training}

We monitor and visualize several key errors during the training process to ensure its integrity and effectiveness.

\paragraph{index\_error}
This metric quantifies the disparity between the current index estimated by the (\DOPL) algorithm and the optimal index derived from solving the optimization problem (\ref{eq:original_LP}).

\paragraph{F\_error:}
This denotes the discrepancy in preference estimation, measured as the root mean squared error between the estimated preference matrix $\hat{\bF}^k$ at iteration $k$ and the ground truth preference matrix $\bF$.

\paragraph{P\_error:}
This error metric captures the deviation in transition kernel estimate $\hat{P}_{n}^k$ from the true kernel $P_n$.

\paragraph{R\_error:}
This is the aggregate direct reward estimation error across all states and arms,
i.e between $Q_n(s)$ and $\tilde{Q}_n(s)$, where $Q_n(s):=\ln\frac{\bF_n^\star(\I_s, \I_{s_\star})}{1-\bF_n^\star(\I_s, \I_{s_\star})}$ and $\tilde{Q}_n(s):=\ln\frac{\tilde{\bF}_n^\star(\I_s, \I_{s_\star})}{1-\tilde{\bF}_n^\star(\I_s, \I_{s_\star})}$ where $s_\star$ is the reference state and $\star$ the reference arm and $\tilde{\bF}$ is the estimated preference matrix with the upper confidence term added to it.

\begin{figure}[ht]
  \centering
  \includegraphics[width=\linewidth]{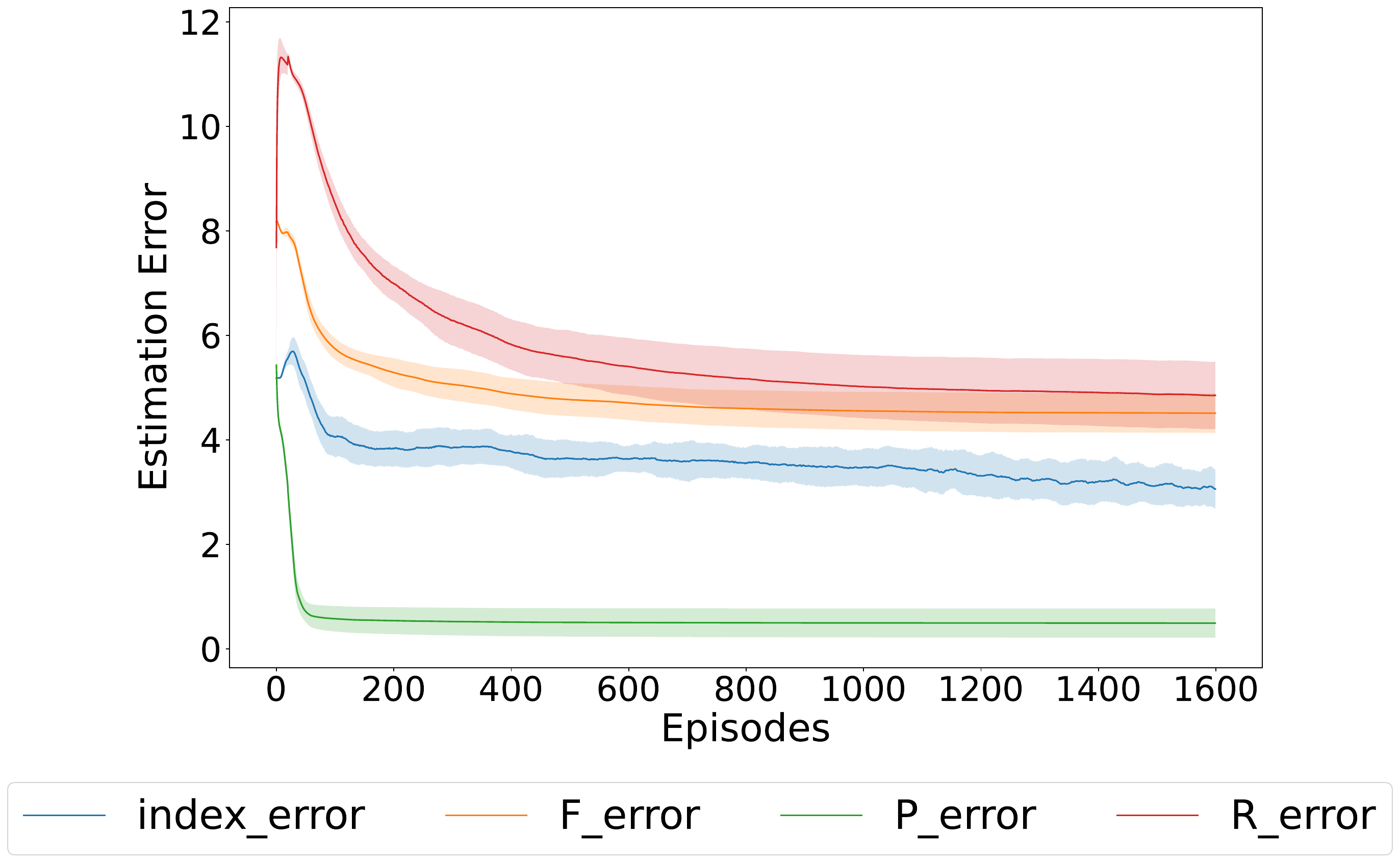}
  \caption{DOPL Estimation Errors During CPAP Training}
\end{figure}

\subsection{Armman Environment}\label{sec:Armman_appendix}
\subsubsection{Overview}
For a detailed description refer to Appendix~\ref{sec:Armman_description}. There are a total of 20 beneficiaries in this simulation each modeled by an MDP. These beneficiaries are of three types. Type A beneficiaries who are responsive to the treatment but need a little push. Type B beneficiaries are moderately responsive and Type C are less responsive to treatment. We ensure that a 1:1:3 ratio is maintained among the beneficiary types. The budget constraint allows intervention with a maximum of 10 beneficiaries at a given timestep.

\subsubsection{System Dynamics}

The dynamics of the system are such that each beneficiary type is associated with a transition matrix defined by specific probability ranges. For each beneficiary, we randomly sample transition probabilities within these ranges according to their type, ensuring that the resulting transition matrix is valid—that is, all probabilities are non-negative and each row sums to one.

\paragraph{Type A beneficiaries}
\textbf{Transition Matrices:}
\begin{itemize}
    \item \textbf{Action 0 (No intervention):}
    \[
    P = \begin{bmatrix}
    0.5-0.95 & 0-0.90 & 0.05 \\
    0.05 & 0-0.5 & 0.45-0.95 \\
    0.05 & 0.1-0.6 & 0.35-0.85\\
    \end{bmatrix}
    \]
    \item \textbf{Action 1 (Intervention):}
    \[
    P = \begin{bmatrix}
    0.5-0.95 & 0-0.90 & 0.05 \\
    0.45-0.95 & 0-0.5 & 0.05 \\
    0.05 & 0.1-0.6 & 0.35-0.85\\
    \end{bmatrix}
    \]
\end{itemize}

\paragraph{Type B beneficiaries}
\textbf{Transition Matrices:}
\begin{itemize}
    \item \textbf{Action 0 (No intervention):}
    \[
    P = \begin{bmatrix}
    0.5-0.95 & 0-0.90 & 0.05 \\
    0.05 & 0.1-0.6 & 0.35-0.85 \\
    0.05 & 0.1-0.6 & 0.35-0.85\\
    \end{bmatrix}
    \]
    \item \textbf{Action 1 (Intervention):}
    \[
    P = \begin{bmatrix}
    0.5-0.95 & 0-0.90 & 0.05 \\
    0.15-0.65 & 0.3-0.8 & 0.05 \\
    0.05 & 0.1-0.6 & 0.35-0.85\\
    \end{bmatrix}
    \]
\end{itemize}

\paragraph{Type C beneficiaries}
\textbf{Transition Matrices:}
\begin{itemize}
    \item \textbf{Action 0 (No intervention):}
    \[
    P = \begin{bmatrix}
    0.5-0.95 & 0-0.90 & 0.05 \\
    0.05 & 0.1-0.6 & 0.35-0.85 \\
    0.05 & 0.1-0.6 & 0.35-0.85\\
    \end{bmatrix}
    \]
    \item \textbf{Action 1 (Intervention):}
    \[
    P = \begin{bmatrix}
    0.5-0.95 & 0-0.90 & 0.05 \\
    0.05-0.50 & 0.45-0.90 & 0.05 \\
    0.05 & 0.1-0.6 & 0.35-0.85\\
    \end{bmatrix}
    \]
\end{itemize}

\textbf{Rewards}:
\[
R = \begin{bmatrix}
1 \\
0.5 \\
0 \\
\end{bmatrix}
\]

\subsubsection{Hyperparameters}
Below we detail the hyperparameters used during the training.

\begin{table}[h!]
\centering
\begin{tabular}{|l|c|}
\hline
\textbf{Hyperparameter} & \textbf{Value} \\ \hline
K (Number of Epochs) & 20000 \\ \hline
H (Horizon) & 5 \\ \hline
$\epsilon$ (Epsilon) & $1 \times 10^{-5}$ \\ \hline
\end{tabular}
\caption{Hyperparameters used in ARMMAN Environment Training.}
\label{tab:hyperparameters_cpap}
\end{table}

\end{document}